\documentclass[11pt,reqno]{article}
\usepackage{natbib}
\usepackage{fullpage,graphicx,xcolor}
\usepackage[colorlinks,citecolor=blue,urlcolor=cyan, linkcolor=magenta, breaklinks]{hyperref}

\usepackage{amssymb}
\usepackage{amsthm}
\usepackage{amsmath}
\usepackage{mathtools}
\mathtoolsset{showonlyrefs}
\usepackage{xparse}
\usepackage{booktabs}
\usepackage{pgf}
\usepackage{authblk}

\usepackage[utf8]{inputenc}
\usepackage[T1]{fontenc}
\usepackage{newtxtext}
\usepackage{booktabs}       
\usepackage{microtype}      

\usepackage{etoolbox}
\usepackage[enable]{easy-todo}

\usepackage{graphicx} 
\usepackage{float}
\usepackage{subcaption}
\usepackage{multirow}

\usepackage{enumitem}
\usepackage[normalem]{ulem} 

\usepackage{algorithm}

\usepackage{algpseudocode}

\oddsidemargin 0in \evensidemargin 0in \textwidth 6.5in \topmargin -5ex \textheight 9.2in
\let \oldsection \section
\renewcommand{\section}{\vspace{3ex plus 1ex}\oldsection}

\usepackage{empheq}
\usepackage{enumitem}
\usepackage{bbm}
\usepackage{relsize}
\usepackage{dsfont}


\newtheorem{theorem}{Theorem}[section]

\newtheorem{problem}[theorem]{Problem}

\newtheorem{lemma}[theorem]{Lemma}

\newtheorem{corollary}[theorem]{Corollary}
\newtheorem{assumption}[theorem]{Assumption}


\newcommand{\N}{\mathbb{N}}

\newcommand{\R}{\mathbb{R}}

\usepackage{xspace}

\newcommand\cK{{\ensuremath{\mathcal{K}}}\xspace}

\newcommand\cP{{\ensuremath{\mathcal{P}}}\xspace}

\newcommand\cS{{\ensuremath{\mathcal{S}}}\xspace}


\newcommand{\BEAS}{\begin{eqnarray*}}
\newcommand{\EEAS}{\end{eqnarray*}}
\newcommand{\BEA}{\begin{eqnarray}}
\newcommand{\EEA}{\end{eqnarray}}
\newcommand{\BEQ}{\begin{equation}}
\newcommand{\EEQ}{\end{equation}}
\newcommand{\BIT}{\begin{itemize}}
\newcommand{\EIT}{\end{itemize}}
\newcommand{\BNUM}{\begin{enumerate}}
\newcommand{\ENUM}{\end{enumerate}}

\newcommand{\BA}{\begin{array}}
\newcommand{\EA}{\end{array}}






\newcommand{\supp}{\mbox{\textrm{supp}}}





\newcommand{\argmin}{\mathop{\rm argmin}}

\newcommand{\argmax}{\mathop{\rm argmax}}




\title{Efficient Online-Bandit Strategies for Minimax Learning Problems}
\author[1,2]{Christophe Roux}
\author[1,2]{Elias Wirth}
\author[1,2]{Sebastian Pokutta}
\author[1,2]{Thomas Kerdreux}
\affil[1]{Technische Universit{\"a}t Berlin, Germany}
\affil[2]{Zuse Institute, Berlin, Germany}

\date{\today}

\begin{document}
\maketitle

\begin{abstract}
\noindent Several learning problems involve solving min-max problems, \textit{e.g.}, empirical distributional robust learning \citep{namkoong2016stochastic,curi2020adaptive} or learning with non-standard aggregated losses \citep{shalev2016minimizing,fan2017learning}.
More specifically, these problems are convex-linear problems where the minimization is carried out over the model parameters $w\in\mathcal{W}$ and the maximization over the empirical distribution $p\in\mathcal{K}$ of the training set indexes, where $\mathcal{K}$ is the simplex or a subset of it.
To design efficient methods, we let an online learning algorithm play against a (combinatorial) bandit algorithm.
We argue that the efficiency of such approaches critically depends on the structure of $\mathcal{K}$ and propose two properties of $\mathcal{K}$ that facilitate designing efficient algorithms.
We focus on a specific family of sets $\mathcal{S}_{n,k}$ encompassing various learning applications and provide high-probability convergence guarantees to the minimax values.
\end{abstract}

\section{Introduction}
Let $\mathcal{D}$ be a data set of $n$ i.i.d. samples from an unknown joint distribution $\mu$ with elements $(x_i,y_i)\in\mathcal{X}\times\mathcal{Y}$. We assume $\mathcal{X}\subset\mathbb{R}^d$ with $d$ and $n$ potentially large.
Further, consider a parametric family of models $(h_{w})_{w\in\mathcal{W}}$ and a loss function $\ell:\mathcal{W}\times\mathcal{X}\times\mathcal{Y}\rightarrow [0,1]$ measuring the difference between $y_i$ and the model's prediction $h_{w}(x_i)$.
Several learning problems require solving the following minimax problem
\begin{equation}\label{eq:min_max_learning}\tag{OPT}
\underset{w\in\mathcal{W}}{\text{min }} ~ \underset{p\in\mathcal{K}}{\text{max }} \langle L(w); p\rangle,
\end{equation}
where $L(w)=(\ell(h_{w}(x_1),y_1), \ldots, \ell(h_{w}(x_n),y_n))^T$ and $\mathcal{K}$ is a subset of the $n$-dimensional probability simplex $\mathcal{S}_n$.
In this paper, we design iterative methods to solve \eqref{eq:min_max_learning} that adaptively sample a mini-batch of $\mathcal{D}$ at each iteration.
For convex losses $L(\cdot)$, we provide high-probability convergence results to the optimal value of \eqref{eq:min_max_learning} for the family of sets $\mathcal{S}_{n,k}$ \eqref{eq:top_k_constraint} in Theorem \ref{th:shalev} and Theorem \ref{th:topk}.\\\\
A classical approach to designing algorithms solving \eqref{eq:min_max_learning} relies on interpreting the solution of \eqref{eq:min_max_learning} as the \textit{Nash Equilibrium} of a zero-sum game.
An \textit{online-online} strategy for such a game consists of letting an \textit{Online Learning} (OL) algorithm that seeks to maximize $\langle L(w); p\rangle$ (the $p$-player) play against an OL algorithm that seeks to minimize $\langle L(w); p\rangle$ (the $w$-player).
In learning applications where $n$ and $d$ are large, these online-online strategies become resource-intensive.
In order to solve \eqref{eq:min_max_learning}, they require to compute the loss of the $n$ data points at each round, incurring a cost scaling at least with $\mathcal{O}(nd)$.\\\\
For sets $\mathcal{K}$ with a specific structure, it is possible to design \textit{online-bandit} algorithms that consider only a subset of the data points per iteration. In such schemes, the $p$-player is chosen as a bandit algorithm that merely has access to the losses corresponding to a subset of the data, \textit{i.e.}, partial feedback.
Using bandit algorithms in order to adaptively sample data points allows for an efficient solution to \eqref{eq:min_max_learning}.
The structure of $\mathcal{K}$ first and foremost defines the learning task but it also determines whether it is possible to design dedicated efficient bandit algorithms.
We are interested in sets $\mathcal{K}$ that allow for efficient solutions and for which \eqref{eq:min_max_learning} corresponds to meaningful learning problems.
For the family of sets defined in \eqref{eq:top_k_constraint}, we provide bandit algorithms with efficient scaling of the per iteration cost w.r.t. $n$ and high-probability regret bounds in the convex-linear case.
In Appendix \ref{app:generalizing_k_set}, we introduce the \eqref{eq:alpha-set}, another family besides the \eqref{eq:top_k_constraint} for which it is also possible to design efficient online-bandit strategies.
For $k=1,\ldots,n$, consider the following subsets of the simplex $\mathcal{S}_n$,
\begin{equation}\label{eq:top_k_constraint}\tag{$k$-Set}
    \mathcal{S}_{n,k} := \Big\{ p\in\mathbb{R}^n~|~ 0\leq p_i\leq \frac{1}{k}, \sum_{i=1}^{n} p_i = 1\Big\}.
\end{equation}
Instantiating \eqref{eq:min_max_learning} with $\mathcal{K}=\mathcal{S}_{n,k}$ leads to different learning problems for different choices of $k$.
For instance, $k=1$ corresponds to learning with the aggregated max loss \citep{shalev2016minimizing} while higher values of $k$ can be interpreted as learning with the averaged top-k loss \citep{fan2017learning} or as an empirical Distributional Robust Optimization (DRO) problem \citep{curi2020adaptive}, see Section \ref{sec:related_work}.

\paragraph{Contributions.}
\begin{enumerate}
    \item We incorporate a number of different approaches into a general framework to better understand how different structures of $\mathcal{K}$ influence the possibility to efficiently solve min-max problems with online-bandit methods.
   
    \item We provide efficient algorithms with high-probability convergence guarantees for the class of sets defined by $\mathcal{S}_{n,k}$ and perform numerical experiments which illustrate their efficiency.
    
\end{enumerate}

\paragraph{Outline.}
In Section \ref{sec:banditgame}, we describe the online-bandit approach and introduce the template Algorithm \ref{algo:online_bandit_general}.
In Section \ref{sec:shalev_case} and \ref{sec:average_top_k}, we then introduce algorithms to solve \eqref{eq:min_max_learning} when $\mathcal{K}$ corresponds to the Simplex $\mathcal{S}_n$ and the \eqref{eq:top_k_constraint} $\mathcal{S}_{n,k}$ respectively. 
In Section \ref{sec:related_work}, we review related work and cover several applications of \eqref{eq:min_max_learning} in learning.
In Section \ref{sec:numerical_experiments}, we then conduct numerical experiments comparing the previously presented online-bandit algorithms to different approaches.
Then, we draw a conclusion in Section \ref{sec:conclusion}. 
In Appendix \ref{sec:regret_game}, we discuss some intricacies of designing online-bandit algorithms.
In Appendix \ref{app:additional_algorithms}, we detail some of the subroutines needed for Algorithm \ref{algo:online_bandit_k_set}.
In Appendix \ref{app:omitted_proofs}, we detail the high-probability convergence proof of Algorithm \ref{algo:online_bandit_k_set} and Algorithm  \ref{algo:online_bandit_simplex}.
In Appendix \ref{app:generalizing_k_set}, we introduce $\mathcal{K}_\alpha\,$, another example of a set for which it is possible to design \textit{efficient} online-bandit methods to solve \eqref{eq:min_max_learning}.
Finally, in Appendix \ref{sec:params}, we detail the parameters used in Section \ref{sec:numerical_experiments}.

\paragraph{Notation.}
We write $\tilde{\mathcal{O}}$ to hide logarithmic factors. 
For the minimax problem \eqref{eq:min_max_learning}, the dual gap at point $(w^\prime,p^\prime)\in\mathcal{W}\times\mathcal{K}$ is defined as
\begin{equation}
    \Delta(w^\prime, p^\prime):= \underset{p\in\mathcal{K}}{\max}~\langle L(w^\prime); p\rangle - \underset{w\in\mathcal{W}}{\min}~\langle L(w); p^\prime\rangle.\label{eq:dual_gap}\tag{Dual Gap}
\end{equation}
Let $\mathcal{P}^*\big([n]\big)$ be the set of all (non-empty) subsets of $[n]$ and $\mathcal{P}_k([n])$ the set of subsets of $[n]$ of size $k$.
We write $\mathds{1}$ for the all-ones vector (of appropriate dimension).
For $i\in[n]$, we write $\mathds{1}_{n,i}$ for the vector in $\mathbb{R}^n$ which is $1$ at coordinate $i$ and $0$ elsewhere.
For a subset $S\subset [n]$, $1_{i\in S}$ is the indicator that equals $1$ if $i\in S$ and $0$ otherwise.
We use $I$ to denote a subset of $[n]$ and $\mathcal{S}_n$ corresponds to the simplex in $\mathbb{R}^n$.
The letter $p$ stands for a probability vector in $\mathbb{R}^n$, and we write $\tilde{p}$ when the vector is not yet normalized. 
To avoid confusion, we use $q$ for probability vectors in dimensions higher than $n$.
The probability vectors $p_t$ are indexed over time and for $i\in[n]$, $p_{t,i}$ corresponds to the $i^{th}$ coordinate of $p_t$.
Similarly, $p_{t, I}$ corresponds to the coordinates $i \in I$ of $p_t$.
We write $a_t$ to denote the action of the bandit ($p$-player) at round $t$. 
For a convex compact set $\mathcal{K}$, we define by $\text{Ext}(\mathcal{K})$ the set of extreme points of $\mathcal{K}$, \textit{i.e.}, the set of points $x\in\mathcal{K}$ that cannot be expressed as a convex combination of other points in $\mathcal{K}$.
For a vector $x\in \R^n$, define its \emph{support} as $\supp(x) := \{i \in [n] \mid x_i \neq 0\}$.
For a convex differentiable function $f$, we define its Bregman divergence $D_f$ as
$D_f(x,y) := f(x) - f(y) - \langle \nabla f(y); x-y \rangle $.
We also write $\text{ReInt}(\mathcal{K})$ for its relative interior.
\section{Bandit against online learner}\label{sec:banditgame}
OL is a sequential learning framework phrased in terms of a two-player game between a learner and an adversary.
At each round $t$, the learner chooses an action $a_t\in\mathcal{K}$ and the adversary simultaneously picks a loss function $f_t$. 
The choices $f_t$ of an $adaptive$ adversary depend on the learner's previous actions $a_1, \ldots, a_{t-1}$, while and $oblivious$ adversary has to fix its whole series $f_1, \ldots, f_T$ beforehand.
The learner suffers the loss $f_t(a_t)$ and observes the full function $f_t(\cdot)$ to update its strategy in the future.
The goal of the learner is to minimize the average loss incurred, and its performance is measured by the \textit{regret},
\begin{equation}\label{eq:regret}\tag{Regret}
    R_T := \sum_{t=1}^{T} f_t(a_t) - \underset{a\in\mathcal{K}}{\text{min }} \sum_{t=1}^{T} f_t(a).
\end{equation}
Interpreting a convex-concave min-max problem as a zero-sum game between two OL algorithms with no-regret guarantees, one can prove that their time-averaged strategies converge towards an approximate saddle point, also known in this context as an approximate Nash-Equilibrium (see \textit{e.g.} \citep{wang2018acceleration}).
In order to apply this strategy to \eqref{eq:min_max_learning} one has to consider every data point at each round.
While the $p$-player optimizes a linear function on the indices of the data (memory cost $\mathcal{O}(n)$), the $w$-player has to operate on the data itself (memory cost $\mathcal{O}(nd)$). 
For large $n$ and $d$, this means that maintaining the whole dataset in memory is not feasible.
Hence the main concern is the per-iteration memory cost of the $w$-player. 
In order to make the OL approach feasible for large-scale problems, it would be preferable to evaluate only a small subset of the $n$ data points at each iteration.
This can be achieved by replacing the OL-algorithm for the $p$-player with an algorithm that works with partial information.
That way, the $w$-player is not required to evaluate the whole dataset at each iteration which alleviates the memory issues.
\begin{figure}[h]
    \centering
    \includegraphics[width=0.32\linewidth]{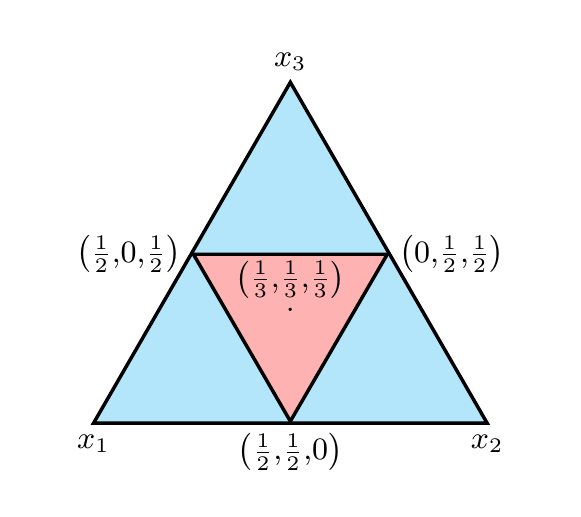}
    \includegraphics[width=0.32\linewidth]{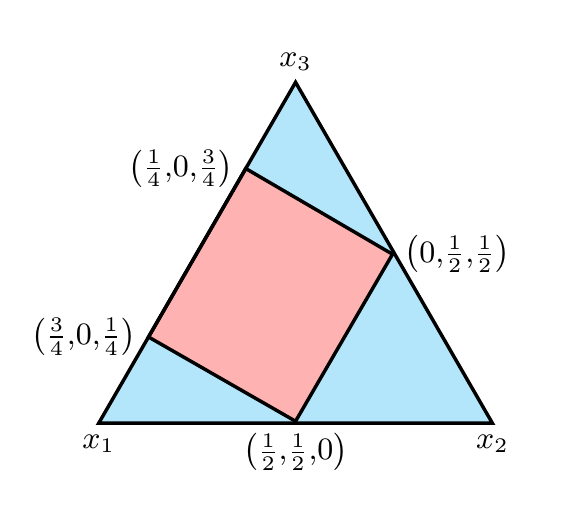}
    \includegraphics[width=0.32\linewidth]{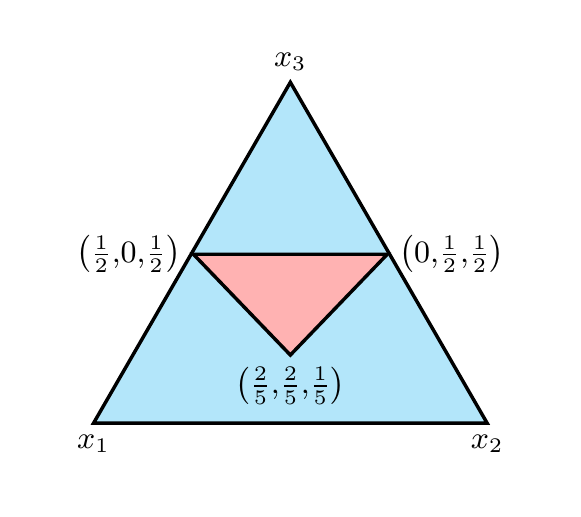}
    \caption{Three different subsets (red) of the 3-simplex $\mathcal{S}_3$ (blue). In the leftmost simplex, the red region is the \eqref{eq:top_k_constraint} $\mathcal{S}_{3,2}$ ($\mathcal{S}_{n,k}$ with $n=3$ and $k=2$). The central point corresponds to the uniform distribution or, equivalently, to $\mathcal{S}_{3,3}$.}
    \label{fig:example_k_set}
\end{figure}
\paragraph{Bandit feedback.}
The case where the learners feedback is limited to the output of $f_t(a_t)$ instead of the full function $f_t(\cdot)$ is called \textit{bandit feedback} \citep{bubeck2012regret}.
Bandit algorithms usually make up for the missing information about $f_t(\cdot)$ by constructing a statistical estimate $\hat{f}_t(\cdot)$.
Bandit algorithms rely on randomized strategies in order to achieve sublinear regret \citep{bubeck2012regret}.
Therefore regret bounds for such algorithms are usually given with high probability or in expectation, \textit{i.e.}
\begin{equation}\tag{Expected Regret}\label{eq:expected_regret}
\mathbb{E}[R_T] := \mathop{\mathbb{E}}\left[\sum_{t=1}^{T} f_t(a_t) - \underset{a\in\mathcal{K}}{\text{min }} \sum_{t=1}^{T} f_t(a)\right].
\end{equation}

In the context of \eqref{eq:min_max_learning}, the adversary of the bandit algorithm is an OL algorithm that adapts to the actions of the bandit.
This means that guaranteeing convergence requires an upper bound on the $p$-player's regret that hold against an \textit{adaptive} adversary.
These bounds can either be expressed in high-probability or in expectation and are much more challenging to obtain than pseudo-regret bounds which give meaningful guarantees only for \textit{oblivious} adversaries  \citep{abernethy2009beating,bubeck2012regret} (see Appendix \ref{sec:regret_game} for a more thorough explanation of this issue).
Regret guarantees for adaptive adversaries notably require algorithmic enhancement to better control the variance that arises from the bandits' loss estimate $\hat{L}_t$.

While bandit algorithms make it possible to solve large-scale problems by managing the memory cost, the challenge now consists of finding efficient bandit algorithms for different choices of $\mathcal{K}$.
An efficient bandit algorithm should have both an iteration cost and regret bounds that scale well with respect to the dimension $n$.
The dimension dependence of bandit algorithms is often a key challenge. For complex $\mathcal{K}$, the computational cost can be up to $\mathcal{O}(n^4)$ per iteration \citep{combes2015combinatorial}.
A computationally efficient algorithm necessitates scalable solutions for updating and sampling from its randomized strategy as well as for the projection onto $\mathcal{K}$. 
These properties crucially depend on the structure of the set $\mathcal{K}$.
We have identified two central properties of $\mathcal{K}$ which facilitate such efficient algorithms.

\paragraph{Sparsity.}
The $p$-player in \eqref{eq:min_max_learning} tries to maximize its average gain. 
Regret compares the series of actions chosen by the player to a fixed action $a^*\in \mathcal{K}$ that maximizes $\sum_{t=1}^T\langle L(w_t);a\rangle = \langle \sum_{t=1}^T L(w_t);a\rangle$. 
Finding $a^*$ is a linear optimization problem on a convex set which means that if the problem admits an optimal solution, then there is a solution that is an extreme point of $\mathcal{K}$.
Therefore, it suffices for the player to consider  $a\in \text{Ext}(\mathcal{K})$ when searching for a solution that leads to vanishing regret.
When $\mathcal{K}=\mathcal{S}_n$, the $p$-players objective coincides with the so-called \textit{Non-Stochastic Multi-Armed-Bandit} (MAB) \citep{auer2002nonstochastic}.
Here, each extreme point of $\mathcal{K}$ corresponds to a canonical direction $e_i$. 
Hence any action $a_t$ is 1-sparse \textit{i.e.}, $a_t$ has only one non-zero entry, which corresponds to evaluating only one data point per round for \eqref{eq:min_max_learning}.
For subsets of $\mathcal{K}$ some canonical directions $e_i$ might not lie in $\mathcal{K}$ anymore, meaning that there are $a\in \text{Ext}(\mathcal{K})$ with more than one non-zero entry. 
In this case, the $w$-player has to compute the loss of more than one data point at each round.
Sets $\mathcal{K}$ with sufficiently sparse extremal points allow for the design of efficient algorithms in order to control the amount of data points evaluated at each iteration.
Consider the subsets of $\mathcal{S}_3$ (the 3-simplex) in Figure \ref{fig:example_k_set} for a simple example of this issue.
While the subset on the left ($\mathcal{S}_{3,2}$) has three extreme points that are all 2-sparse, the subset on the right has two extreme points that are 2-sparse and one extreme point that requires evaluating all data points.
This issue becomes more pronounced as $n$ increases.

\paragraph{Injectivity.}
\eqref{eq:min_max_learning} is much simpler to solve using the online-bandit method when there is at most one bandit action $a\in \text{Ext}(\mathcal{K})$ associated to a subset $I\in\mathcal{P}^*\big([n]\big)$ of the data.
We formalize this as follows
\begin{assumption}[Extremal structure of $\mathcal{K}$]\label{assum:structure_K}
For the compact convex set $\mathcal{K}\subset\mathcal{S}_n$ there exists an injective function $\mathcal{T}:\text{Ext}(\mathcal{K}) \hookrightarrow \mathcal{P}^*\big([n]\big)$.
\end{assumption}
\noindent When Assumption \ref{assum:structure_K} holds, the sampling of the bandit action corresponds to a specific way of sampling indices of $[n]$ \textit{without replacement}.
If the extreme points of $\mathcal{K}$ are additionally $k$-sparse, this corresponds to sampling $k$ indices of $[n]$ without replacement.
In the center of Figure \ref{fig:example_k_set} where both $(1/4, 0 ,3/4)$ and $(3/4,0,1/4)$ correspond to sampling the data points $x_1$ and $x_2$, this convenient equivalence between choosing an action and sampling without replacement does not hold anymore.
In Appendix \ref{app:generalizing_k_set}, we describe another family of sets, for which Assumption \ref{assum:structure_K} does not apply, but each action can still be associated to a sample \textit{with replacement} of $k$ indices of $[n]$.
\paragraph{Description of Algorithm \ref{algo:online_bandit_general}.} 
Let us now describe Algorithm \ref{algo:online_bandit_general}, which has been instantiated before for some specific sets $\mathcal{K}$, \textit{e.g.}, FOL \citep{shalev2016minimizing} or Ada-CVaR \citep{curi2020adaptive}. 
First, two online learning algorithms are chosen, $w\text{-OL}_{\mathcal{W}}$ for the $w$-player and $p\text{-OL}_{\mathcal{K}}$ as a subroutine for the bandit, \textit{i.e.},the $p$-player. 
At each round, the bandit samples an action $a_t\in\text{Ext}(\mathcal{K})$ (Line \ref{line:sampling_general}).
This procedure depends on a distribution $p_t\in\mathcal{K}$ on the data-points. 
In Line \ref{line:L_estimator}, the bandit builds an estimate $\hat{L}_{t}$ of the vector $L_{I_t}$ which the $p\text{-OL}_{\mathcal{K}}$ online learning algorithm uses to update $p_{t+1}\in\mathcal{K}$. 
Simultaneously, the $w$-player observes $L_{I_t}$ and updates $w\in\mathcal{W}$.
Regret can equivalently be described in terms of gain and loss. 
In the online-bandit strategy, at each round, the $w$-player chooses $w_t$ and incurs the loss $\langle L(w_t); a_t\rangle = \frac{1}{|I_t|}\sum_{i\in I_t} \ell(h_{w_t}(x_i),y_i)$, while the bandit plays $a_t$ and gains $\langle L(w_t); a_t\rangle$. 
In the following, we will refer to $\langle L(w_t); a_t\rangle$ as the loss, whether it refers to the feedback of the $p$-player or the $w$-player.
Note that using a MAB algorithm could also be seen as using randomized coordinate descent for the $w$-player.
A significant difference is that the sampling distribution is not fixed a priori but is adapted by the bandit as the game is played.

\begin{algorithm}[h]
  \caption{Online-Bandit for \eqref{eq:min_max_learning}}
  \label{algo:online_bandit_general}
  \begin{algorithmic}[1]
  \State \textbf{Input:} $p_1\in\mathcal{K}$, $w_1\in\mathcal{W}, \mathcal{T}:\text{Ext}(\mathcal{K})\rightarrow \mathcal{P}^*\big([n]\big), p\text{-OL}_{\mathcal{K}}, w\text{-OL}_{\mathcal{W}}$
    \For{$t=1, \ldots, T $}
        \State $I_t\in\mathcal{T}\big(\text{Ext}\big(\mathcal{K}\big)\big) \sim \text{Sampler}(p_t) \hfill \vartriangleright \text{Corresponds to } a_t = \frac{1}{|I_t|}\sum_{i\in I_t} e_i.$ \label{line:sampling_general}

        \State $L_{I_t} \gets \big(\ell(h_{w_t}(x_i),y_i)\big)_{i\in I_t}\in[0,1]^{|I_t|}\hfill \vartriangleright \text{Compute loss of mini-batch.}$\label{line:semi_bandit_information}
        
        \State $\hat{L}_t \gets \text{Estimator}( L_{I_t}, I_t, p_t)\hfill \vartriangleright\text{Estimate loss vector.}$\label{line:L_estimator}
        
        \State $p_{t+1}\gets p\text{-OL}_{\mathcal{K}}(\hat{L}_t)\hfill\vartriangleright\text{Update } p\text{-player.}$ \label{line:p_player} 
        
       \State $w_{t+1}\gets w\text{-OL}_{\mathcal{W}}(L_{I_t})\hfill\vartriangleright\text{Update } w \text{-player.}$\label{line:w_player}
        
    \EndFor
    
    \State \textbf{Return:} $\mathlarger{\Bar{w}=\frac{1}{T}\sum^T_{t=1}w_t, \Bar{a}=\frac{1}{T}\sum^T_{t=1}a_t}$.
  \end{algorithmic}
\end{algorithm}

\section{Minimax learning on the simplex}\label{sec:shalev_case}
Consider \eqref{eq:min_max_learning} with $\mathcal{K}=\mathcal{S}_n$, \textit{i.e.},the n-dimensional probability simplex.
The simplex $\mathcal{S}_n$ satisfies Assumption \ref{assum:structure_K} because its extreme points $e_i$ all correspond to one of the $n$ data points.
The problem \eqref{eq:min_max_learning} now becomes
\begin{equation}\label{eq:min_max_learning_max_loss}
\underset{w\in\mathcal{W}}{\text{min }} ~ \underset{p\in\mathcal{S}_n}{\text{max }} \langle L(w); p\rangle.
\end{equation}
This problem can be solved by Algorithm  \ref{algo:online_bandit_simplex} outlined in Appendix \ref{app:online_bandit_simplex}, which is a special case of Algorithm \ref{algo:online_bandit_general} when $\mathcal{K}$ is the simplex and the $p$-player is a MAB algorithm. 
Theorem \ref{th:shalev} is a convergence guarantee of Algorithm \ref{algo:online_bandit_simplex} leveraging the high-probability guarantee of EXP.IX \citep{neu2015explore}.
This result is just a slight variation of the classical proofs which relies on two OL algorithms with full information, \textit{e.g.} \citep{wang2018acceleration}.
It can also be seen as a slight improvement on \citep[Theorem 1.]{shalev2016minimizing} in the case of convex learners, as it does not rely on the separability assumption and uses EXP.IX instead of EXP.3P, which leads to slightly faster convergence.

\begin{theorem}\label{th:shalev}
Let $\delta>0$.
Consider running $T$ rounds of Algorithm \ref{algo:online_bandit_simplex} with a choice of online learning algorithm for the $w$-player ensuring a worst case regret $R_T^w\leq C \sqrt{T}$ for some $C>0$.
Further fix the parameters
\[
\eta=2\gamma=\sqrt{\frac{2\log n}{nT}}.
\]
Write $a_t$ the actions of the bandit and $w_t$ that of the online $w$-player.
Then with probability $1-\delta$, we have
\begin{equation*}
\Delta(\Bar{w},\Bar{a}) \leq C\sqrt{\frac{1}{T}} + 2\sqrt{\frac{2n\log(n)}{T}} + \left(\sqrt{\frac{2n}{T\log{n}}}+\frac{1}{T}\right)\log \left(\frac{2}{\delta}\right), 
\end{equation*}
where $\Bar{a}=\frac{1}{T}\sum_{t=1}^{T}a_t$ and $\Bar{w}=\frac{1}{T}\sum_{t=1}^{T}w_t$. 
\end{theorem}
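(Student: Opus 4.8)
The plan is to reduce the bound on the dual gap to the sum of the two players' regrets, and then to control each regret separately: the $w$-player via the hypothesis $R_T^w\leq C\sqrt T$, and the $p$-player via the high-probability guarantee of EXP.IX. Concretely, I would first establish the \emph{pathwise} inequality
\begin{equation*}
\Delta(\bar w,\bar a)\leq \frac{R_T^w + R_T^p}{T},
\end{equation*}
where $R_T^w := \sum_{t=1}^T\langle L(w_t);a_t\rangle - \min_{w\in\mathcal W}\sum_{t=1}^T\langle L(w);a_t\rangle$ is the full-information regret of the $w$-player against the loss functions $w\mapsto\langle L(w);a_t\rangle$, and $R_T^p := \max_{a\in\mathcal S_n}\sum_{t=1}^T\langle L(w_t);a\rangle - \sum_{t=1}^T\langle L(w_t);a_t\rangle$ is the regret of the $p$-player seen as a gain-maximiser against the (adaptively chosen) reward vectors $L(w_t)$.

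To obtain this inequality I would unfold the dual gap at $(\bar w,\bar a)$. For the first term, convexity of each coordinate $\ell(h_\cdot(x_i),y_i)$ gives the componentwise bound $L(\bar w)\preceq \tfrac1T\sum_t L(w_t)$, so $\langle L(\bar w);p\rangle\leq \tfrac1T\sum_t\langle L(w_t);p\rangle$ for every $p\in\mathcal S_n$; maximising over $p$ and using the definition of $R_T^p$ yields $\max_p\langle L(\bar w);p\rangle\leq \tfrac1T\big(\sum_t\langle L(w_t);a_t\rangle + R_T^p\big)$. For the second term, linearity in $a$ gives $\min_w\langle L(w);\bar a\rangle = \tfrac1T\min_w\sum_t\langle L(w);a_t\rangle = \tfrac1T\big(\sum_t\langle L(w_t);a_t\rangle - R_T^w\big)$. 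Subtracting, the shared term $\sum_t\langle L(w_t);a_t\rangle$ cancels and leaves exactly $\Delta(\bar w,\bar a)\leq (R_T^w+R_T^p)/T$. It then remains to bound the two regrets. The $w$-player plays a genuine full-information game, since it observes $L_{I_t}$ and can therefore evaluate $w\mapsto\langle L(w);a_t\rangle$ exactly, so by hypothesis $R_T^w/T\leq C/\sqrt T$, which is the first term of the claimed bound. For $R_T^p$ I would invoke the high-probability regret guarantee of EXP.IX \citep{neu2015}: with rewards in $[0,1]^n$ and the stated $\eta=2\gamma=\sqrt{2\log n/(nT)}$, it gives, with probability at least $1-\delta$,
\begin{equation*}
R_T^p\leq 2\sqrt{2nT\log n}+\Big(\sqrt{\tfrac{2nT}{\log n}}+1\Big)\log\tfrac2\delta,
\end{equation*}
and dividing by $T$ reproduces the three remaining terms. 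Since $R_T^w$ is deterministic, combining the two bounds on this single event finishes the proof.

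The main obstacle is the adaptivity of the adversary faced by the $p$-player: the reward vectors $L(w_t)$ depend on $w_t$, which is itself a function of the past bandit actions, so the regret bound must hold against an \emph{adaptive}, not merely oblivious, adversary. This is precisely the regime in which pseudo-regret bounds are insufficient and where the implicit-exploration mechanism of EXP.IX is needed to control the variance of the loss estimate $\hat L_t$. I would therefore be careful to invoke the version of \citep{neu2015} whose high-probability bound is valid for adaptive adversaries, and to verify that the filtration/measurability hypotheses of that theorem are met by Algorithm \ref{algo:online_bandit_simplex} (in particular that $\hat L_t$ is the IX estimator built from $p_t$ and the sampled coordinate). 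A minor but necessary bookkeeping point is the gain-versus-loss convention, i.e.\ applying the loss-based statement of EXP.IX to the maximising $p$-player through the standard $\ell\mapsto 1-\ell$ reduction.
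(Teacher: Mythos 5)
Your proposal is correct and follows essentially the same route as the paper: your pathwise inequality $\Delta(\bar w,\bar a)\leq (R_T^w+R_T^p)/T$ is exactly the paper's general game-convergence lemma (proved via the same convexity-in-$w$ and linearity-in-$a$ argument), after which both you and the paper plug in the worst-case bound $R_T^w\leq C\sqrt T$ and the high-probability EXP.IX regret bound of \citep{neu2015explore} for the adaptive-adversary setting. Your handling is if anything slightly cleaner, since you invoke the hypothesis on $R_T^w$ directly (rather than re-instantiating OGD constants as the paper does) and you flag explicitly the gain-versus-loss conversion and the pathwise nature of the $w$-player bound, which the paper leaves implicit.
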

\begin{proof}
We present the proof in Appendix \ref{ssec:proof_simplex}.
\end{proof}

\section{Minimax learning on the capped simplex}\label{sec:average_top_k}
In this section, we apply the online-bandit approach to solve \eqref{eq:min_max_learning} on $\mathcal{K}=\mathcal{S}_{n,k}$ for $k>1$, which is a strict subset of the simplex.
The extreme points of $\mathcal{S}_{n,k}$ are characterized as follows,
\[
\text{Ext}(\mathcal{S}_{n,k})=\Big\{ \frac{1}{k}\sum_{i\in I}e_i~|~I\in\mathcal{P}_k([n])\Big\},\quad |\text{Ext}(\mathcal{S}_{n,k})|=\binom{n}{k}.
\]
An immediate way to solve \eqref{eq:min_max_learning} is by reformulating it as \eqref{eq:min_max_learning_max_loss} in a $\binom{n}{k}$-dimensional simplex, where each vertex corresponds to one $I\in\mathcal{P}_k([n])$.
The iteration cost of this approach scales with $\mathcal{O}(k\log(n))$ as $\binom{n}{k}\approx n^k$ and the regret bound in Theorem \ref{th:shalev} scales with $\tilde{\mathcal{O}}(n^{k/2})$. 
In order to handle the potentially exponential size of $\text{Ext}(\mathcal{S}_{n,k})$, one needs to leverage its combinatorial structure.\\\\
Since all actions are made up of the same $[n]$ data points, the feedback from each action contains partial information about many other actions. 
Each action $a\in \text{Ext}(\mathcal{S}_{n,k})$ corresponds to a set $I\in\mathcal{P}_k([n])$ of $k$ indices (it is $k$-sparse).
Hence the $p$-player's feedback is made up of $k$ individual losses $L_{t,i}$ each corresponding to one data point.
Allowing the learner to observe the loss of each index $i\in I$ individually provides more information than the previously considered bandit feedback and is called \textit{semi-bandit} feedback. 
The semi-bandit setting allows the learner to leverage the additional information that arises from the combinatorial nature of $\mathcal{P}_k([n])$.
More precisely, each index $i\in [n]$ is contained in $\binom{n-1}{k-1}$-many actions $I_t\in\mathcal{P}_k([n])$ and hence provides feedback for each of these actions.
Semi-bandit algorithms that are designed to solve such combinatorial problems are called \textit{Combinatorial Semi-Bandit} (CSB) algorithms.
Several algorithms have been introduced to solve the CSB problem that arises from $\mathcal{K}=\mathcal{S}_{n,k}$. 
This problem has been called the $k$-set problem \citep{combes2015combinatorial}, unordered slate \citep{Kale2010} or bandits with multiple plays \citep{uchiya2010,vural2019minimax}.

\paragraph{EXP4.MP.}
We use the EXP4.MP algorithm \citep{vural2019minimax} for the $p$-player, which is a variation of EXP4 \citep{auer2002nonstochastic}. 
Each iteration of EXP4.MP has a computational cost of $\mathcal{O}(n\log(n))$ and a storage cost of $\mathcal{O}(n)$, with a high-probability regret bound of $\mathcal{O}(\sqrt{knT\log (n/\delta)})$.
EXP4.MPs computational efficiency relies on the sampling algorithm \emph{DepRound} which has a computational cost and storage cost of $\mathcal{O}(n)$ per iteration. 
DepRound can sample a set of $k$ indices $I\in\mathcal{P}_k([n])$ requiring only a distribution $p_t$ over the indices of the data points instead of a distribution $q$ over $\mathcal{P}_k([n])$. This allows EXP4.MP to completely bypass handling a distribution over the exponentially large combinatorial set $\mathcal{P}_k([n])$.
DepRound was introduced in \citep{Gandhi2006} and was used in the context of CSB by \citep{uchiya2010,vural2019minimax}.

\paragraph{Description of Algorithm \ref{algo:online_bandit_k_set}.}
In Lines \ref{line:proj} to \ref{line:return_proj}, the unnormalized vector $\tilde{p}_t$ is first projected onto the simplex (Line \ref{line:vt}), the resulting probability distribution $v_t$ is then 
projected onto $\mathcal{S}_{n,k}$ and simultaneously mixed with the uniform distribution in order to control the variance of the estimators $\hat{L}_{t,i}$ and $U_{t,i}$.
The set $J_t$ stores the indices of $v_t$ that lie outside $\mathcal{S}_{n,k}$.
The projection algorithm costs $\mathcal{O}(n\log n)$ per iteration and is described in more detail in Appendix \ref{app:capping_alg}. 
Then, in Line \ref{line:sampling_k_set}, an action is sampled from $p_t$ via DepRound (see Appendix \ref{app:dep_round} for a thorough explanation).
In Line \ref{algo:expupdate}, the bandits weights $\tilde{p}_t$ are updated using the statistical loss estimators computed in Lines \ref{algo:estL} and \ref{algo:estU}.
Here, $\hat{L}_{t,i}$ is the importance weighted estimator and $U_{t,i}$ is an upper confidence bound obtained by using the maximal value $L_{t,i}=1$.

\begin{algorithm}[h]
  \caption{Online-Bandit (via EXP4.MP) with $\mathcal{S}_{n,k}$.}
  \label{algo:online_bandit_k_set}
  \begin{algorithmic}[1]
  \State\textbf{Input:}  $\mathlarger{\gamma,\eta,c>0,T,\tilde{p}_1 = \mathds{1}_n , k\in[n],w\text{-OL}_{\mathcal{W}}}$
    \For{$t=1, \ldots, T $}
        
        \Procedure{Proj}{$\tilde{p}_t, \gamma, k$}\label{line:proj}
        
        \State $\mathlarger{v_{t} \gets \frac{\tilde{p}_{t}}{\sum_{j=1}^n \tilde{p}_{t,j}}}$ \label{line:vt}

        \State Find $\alpha_t$ s.t. $\mathlarger{\frac{\alpha_t}{\sum_{v_{t,i}\geq \alpha_t}{\alpha_t} +\sum_{v_{t,i}< \alpha_t}{v_{t,i}}}=\frac{1/k - \gamma/n}{1-\gamma}:=\kappa} \hfill \vartriangleright \text{Algorithm } \ref{algo:proj_A}.$ \label{line:def_alpha}

        \State $J_t \gets \{i:v_{t,i}\geq \alpha_t\}$ \label{line:def_J_t}

        \State $v'_{t,i} \gets \alpha_t$ for $i\in J_t$\label{line:alpha1}

        \State $v'_{t,i}\gets v_{t,i}$ for $i\notin J_t$\label{line:alpha2}

        \State $\mathlarger{p_{t,i} \gets \Big((1-\gamma) \frac{\kappa}{\alpha_t} v'_{t,i} + \frac{\gamma}{n}\Big)}$ \label{line:def_p_t_i} 

        \State\Return $p_t, J_t$ \label{line:return_proj}
        \EndProcedure
   
        \State Sample $\mathlarger{I_t \sim \text{DepRound}(k, p_t)}\; ( \text{Corresponds to } a_t = \frac{1}{k}\sum_{i\in I_t} e_i ) \hfill\vartriangleright\text{Algorithm \ref{algo:depRound}.}$\label{line:sampling_k_set}
        
        \State $L_{t,i} \gets \ell(h_w(x_i),y_i) ~~\forall i\in I_t\subset [n]\hfill \vartriangleright\text{Compute losses of mini-batch of the dataset.}$ \label{line:loss_extraction}

        \State $\mathlarger{\Hat{L}_{t,i} \gets \frac{L_{t,i}}{kp_{t,i}}1_{[i\in I_t\backslash J_t]}~~\forall i\in [n]}\hfill \vartriangleright\text{Estimate loss vector.}$\label{algo:estL}

        \State $\mathlarger{U_{t,i} \gets \frac{1}{kp_{t,i}}1_{[i\in I_t\backslash J_t]}~~\forall i\in [n]} \hfill \vartriangleright\text{Estimate upper-bound term.}$\label{algo:estU}
        
        \State$\mathlarger{\tilde{p}_{t+1,i} \gets \tilde{p}_{t,i}\exp\big[\eta(\Hat{L}_{t,i}+U_{t,i}c/\sqrt{n T})\big]}~~\forall i\in[n]\hfill \vartriangleright\text{EXP-Update.}$\label{algo:expupdate}

        \State $\mathlarger{w_{t+1}\gets w\text{-OL}_{\mathcal{W}}\big((L_{t,i})_{i\in I_t}\big)}\hfill \vartriangleright w\text{-player update.}$
        
    \EndFor
    \State \textbf{Return: } $\mathlarger{\Bar{w}= \frac{1}{T}\sum^T_{t=1}w_t,\Bar{a}= \frac{1}{T}\sum^T_{t=1}a_t}$.
  \end{algorithmic}
\end{algorithm}
\noindent Finally we expand Theorem \ref{th:shalev} for $k>1$ to prove that the average iterates of Algorithm \ref{algo:online_bandit_k_set} indeed converge to an approximate Nash equilibrium.
\begin{theorem}\label{th:topk}
Let $\delta>0$. Consider running $T\geq \max \{\log(n/\delta), n\log(n/k)/k \}$ rounds of Algorithm \ref{algo:online_bandit_k_set} with a choice of online learning algorithm for the $w$-player ensuring a worst case regret $R_T^w\leq C \sqrt{T}$ for some $C>0$. Further fix the parameters
\begin{align*}
    \eta=\frac{k\gamma}{2n},\quad \gamma=\sqrt{\frac{n\log(n/k)}{kT}},\quad c=\sqrt{k\log(n/\delta)}.
\end{align*}
Write $a_t$ the actions of the bandit and $w_t$ those of the online $w$-player.
Then with probability $1-\delta$, we have
\begin{equation*}
    \Delta(\Bar{w},\Bar{a}) \leq C\sqrt{\frac{1}{T}} + 2\sqrt{\frac{kn}{T} \log\left(\frac{n}{\delta}\right)} +4\sqrt{\frac{kn}{T}\log \left(\frac{n}{k}\right)}+ \frac{k}{T}\log\left( \frac{n}{\delta}\right),
\end{equation*}
where $\Bar{a}=\frac{1}{T}\sum_{t=1}^{T}a_t$ and $\Bar{w}=\frac{1}{T}\sum_{t=1}^{T}w_t$.
\end{theorem}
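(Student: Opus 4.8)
The plan is to follow the standard reduction from a convex--concave saddle point to a pair of no-regret guarantees, as in \citep{wang2018acceleration}, and then supply the $p$-player's guarantee through a high-probability regret bound for EXP4.MP against an \emph{adaptive} adversary. First I would exploit the convex-linear structure of \eqref{eq:min_max_learning}: since each $\ell(h_{\cdot}(x_i),y_i)$ is convex in $w$ and every $p\in\mathcal{S}_{n,k}$ is nonnegative, Jensen's inequality gives $\langle L(\bar{w}); p\rangle \le \frac{1}{T}\sum_{t=1}^{T}\langle L(w_t); p\rangle$ for all $p$, while linearity in the second argument gives $\langle L(w);\bar{a}\rangle = \frac{1}{T}\sum_{t=1}^{T}\langle L(w); a_t\rangle$. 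Taking $\max_p$ and $\min_w$ and adding and subtracting $\frac{1}{T}\sum_t\langle L(w_t); a_t\rangle$ yields the decomposition
\[
\Delta(\bar{w},\bar{a}) \;\le\; \frac{R_T^w}{T} \;+\; \frac{R_T^p}{T},
\]
where $R_T^w = \sum_t\langle L(w_t);a_t\rangle - \min_{w}\sum_t\langle L(w);a_t\rangle$ is the $w$-player's full-information regret and $R_T^p = \max_{p\in\mathcal{S}_{n,k}}\sum_t\langle L(w_t);p\rangle - \sum_t\langle L(w_t);a_t\rangle$ is the $p$-player's semi-bandit regret. The loss $w\mapsto\langle L(w);a_t\rangle$ depends only on the $k$ observed points in $I_t$, so the $w$-player has full information and, by hypothesis, $R_T^w/T\le C/\sqrt{T}$ for \emph{every} realized action sequence; this supplies the first term of the bound with no probabilistic overhead.

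It remains to bound $R_T^p/T$ by the three remaining terms with probability $1-\delta$. I would split $R_T^p = \sum_t\langle L(w_t); p^\ast - p_t\rangle + \sum_t\langle L(w_t); p_t - a_t\rangle$, where $p^\ast$ is the extreme-point maximizer over $\mathcal{S}_{n,k}$. The second sum is a bounded martingale difference sequence: DepRound samples $I_t$ so that $\mathbb{P}[i\in I_t]=k\,p_{t,i}$, which is legitimate precisely because the projection in Lines \ref{line:proj}--\ref{line:return_proj} enforces $p_{t,i}\le 1/k$, whence $\mathbb{E}[a_t\mid p_t]=p_t$; moreover the coordinates in $J_t$ satisfy $p_{t,i}=1/k$ (substituting $v'_{t,i}=\alpha_t$ into Line \ref{line:def_p_t_i}), so they are sampled deterministically and contribute no randomness. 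Azuma/Freedman then controls this sum with high probability, producing the additive $\tfrac{k}{T}\log(n/\delta)$-type term.

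The core is the pseudo-regret $\sum_t\langle L(w_t); p^\ast - p_t\rangle$. Here I would run the entropic online-mirror-descent (exponential-weights) analysis for the update of Line \ref{algo:expupdate} with feasible set $\mathcal{S}_{n,k}$ and estimators $\hat{L}_{t,i}=L_{t,i}U_{t,i}$, again using that $i\in J_t$ contributes no variance so the estimator legitimately zeroes it out. This bounds the pseudo-regret by $\sum_t\langle \hat{L}_t; p^\ast - p_t\rangle$ plus a $\tfrac{\log(n/k)}{\eta}$ range term and an $\eta\sum_{t,i}p_{t,i}\hat{L}_{t,i}^2$ stability term. The remaining task, and the main obstacle, is passing from estimated to true gains with high probability against the adaptive $w$-player: this is exactly where the confidence bonus $U_{t,i}c/\sqrt{nT}$ is essential. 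Following the EXP.IX/EXP4.MP technique \citep{neu2015explore,vural2019minimax}, a Freedman-type self-normalized concentration inequality shows that, simultaneously over all arms, $\sum_t L_{t,i}\le \sum_t\big(\hat{L}_{t,i}+U_{t,i}c/\sqrt{nT}\big)$ up to an $\mathcal{O}(\sqrt{knT\log(n/\delta)})$ error, so the optimistic bonus both cancels the estimator bias and reins in the variance of the $1/(k p_{t,i})$ importance weights. Making the constants emerge as $2\sqrt{kn\log(n/\delta)/T}+4\sqrt{kn\log(n/k)/T}$ is the delicate bookkeeping.

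Finally I would substitute $\eta=k\gamma/2n$, $\gamma=\sqrt{n\log(n/k)/(kT)}$, and $c=\sqrt{k\log(n/\delta)}$, using $T\ge\max\{\log(n/\delta),\,n\log(n/k)/k\}$ to ensure $\gamma\le 1$ (so the uniform mixing is a valid distribution) and that $\eta$ lies in the admissible range for the mirror-descent step, and then collect the terms into the stated bound. I expect the genuinely hard part to be the adaptive-adversary high-probability regret bound for EXP4.MP, namely the martingale concentration that validates the confidence correction, since the saddle-point reduction and the mirror-descent pseudo-regret estimate are routine once that ingredient is in hand.
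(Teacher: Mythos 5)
Your proposal is correct and takes essentially the same route as the paper: the no-regret-to-saddle-point reduction (the paper's Theorem \ref{thm:general}) combined with a high-probability, adaptive-adversary regret bound for EXP4.MP, proved via the exponential-weights analysis on the capped simplex with uniform mixing, the importance-weighted estimator with optimistic bonus $U_{t,i}c/\sqrt{nT}$, and a Freedman-type concentration inequality with a union bound over arms (the paper's Theorem \ref{th:EXP4.MP}, following \citep{vural2019minimax}). The only differences are organizational: the paper's potential-function argument makes the bandit's realized gain $\frac{1}{k}\sum_t\sum_{i\in I_t\setminus J_t}L_{t,i}$ appear deterministically, so no separate Azuma step for your sampling-error term $\sum_t\langle L(w_t);p_t-a_t\rangle$ is needed and concentration is applied only on the comparator side; correspondingly, the $\frac{k}{T}\log(n/\delta)$ term arises there from $c^2$ in the stability term of the weights analysis, not from the sampling martingale as you suggest.
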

\begin{proof}
See Appendix \ref{ssec:proof_k_set}.
\end{proof}

\section{Related work}\label{sec:related_work}
\paragraph{Adaptive bandits for matrix games.}
The repeated play approach to finding an approximate Nash-Equilibrium (\textit{i.e.}, solving a min-max problem) refers to the strategy of opposing two players equipped with online learning algorithms (the online-online setting) whose average regret converges to zero, such algorithms are often referred to as \emph{no-regret}.
These strategies have a long history initially in matrix games \citep{brown1951iterative,robinson1951iterative,blackwell1956analog,hannan1957approximation,hammond1984solving,freund1996game,freund1999adaptive}, \textit{i.e.}, minimax problems with bilinear payoff function, but have also been broadly applied to more generic minimax problems, see, \textit{e.g.}, \citep{abernethy2018faster}.
However, we are interested in the setting where one of the players is a bandit algorithm that deals with less information on its action's losses than an online algorithm would.
Such no-regret strategies, online-bandit or \textit{bandit-bandit}, are used for unknown matrix games, see, \textit{e.g.}, the multi-armed bandit strategies in \citep{auer2002nonstochastic}.

\paragraph{Beyond unknown matrix games.}
Bandit-bandit strategies for solving unknown matrix games are tied to the specificity of the games' bilinear structure with simplex constraints.
Some works \citep{hazan2011beating,clarkson2012sublinear,shalev2016minimizing} proposed learning problems with similar structures to \eqref{eq:min_max_learning} with $k=1$ that differ from matrix games. For instance, \citep{clarkson2012sublinear} consider a bilinear payoff but with non-simplicial constraints on $w$.
These approaches solve \eqref{eq:min_max_learning} with online-bandit strategies: the $p$-player (the payoff being linear w.r.t. $p$) could be a MAB \citep{auer2002nonstochastic} and the $w$-player an OL algorithm.
In unknown matrix games, the partial feedback is a modeling choice that corresponds to real-world problems \citep{myerson2013game}. 
However, in the setting of \eqref{eq:min_max_learning}, the bandit's lack of information is a necessary and intentional algorithmic design for large-scale learning problems.

\paragraph{Learning with various aggregated losses.}
For $k=1$, \eqref{eq:min_max_learning} corresponds to minimizing the largest loss incurred by a single data point, instead of the classical \textit{Empirical Risk Minimization} (ERM) objective, which corresponds to minimizing the \textit{average} loss incured by all data points in the dataset. This \emph{max-loss} was considered in \citep{clarkson2012sublinear,hazan2011beating,shalev2016minimizing}.
\citep{zhu2019robust} use the max-loss in order to provide a principled way to select unlabeled data in the context of \textit{active learning}.
In \citep{mohri2019agnostic}, the \emph{max-loss} principle is applied to subsets of the dataset instead of single data points. 
This approach is mainly motivated by Federated Learning, where the dataset consists of multiple subsets of differing size which potentially follow different distributions. 
The true data-generation distribution can be viewed as a mixture of these distributions, however the mixing coefficients are unknown. 
The max-loss is used to train a learning algorithm that is robust with respect to the change of the mixing coefficients of the subsets.
One disadvantage of the \emph{max-loss} approach is that it is sensitive to outliers.
This problem may be alleviated by considering $k>1$.
\citep{fan2017learning} introduces the concept of the \emph{aggregated loss} which generalizes different concepts of aggregating the individual losses of a dataset.
The average top-$k$ loss \textit{i.e.},the average of the $k$ largest losses \citep{fan2017learning} corresponds to \eqref{eq:min_max_learning} with $\mathcal{K}=\mathcal{S}_{n,k}$ and was introduced as an intermediary between ERM and the max-loss, limiting the influence of one data point.
\citep{fan2017learning} provides an algorithm for Support Vector Machines (SVM) with the average top-$k$ loss. 
Their algorithm empirically improves the performance relative to both the average and the maximum loss.
However, the proposed algorithm is specific to SVMs.
Furthermore it does not leverage the problem structure by adaptively sampling using bandit algorithms. 

\paragraph{Distributionally robust optimization.}
DRO can be applied as a principled way of ensuring robustness in learning problems (see, \textit{e.g.}, \citep{duchi2016statistics} and references therein).
\citep{namkoong2016stochastic} proposed an approach ensuring distributional robustness which corresponds to solving \eqref{eq:min_max_learning} for a specific family of $\mathcal{K}$, but face convergence issues, see, \textit{e.g.}, \citep[Figure 4]{ghosh2018efficient}. 
We provide a more detailed discussion of these issues in Appendix \ref{sec:regret_game}.
More closely related to our work, \citep{curi2020adaptive} propose Ada-CVaR, an online-bandit algorithm designed to solve \eqref{eq:min_max_learning} with $\mathcal{K}=\mathcal{S}_{n,k}$ when choosing $k$ such that $\alpha=k/n$ is a fixed fraction of the dataset.
In this case, one can interpret \eqref{eq:min_max_learning} as optimizing the Conditional Value at Risk (CVaR) \citep{curi2020adaptive} of the learning problem with respect to the empirical distribution. 
This means that instead of minimizing the ERM objective, \eqref{eq:min_max_learning} minimizes the loss for the $\alpha$-fraction of the data points with the largest loss.
Their approach requires sampling only one data point per round, making it potentially amenable to large-scale learning even when $k$ grows with $n$. 
For large datasets, this can be an issue since the online-bandit approach uses $k$ as the batch-size.
However, bandit algorithms that do not control the variance incur potentially unbounded variance of their loss estimators (Line 6 in ADA-CVaR). 
Furthermore, pseudo-regret guarantees are not adequate for adaptive adversaries such as in the context of games (see Appendix \ref{sec:regret_game} for a discussion of these issues). 
We were unable to extend the pseudo-regret guarantee of \citep{curi2020adaptive} to a high-probability guarantee (and hence to verify their convergence claim). This motivates our significantly different approach when solving \eqref{eq:min_max_learning}.

\paragraph{Fairness.}
There is a variety of different approaches \citep{mehrabi2019survey} trying to counteract \textit{bias} in machine learning. 
\citep{mohri2019agnostic} hint at possible applications of their agnostic learning approach to \textit{fairness}.
Indeed, \citep{williamson2019fairness} mentions the possibility of achieving fairness by controlling subgroup risk.
In other words, the performance of a learning algorithm should not vary too much between subgroups.
Minimizing the loss in the subgroup with the worst performance leads to more uniform performance and therefore reduces \textit{algorithmic bias}.
Solving \eqref{eq:min_max_learning} on $\mathcal{S}_{n,k}$ indirectly reduces bias, since minimizing CVaR enforces fairness constraints \citep{williamson2019fairness}.
The set $\mathcal{K}_\alpha$ presented in Appendix \ref{app:generalizing_k_set} provides a way to exert more granular control on the subgroup risks, which might an interesting feature when trying to ensure fairness in a principled manner.

\section{Numerical experiments}\label{sec:numerical_experiments}
\paragraph{Methodology.}
In the following numerical experiments, the $w$-player performs linear regression with the Mean-Squared Error (MSE) loss function for regression tasks and logistic regression with Cross-Entropy Loss (CEL) for classification tasks.
All experiments are repeated $5$ times with different seeds, and we show the minimal and maximal values to visualize the variance of the different approaches.
For the comparison not to depend on specific implementations, datasets, or hardware architecture, we compare the performance of the algorithms in terms of the number of data points processed.
This is necessary as the online-online approach requires processing all $n$ data points in each round, compared to $k$ in the online-bandit approach.
Since we are interested in the optimization perspective of these learning problems and not the generalization, we only compare the performance metrics on the training data.
We compare the performance of Algorithm \ref{algo:online_bandit_k_set} herein after referred to as OL-EXP.4M (as it relies on letting an OL play against EXP.4M) to solving \eqref{eq:min_max_learning} on $\mathcal{S}_{n,k}$ with mini-batch Stochastic AFL (S-AFL) \citep{mohri2019agnostic} and to replacing EXP.4M by an OL approach based on \textit{Follow the Regularized Leader} (FTRL) which we will refer to as OL-FTRL.
We set all the hyperparameters as called for in the respective theoretical results, see Appendix \ref{sec:params} for more details.
We set the uncertainty parameter for OL-EXP.4M as $\delta=0.05$ which corresponds to an error bound which holds with a probability of $0.95$.
The error bound of S-AFL holds in expectation while the error bound of OL-FTRL is deterministic.
We choose the \textit{Breast Cancer Wisconsin Dataset} \citep{uci_data} as a classification task and \textit{Boston Housing Dataset} \citep{harrison1978hedonic} as a regression task.

\paragraph{Results.}
In Figure \ref{fig:topk_cel} and Figure \ref{fig:topk_mse}, S-AFL is the slowest method, which is mainly due to two reasons. 
First, S-AFL requires sampling data points for the $w$-player and $p$-player separately at each iteration.
This means that the amount of data points that need to be sampled is double that of the other methods.
And secondly, the theoretical learning rate $\eta_p$ of the $p$-player scales in $\mathcal{O}(1/n)$ which leads to small step sizes.
For the chosen datasets, this leads to a learning rate $\eta_p$ which is at least an order of magnitude smaller than the other methods (see Table \ref{tab:topk} in Appendix \ref{sec:params}).
The speed of OL-EXP.4M and OL-FTRL are very similar in terms of the number of data points processed.
This shows that the OL-EXP.4M makes it possible to reduce the memory cost compared to online-online approaches by only evaluating $k$ instead of $n$ data points per round without losing performance and with strong theoretical guarantees.
\begin{figure}[h]
    \centering
    \includegraphics{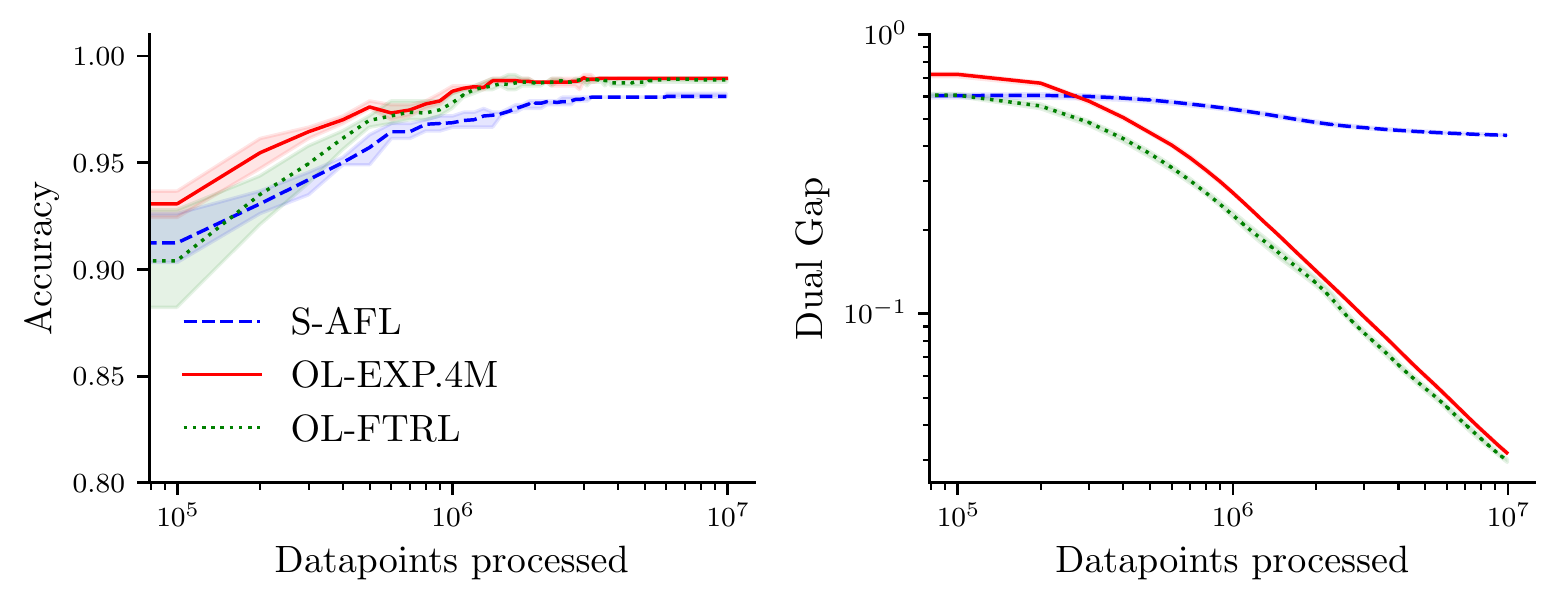}
    \caption{Classification accuracy (left) and dual gap (right) of \eqref{eq:min_max_learning} with $k=20$ on the \textit{Breast Cancer Wisconsin Dataset}.}
    \label{fig:topk_cel}
\end{figure}
\begin{figure}[h]
    \centering
    \includegraphics{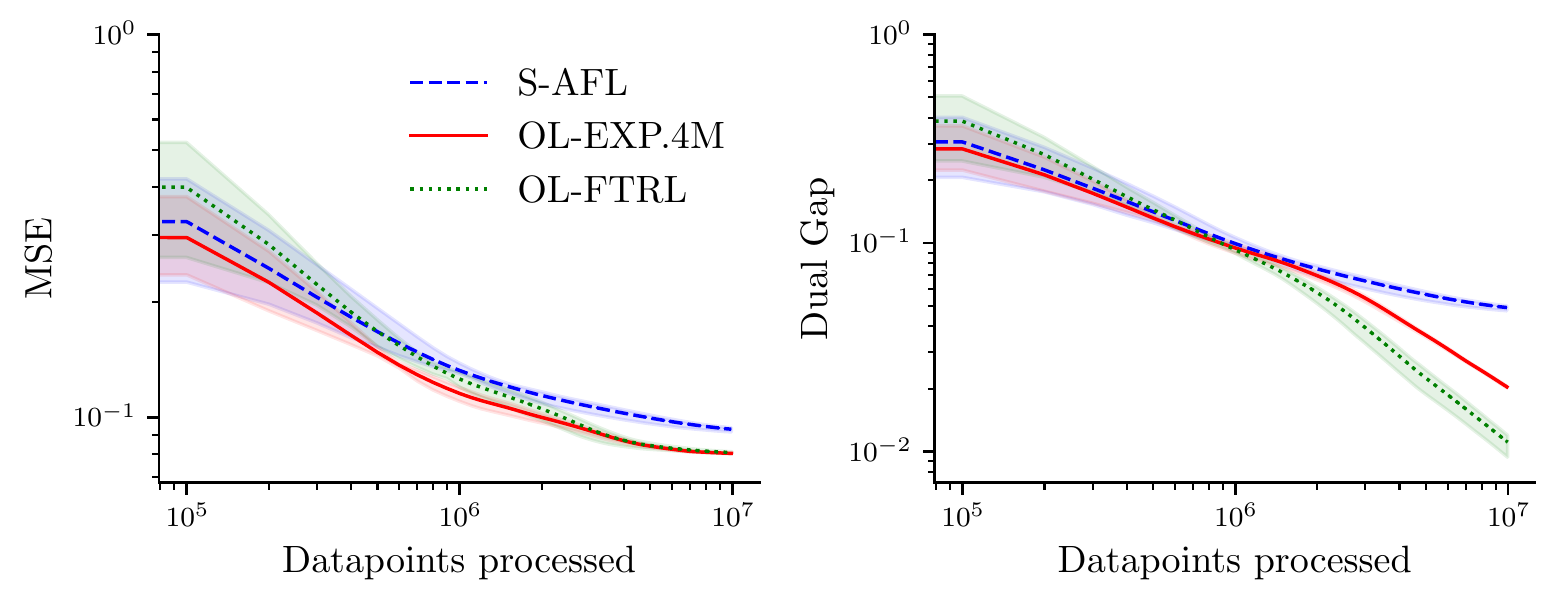}
    \caption{MSE of the $k=20$ largest losses (left) and dual gap (right) of \eqref{eq:min_max_learning} on the \textit{Boston Housing Dataset}.}
    \label{fig:topk_mse}
\end{figure}
\section{Conclusion}\label{sec:conclusion}
We presented the generic min-max problem \eqref{eq:min_max_learning} which can be solved using online-bandit strategies for large-scale learning problems when $\mathcal{K}$ satisfies some structural assumptions.
Furthermore, we provide an efficient algorithm for a family of sets $\mathcal{S}_{n,k}$ at the core of many learning applications, which satisfies these structural assumptions.
Note that the iteration batch size is constrained by the structure of $\mathcal{K}$.
In future work, we would like to design strategies providing the same strong theoretical guarantees while sampling only a fraction of the data points our setting currently requires per iteration.
\paragraph{Ackowledgements}
Research reported in this paper was partially supported through the Research Campus Modal funded by the German Federal Ministry of Education and Research (fund numbers 05M14ZAM,05M20ZBM) as well as the Deutsche Forschungsgemeinschaft (DFG) through the DFG Cluster of Excellence MATH+.
\bibliography{utils/biblio}
\bibliographystyle{abbrvnat}

\appendix

\section{Game convergence problems} \label{sec:regret_game}
In this section, we discuss some intricacies of choosing appropriate bandit algorithms to solve \eqref{eq:min_max_learning}.

\subsection{Action set}
When solving \eqref{eq:min_max_learning} with the online-bandit approach, the structure of $\mathcal{K}$ dictates which algorithms the $p$-player can use.
At each round, the $p$-player tries to find the action $a\in \mathcal{K}$, which maximizes the weighted sum of losses $\langle L(w_t);a\rangle$.
The bandit's performance is then measured in terms of regret, which compares the series of actions chosen by the player to a fixed action $a^*\in \mathcal{K}$ that maximizes $\sum_{t=1}^T\langle L(w_t);a\rangle = \langle \sum_{t=1}^T L(w_t);a\rangle$. 
Finding $a^*$ is a linear optimization problem on a compact convex set which means that an extreme point of $\mathcal{K}$ is a solution.
Hence, it suffices for the $p$-player to consider the actions $a\in \text{Ext}(\mathcal{K})$ when searching for a solution that leads to vanishing regret, \textit{i.e.}, its randomized strategy is a probability distribution over $\text{Ext}(\mathcal{K})$.
In the special case of $\mathcal{K}=\mathcal{S}_n$, this means that the $p$-player's objective is to find the index $i\in [n]$ corresponding to one data point which maximizes $\langle L;e_i\rangle$.
This coincides with the MAB problem, which is a special case of the linear bandit where the player's set is the simplex and the player can choose among $n$ discrete actions.
The randomized strategy of the MAB is a discrete probability distribution over these $n$ actions.\\\\
Now, consider the following set introduced in \citep{namkoong2016stochastic},
\[
\mathcal{U} = \{p\in\mathcal{S}_n\vert D(p\Vert\mathds{1}/n)\leq \rho \}.
\]
$D(\cdot || \cdot)$ is a divergence measure between two probability distributions such as the KL-divergence or the $\chi^2$-divergence, $\mathds{1}/n$ is the $n$-dimensional uniform distribution and $\rho$ is a parameter which controls the amount of divergence from the uniform distribution. 
Interestingly, \citep{namkoong2016stochastic} chooses MAB-algorithms for the $p$-player in order to solve \eqref{eq:min_max_learning}.
Consider an example of such an action set $\mathcal{U}$ defined by the $\chi^2$ divergence around the uniform distribution in Figure \ref{fig:set_error} on the left.
Here, $\text{Ext}(\mathcal{U})$ is the circle parametrized by $D(p\Vert\mathds{1}/n)= \rho$. 
Consequently, we have $a_i\in(0,1)$ (for $a\in \text{Ext}(\mathcal{U})$) when $\rho$ is sufficiently small, \textit{i.e}, 
none of the extreme points of $\mathcal{U}$ have zero-entries.
MAB-algorithms on the other hand are designed to choose between a discrete set of $n$ actions, which are represented by the one-sparse canonical bases $e_i$.
This means that when $\mathcal{K}=\mathcal{U}$ choosing an MAB-algorithm seems not to be an appropriate choice to solve \eqref{eq:min_max_learning}, as it considers actions outside of $\mathcal{U}$.
This issue arises from conflating the randomized strategy $p$ over $\mathcal{K}$ with the actual action $a\in \mathcal{K}$.\\\\
When $\mathcal{K} = \mathcal{S}_{n,k}$, the randomized strategy of the $p$-player is a probability distribution over the $\binom{n}{k}$ different actions in $\text{Ext}(\mathcal{S}_{n,k})$.
However, in \citep{curi2020adaptive}, the $p$-player is chosen as EXP3 (an algorithm designed for the MAB problem) operating on the $n$-dimensional probability simplex to solve \eqref{eq:min_max_learning} on $\mathcal{S}_{n,k}$.
In Figure \ref{fig:set_error}, the right image shows $\mathcal{S}_{3,2}$, which has three extreme points, each with 2 nonzero entries. 
However, any action sampled by a MAB algorithm corresponds to a canonical direction, which is one-sparse.
\begin{figure}[h]
    \centering
    \includegraphics{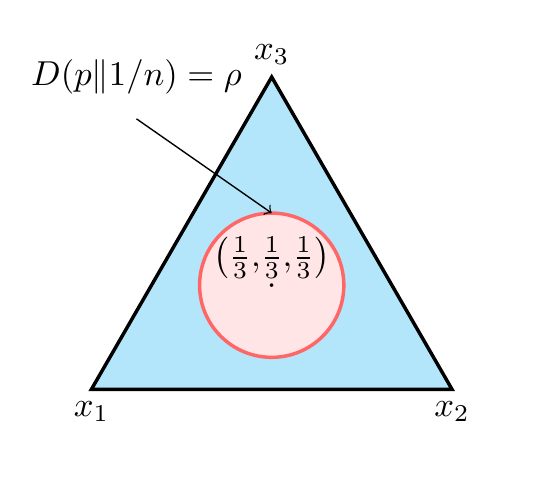}
    \includegraphics{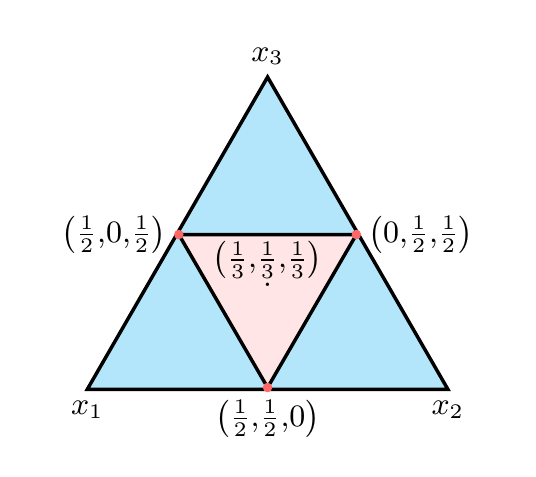}
    \caption{Two subsets (light red) of the 3-simplex $\mathcal{S}_{3}$ (blue) with the extreme points marked in red. The left subset is defined by the $\chi^2$ divergence around the uniform distribution \citep{namkoong2016stochastic} and the right subset is $\mathcal{S}_{3,2}$}
    \label{fig:set_error}
\end{figure}

\subsection{Bandit performance measures}
In Section \ref{sec:banditgame}, we introduced \eqref{eq:regret} as well as \eqref{eq:expected_regret}.
Another measure of regret is the \textit{pseudo-regret},
\begin{equation}\tag{Pseudo-regret}\label{eq:pseudo_regret}
    \tilde{R}_T := \underset{a\in\mathcal{K}}{\text{min }}\mathop{\mathbb{E}}\left[\sum_{t=1}^{T} f_t(a_t) -  \sum_{t=1}^{T} f_t(a)\right].
\end{equation}
Algebraically, the pseudo-regret differs from regret only in that the $\min$-operator is outside the expectation.  
Since $\min$ is concave, by Jensen's inequality we have $\tilde{R}_T\leq \mathbb{E}[R_T]$.
In general, an upper bound on the pseudo-regret does not lead to an upper bound on the expected regret. 
Only in the case of an oblivious adversary -  meaning that the adversary's choices of $f_t$ do not depend on the previous actions $a_1, \ldots, a_{t-1}$ of the bandit - do the expected regret and pseudo-regret coincide \citep{audibert2010regret}.
For adaptive adversaries, the pseudo-regret serves as an intermediary technical step toward obtaining the more involved high-probability or expected regret bounds \citep{kerdreux2021linear}.
Regret bounds which hold against adaptive adversaries are instrumental in proving convergence to an approximate Nash equilibrium of \eqref{eq:min_max_learning} \citep[\S 1]{bubeck2012regret}.
When two OL-algorithms play against each other, the feedback that each player receives is determined by the opposed player.
The players then use this feedback to adapt their strategy so that each player is an adaptive adversary to the other.\\\\
To clarify the issue with using algorithms for which we only know of pseudo-regret guarantees in the context of solving zero-sum games, let us consider the procedure used to prove game convergence by opposing two OL-algorithms as shown in Theorem \ref{thm:general}.
Suppose that instead of an expected regret guarantee, we only know of a guarantee in pseudo-regret, \textit{i.e.},
\begin{align*}
    \tilde{R}_T^p &:= \underset{a\in\text{Ext}(\mathcal{K})}{\text{max }}\mathbb{E}\left[\sum_{t=1}^{T} \langle L(w_t); a\rangle -  \sum_{t=1}^{T} \langle L(w_t); a_t\rangle\right] \leq \tilde{\epsilon}_p(T),
\end{align*}
with $\tilde{\epsilon}_p(T)=o(T)$.
By convexity of $w\mapsto\langle L(\cdot); a_t\rangle$, we obtain
\begin{align}
    \underset{a\in\text{Ext}(\mathcal{K})}{\text{max }}\mathbb{E}\left[\langle L(\bar{w}); a\rangle -\frac{1}{T}  \sum_{t=1}^{T} \langle L(w_t); a_t\rangle\right]\leq \frac{1}{T}\tilde{R}_T^p\leq\frac{1}{T} \tilde{\epsilon}_p(T),\label{eq:p_reg_p}
\end{align}
where $\bar{w}= \frac{1}{T}\sum_{t=1}^Tw_t$. By linearity of $a\mapsto \langle L(w_t); \cdot\rangle$, we have the regret guarantee of the $w$-player (see \eqref{eq:reg_w}),
\begin{equation*}
    \frac{1}{T}R_T^w = \frac{1}{T}\sum_{t=1}^{T} \langle L(w_t); a_t\rangle - \underset{w\in\mathcal{W}}{\text{min }}\langle L(w); \bar{a}\rangle\leq \frac{1}{T}\epsilon_w(T),
\end{equation*}
where $\bar{a}= \frac{1}{T}\sum_{t=1}^Ta_t$.
If we now add \eqref{eq:p_reg_p} and the regret guarantee of the $w$-player, we get
\begin{align*}
    \frac{1}{T}\epsilon_w(T) + \frac{1}{T}\tilde{\epsilon}_p(T) \geq  \underset{a\in\text{Ext}(\mathcal{K})}{\text{max }}\mathbb{E}\left[\langle L(\bar{w}); a\rangle - \underset{w\in\mathcal{W}}{\text{min }}\langle L(w); \bar{a}\rangle\right].
\end{align*}
By Jensen's inequality, the term on the right side is now a lower bound on the dual gap as the $\max$-function is convex, \textit{i.e.},
\begin{align*}
    \underset{a\in\text{Ext}(\mathcal{K})}{\text{max }}\mathbb{E}\left[\langle L(\bar{w}); a\rangle - \underset{w\in\mathcal{W}}{\text{min }}\langle L(w); \bar{a}\rangle\right]&\leq \mathbb{E}\left[ \underset{p\in\mathcal{K}}{\max}~\langle L(w^\prime); p\rangle - \underset{w\in\mathcal{W}}{\min}~\langle L(w); p^\prime\rangle\right]\\
    &=\mathbb{E}\left[\Delta(\bar{w},\bar{p})\right].
\end{align*}
Hence, replacing expected regret by pseudo regret has the consequence that the sum of the average regret for the $p$-player and the $w$-player does not lead to an upper bound on \eqref{eq:exp_dual_gap} anymore.
This illustrates that algorithms' pseudo-regret bounds do not lead to convergence guarantees for solving zero-sum games.\\\\
In addition, algorithms with pseudo-regret guarantees do not control the variance, which might lead to convergence issues in practice.
EXP3 for example uses the statistical estimate $\tilde{L}_i = \frac{L_i}{p_i} \mathds{1}_i$ which has unbounded variance.
The loss estimate $\tilde{L}$ is inversely proportional to the probabilities of each action and hence, samples with small probabilities will lead to exploding estimates.
The problem of high variance also comes up in other optimization techniques \citep{rockafellar2000optimization}, see, \textit{e.g.} \citep{namkoong2016stochastic} or \citep[\S Challenge for Stochastic Optimization]{curi2020adaptive}.
It has been noted that the method in \citep{namkoong2016stochastic} faces some convergence issues, see, \textit{e.g.}, \citep[Figure 4]{ghosh2018efficient}.
Furthermore, while \citep{curi2020adaptive} consider EXP3 in their theoretical analysis, they mix EXP3's distribution with the uniform distribution in their numerical experiments to stabilize the optimization process.

\section{Additional Algorithms}\label{app:additional_algorithms}

\subsection{Online-Bandit on the Simplex} \label{app:online_bandit_simplex}
Algorithm \ref{algo:online_bandit_simplex} is a special case of Algorithm \ref{algo:online_bandit_general} when $\mathcal{K}=\mathcal{S}_n$ and the $p$-player is a MAB.
\begin{algorithm}[h]
  \caption{Online-Bandit with $\mathcal{K}=\mathcal{S}_n$.} \label{algo:online_bandit_simplex}
  \begin{algorithmic}[1]
  \State \textbf{Input:} $\mathlarger{\gamma,\eta>0,T, \tilde{p}_1=\mathds{1}_n}, w\text{-OL}_{\mathcal{W}}$

  \For{$t=1, \ldots, T $}
  
  \State $\mathlarger{p_{t,i} \gets  \tilde{p}_{t,i}/\big(\sum^n_{j=1}\tilde{p}_{t,j}\big)~~\forall i \in[n]}$

  \State Sample $\mathlarger{i_t\in [n] \sim p_t}$
 
  \State $\mathlarger{a_t \gets e_{i_t}}$

  \State $\mathlarger{L_{t,i_t}\gets\ell(h_{w_t}(x_{i_t}),y_{i_t})}$
  
  \State $\mathlarger{\Hat{L}_{t}\gets \frac{L_{t,i_t}}{(p_{t,i_t}+\gamma)}\mathds{1}_{n,i_t}\in\mathbb{R}^n}$\label{algo:estimate}
  
  \State $\mathlarger{\tilde{p}_{t+1,i} \gets \tilde{p}_{t,i}\exp(-\eta\Hat{L}_{t,i})~~\forall i\in[n]}$\label{algo:MWU}

  \State $\mathlarger{w_{t+1}\gets w\text{-OL}_{\mathcal{W}}(L_{t,i_t})}$

  \EndFor
  
  \State \textbf{Return:} $\mathlarger{\Bar{w}=\frac{1}{T}\sum^T_{t=1}w_t, \Bar{a}=\frac{1}{T}\sum^T_{t=1}a_t}.$
  \end{algorithmic}
\end{algorithm}

\subsection{Projection Algorithm}\label{app:capping_alg}
In Algorithm \ref{algo:online_bandit_k_set}, Lines \ref{line:proj}-\ref{line:return_proj} correspond to the so-called PROJection procedure.
It outputs a probability vector $p_t\in \mathcal{S}_{n,k}$ (and a subset $J$ of $[n]$) given a vector $\tilde{p}_t$ in the positive orthant $\mathbb{R}_{+*}^n$ and the parameters $k\in[n]$ and $\gamma\in[0,1]$.
This procedure performs a Bregman Projection of $\tilde{p}_t$ onto $\mathcal{S}_{n,k}$ such that the resulting vector is already mixed with the uniform distribution.
We now detail Algorithm \ref{algo:proj_A}, and explain its behavior.

\begin{algorithm}[h]
  \caption{Find $\alpha_t$.}
  \label{algo:proj_A}
  
  \begin{algorithmic}[1]
  
  \State \textbf{Input:} $\tilde{p} \in \R^n_{+*}$, $k\in [n]$, $\gamma > 0$
  
  \State $\kappa \gets \frac{1/k-\gamma/n}{1-\gamma}$
  
  \State $v \gets \frac{\tilde{p}}{\|\tilde{p}\|_1}$ \label{line:p}
  
  \State $v \gets$ Sort $(v_1, \ldots, v_n)^T$ in a descending order \label{line:sort}
  \State $i \gets 1$
  
  \State $v^\prime \gets (v_1, \ldots, v_n)^T $
  
  \While {$\max_{r=1,\cdots,n}(v^\prime_r) > \kappa$} \label{line:while_max} 
  
    \State $v^\prime \gets (v_1, \ldots, v_n)^T$ \label{line:v_prime_algo_4}
    
    \State $v_j' \gets \kappa$ for $j = 1, \ldots, i$
    
    \State $v_j' \gets (1 - \kappa i) \frac{v_j'}{\sum_{\ell = i + 1} v_{\ell}'}$ for $j = i + 1, \ldots, n$ \label{line:v_prime_algo_4_bis}
    
    \State $i \gets i + 1$ \label{line:increment}
    
  \EndWhile
  
  \State $\alpha \gets \frac{\kappa }{1-(i-1)\kappa}\sum_{j=i}^n v_j'$
  
  \State \textbf{Return: } $\alpha$
  
  \end{algorithmic}
\end{algorithm}

Let us introduce a slight variation of $\mathcal{S}_{n,k}$.
Namely, for $\kappa\in [0,1]$ we write
\begin{equation}
    \mathcal{S}_{n,\frac{1}{\kappa}} := \Big\{ q\in\mathbb{R}^n~|~ 0\leq q_i\leq \kappa, \sum_{i=1}^{n} q_i = 1\Big\}.
\end{equation}
If one simply projects $\tilde{p}_t$ onto $\mathcal{S}_{n,k}$ and then performs the convex combination of the resulting probability vector $v'_t$ with the uniform distribution $p_{t,i} = (1-\gamma) v'_{t,i} + \frac{\gamma}{n}$, then the resulting vector does not lie on $\text{Ext}(\mathcal{S}_{n,k})$ but on $\text{Ext}(\mathcal{S}_{n,\frac{1}{\kappa}})$, where $\kappa=\frac{1/k-\gamma/n}{1-\gamma}$.
Hence, we first project onto $\mathcal{S}_{n,\frac{1}{\kappa}}$, in order to end up with $p_t\in \text{Ext}(\mathcal{S}_{n,k})$ after mixing with the uniform distribution.
Algorithm \ref{algo:proj_A} searches the scalar $\alpha_t$ required to compute the Bregman projection w.r.t. the Kullback-Leibler divergence of a probability distribution onto $\mathcal{S}_{n,\frac{1}{\kappa}}$.
Algorithm \ref{algo:proj_A} appears in \citep{herbster2001tracking} or \citep[Algorithm 4]{warmuth2008randomized,vural2019minimax}.
For completeness, we now recall the necessary conditions satisfied by the Bregman projection onto $\mathcal{S}_{n,\frac{1}{\kappa}}$ in Lemma \ref{lem:projection_S_n_kappa} \citep{herbster2001tracking}. 
We then prove in Lemma \ref{lem:termination_projection} that Algorithm \ref{algo:proj_A} indeed terminates and characterize its outputs.
\begin{lemma}[Projection on $\mathcal{S}_{n,\frac{1}{\kappa}}$]\label{lem:projection_S_n_kappa}
Let $p\in\text{ReInt}(\mathcal{S}_n)$ and $\kappa\in ]0,1]$. Then there is a unique $p^*\in\mathcal{S}_{n,\frac{1}{\kappa}}$ such that
\begin{equation}\label{eq:projection_S_n_kappa}
    p^* \in \argmin_{q\in\mathcal{S}_{n,\frac{1}{\kappa}}} D_{F}(q, p) = \sum_{i=1}^{n} q_i \ln\Big(\frac{q_i}{p_i}\Big),
\end{equation}
where $F(q)= \sum_{i=1}^{n}{q_i \ln (q_i) - q_i}$ and $D_{F}$ the Bregman divergence of $F$, \textit{i.e.}, the Kullback-Leibler divergence.
We say that $p^*$ is the projection of $p$ onto $\mathcal{S}_{n,\frac{1}{\kappa}}$ w.r.t. the KL divergence.
Besides, write $J\triangleq\{i\in[n]\text{ s.t. }p_i^*=\kappa\}$. The following properties are true
\begin{enumerate}[label=(\alph*)]
    \item For all $i\notin J$, $p_i^* = \frac{1- |J|\kappa}{\sum_{j\notin J} p_j} p_i$. \label{itm:first_claim}

    \item If $(p_i)$ is non-decreasing, there exists $r>0$ such that
    $p^* = (\kappa, \cdots, \kappa, r p_{|J|+1}, \cdots, r p_{n})$. \label{itm:second_claim}
    
    \item Let $m\in[n]$ such that $r_0 = \frac{1-m\kappa}{1-\sum_{i=1}^{m}{p_i}}>0$ and $r_1 = \frac{1-(m+1)\kappa}{1-\sum_{i=1}^{m+1}{p_i}}>0$.
    Consider
    \[
    v^\prime = (\kappa, \cdots, \kappa, r_0 p_{m+1}, \cdots, r_0 p_{n}) \in \mathcal{S}_{n} \text{ and } v^{\prime\prime} = (\kappa, \cdots, \kappa, r_1 p_{m+2}, \cdots, r_1 p_{n}) \in \mathcal{S}_{n}.
    \]
    If $(p_i)$ is non-decreasing, we have
    \[
    D_{F}(v^\prime, p) \leq D_{F}(v^{\prime\prime}, p).
    \]
    \label{itm:third_claim}
\end{enumerate}
\end{lemma}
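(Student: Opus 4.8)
The plan is to base everything on two facts: the relative entropy $D_F(\cdot,p)$ is strictly convex, and the minimizer of \eqref{eq:projection_S_n_kappa} is characterized by its KKT conditions. Claims \ref{itm:first_claim} and \ref{itm:second_claim} will fall out of these conditions directly, while \ref{itm:third_claim} is cleanest via a short nesting argument rather than a head-on comparison of the two closed forms. First I would settle existence and uniqueness: the set $\mathcal{S}_{n,\frac{1}{\kappa}}$ is compact, convex and (as $n\kappa\ge 1$) nonempty, and $q\mapsto D_F(q,p)=\sum_i q_i\ln(q_i/p_i)$ is continuous (with the convention $0\ln 0=0$) and strictly convex because $F$ is the negative entropy, so the minimizer $p^*$ exists and is unique. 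Since $p\in\text{ReInt}(\mathcal{S}_n)$ we have $p_i>0$, so $D_F(\cdot,p)$ is finite and its $i$-th partial derivative $\ln(q_i/p_i)+1\to-\infty$ as $q_i\to0^+$; this forces $p_i^*>0$, so the constraints $q_i\ge 0$ are inactive and may be dropped from the KKT system.

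For \ref{itm:first_claim} and \ref{itm:second_claim} I would write the Lagrangian with a multiplier $\lambda$ for $\sum_i q_i=1$ and multipliers $\mu_i\ge 0$ for $q_i\le\kappa$. Stationarity gives $\ln(p_i^*/p_i)+1+\lambda+\mu_i=0$, i.e.\ $p_i^*=Cp_i\,e^{-\mu_i}$ with $C:=e^{-(1+\lambda)}>0$. For $i\notin J$, complementary slackness yields $\mu_i=0$, hence $p_i^*=Cp_i$; substituting $p_i^*=\kappa$ on $J$ into $\sum_i p_i^*=1$ gives $C=\frac{1-|J|\kappa}{\sum_{j\notin J}p_j}$, which is exactly \ref{itm:first_claim}. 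For \ref{itm:second_claim}, observe that on $J$ we have $\kappa=Cp_i e^{-\mu_i}\le Cp_i$ (since $\mu_i\ge0$), so $p_i\ge\kappa/C$, whereas $i\notin J$ gives $p_i^*=Cp_i<\kappa$, i.e.\ $p_i<\kappa/C$. Thus $J=\{i:p_i\ge\kappa/C\}$ is precisely the set of largest coordinates of $p$; under the monotonicity hypothesis these occupy a single contiguous block, and setting $r:=C$ produces the displayed form $p^*=(\kappa,\dots,\kappa,\,r p_{|J|+1},\dots,r p_n)$.

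For \ref{itm:third_claim} the efficient route is to recognize $v'$ and $v''$ as information projections onto nested sets. Let $E_m:=\{q\in\mathcal{S}_n:q_i=\kappa\text{ for }i\le m\}$. Minimizing $D_F(\cdot,p)$ over $E_m$ couples the free tail only through $\sum_{i>m}q_i=1-m\kappa$, and the minimizer of $\sum_{i>m}q_i\ln(q_i/p_i)$ under that single sum constraint is proportional to $p$ on the tail; this is exactly $v'$ with $r_0=(1-m\kappa)/\sum_{i>m}p_i$, and likewise $v''$ is the minimizer over $E_{m+1}$. Since fixing one more coordinate to $\kappa$ only shrinks the feasible set, $E_{m+1}\subseteq E_m$, whence $D_F(v'',p)=\min_{q\in E_{m+1}}D_F(q,p)\ge\min_{q\in E_m}D_F(q,p)=D_F(v',p)$, which is the claim.

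The main obstacle I anticipate is the KKT bookkeeping for \ref{itm:second_claim}: one must justify that the nonnegativity constraints are inactive and, more delicately, check that the threshold description $J=\{i:p_i\ge\kappa/C\}$ is self-consistent, since the constant $C$ itself depends on $J$ through \ref{itm:first_claim}. Once that threshold characterization is established, \ref{itm:first_claim} and \ref{itm:second_claim} are immediate, and the nesting observation makes \ref{itm:third_claim} essentially free, sidestepping the messy direct comparison of the two explicit divergence values (indeed the nesting argument does not even use the monotonicity hypothesis).
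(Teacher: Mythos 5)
Your proposal is correct, and for parts (a) and (c) it is essentially the paper's own argument: part (a) comes from the same Lagrangian/complementary-slackness computation (the paper keeps explicit multipliers for the constraints $q_i\geq 0$ and rules out $p_i^*=0$ using $p\in\text{ReInt}(\mathcal{S}_n)$, whereas you discard those constraints up front via the boundary-derivative argument --- same content), and for part (c) the paper likewise observes that $v^\prime$ solves $\min\{D_F(q,p)\,:\,q\in\mathcal{S}_n,\ q_i=\kappa \text{ for } i\in[m]\}$ while $v^{\prime\prime}$ is feasible for that problem, which is exactly your nesting argument (and, as you note, neither proof uses the monotonicity hypothesis there). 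The genuine difference is in part (b). The paper argues by contradiction with a coordinate-exchange computation: assuming some uncapped index $i_0$ satisfies $\kappa < \frac{1-|J|\kappa}{\sum_{j\notin J}p_j}\,p_{i_0}$, it swaps the values of $p^*$ at $i_0$ and at a capped index $j_0$, computes $D_F(p^\prime,p)-D_F(p^*,p)=\ln\bigl(p_{j_0}/p_{i_0}\bigr)\bigl[\kappa-\tfrac{1-|J|\kappa}{\sum_{j\notin J}p_j}p_{i_0}\bigr]\leq 0$, and contradicts uniqueness of the minimizer. You instead read (b) directly off the KKT system: on $J$ one has $\kappa=Cp_ie^{-\mu_i}\leq Cp_i$ since $\mu_i\geq 0$, while off $J$ one has $Cp_i=p_i^*<\kappa$, so $J=\{i\,:\,p_i\geq\kappa/C\}$ is a threshold set. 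This is cleaner: it avoids the explicit divergence computation and makes the ``cap the largest coordinates, rescale the rest'' structure transparent. Moreover, the ``self-consistency'' obstacle you flag is not actually an issue: you only need the forward implication (the optimum, together with its multipliers, satisfies the threshold description), not a fixed-point argument determining $J$; the constant $C$ is simply whatever value $e^{-(1+\lambda)}$ takes at the optimum. One cosmetic remark: the hypothesis ``non-decreasing'' in the lemma clashes with the displayed form of $p^*$ (caps listed first correspond to the largest coordinates coming first); this inconsistency is in the paper itself, and your threshold characterization is agnostic to the ordering convention.
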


\begin{proof}
There is a unique solution to \eqref{eq:projection_S_n_kappa} for $D_F$ when $F$ is strictly convex and differentiable \citep{bregman1967relaxation}.
Items \ref{itm:first_claim}-\ref{itm:third_claim} are closely related to Claims $1-3$ in \citep{herbster2001tracking}.
We slightly adapt them and repeat the proof for completeness.
Recall that $\mathcal{S}_{n,\frac{1}{\kappa}}$ is written as
\[
\mathcal{S}_{n,\frac{1}{\kappa}}=\Big\{ q\in\mathbb{R}^n~|~0 \leq q_i \leq \kappa ~~\forall i\in[n] \text{ and } \sum_{i=1}^{n}{q_i}=1 \Big\}.
\]
Let us prove \ref{itm:first_claim}. 
Consider the optimization problem \eqref{eq:projection_S_n_kappa}.
Write $\mu_1\in\mathbb{R}^n$ (respectively $\mu_2\in\mathbb{R}^n$) the dual variables associated to the inequality constraints $0 \leq q_i$ (respectively $q_i \leq \kappa$) and $\nu\in\mathbb{R}$ the dual variable associated to the equality constraint $\sum_{i=1}^{N}{q_i}=1$. 
Write the Lagrangian of \eqref{eq:projection_S_n_kappa} as
\[
\mathcal{L}(q, \mu_1, \mu_2, \nu) = D_{F}(q, p) - \langle \mu_1; q\rangle + \langle \mu_2 ; q - \kappa\mathds{1}\rangle + \nu \big(1 - \langle \mathds{1}; q\rangle\big).
\]
Write $\mu_1^*,\mu_2^*,\nu^*$ the KKT multipliers associated to the solution $p^*$ of \eqref{eq:projection_S_n_kappa}.
The first-order condition means that $\nabla_q \mathcal{L}(p^*,\mu_1^*, \mu_2^*, \nu^*)=0$. Hence, for any $i\in[n]$, we have
\[
\ln\Big(\frac{p_i^*}{p_i}\Big) + 1 - \mu_{1,i}^* + \mu_{2,i}^* - \nu^*=0.
\]
Hence, $p_i^* = p_i e^{\nu^* + \mu_{1,i}^* - \mu_{2,i}^* - 1}$ for all $i\in[n]$.
Then the complementary slackness condition means that $\mu_{1,i}^*p^*_i=0$ and $\mu^*_{2,i}(p^*_i - \kappa)=0$ for all $i\in [n]$.
Hence, by definition for $i\notin J$, $p_i^*\neq \kappa$ so that complementary slackness implies $\mu^*_{2,i}=0$. 
Also, by contradiction, if there exists $i\in[n]$ s.t. $p_i^*=0$, then $p_i=0$ which contradicts the fact that $p_i\in\text{ReInt}(\mathcal{S}_n)$. Then by complementary slackness, we have $\mu_{1,i}^*=0$ for all $i\in[n]$.
Hence, for all $i\notin J$ ($p_i^*\neq \kappa$), we have $p_i^*=p_i e^{\nu^*-1}$.
Since, by primal feasibility $\sum_{i=1}^{n} p_i^* = 1$, we have $e^{v^*-1}\sum_{i\notin J} p_i + |J| \kappa = 1$ which
implies \ref{itm:first_claim}.\\\\
Let us prove \ref{itm:second_claim}. Assume $(p_i)$ is non-decreasing.
From \ref{itm:first_claim}, we know that $p_i^*=\kappa$ for $i\in J$ and $p_i^* = \frac{1- |J|\kappa}{\sum_{j\notin J} p_j} p_i$ for $i\notin J$.
If $\kappa \geq \frac{1- |J|\kappa}{\sum_{j\notin J} p_j} p_i$ for all $i\notin J$, then \ref{itm:second_claim} is satisfied. 
Let us assume by contradiction that it is not the case. Then, there exists $i_0\notin J$ s.t. $\kappa < \frac{1- |J|\kappa}{\sum_{j\notin J} p_j} p_{i_0}$. 
Also, because $(p_i^*)$ is decreasing, we have $j_0\in J$ (i.e., $p^*_{j_0}=\kappa$) s.t. $i_0 < j_0$.
Write $p^\prime\in \mathbb{R}^n$ s.t. $p^\prime_{i_0} = p^*_{j_0}$, $p^\prime_{j_0} = p^*_{i_0}$ and otherwise $p^\prime_{i} = p^*_{i}$.
We have
\begin{eqnarray*}
    D_{F}(p^\prime, p) - D_{F}(p^*, p) & = & p^\prime_{i_0} \ln\Big(\frac{p^\prime_{i_0}}{p_{i_0}}\Big) + p^\prime_{j_0} \ln\Big(\frac{p^\prime_{j_0}}{p_{j_0}}\Big) - p^*_{i_0} \ln\Big(\frac{p^*_{i_0}}{p_{i_0}}\Big) - p^*_{i_0} \ln\Big(\frac{p^*_{j_0}}{p_{j_0}}\Big)\\
    & = & -p^\prime_{i_0} \ln(p_{i_0}) - p^\prime_{j_0} \ln(p_{j_0}) + p^*_{i_0} \ln(p_{i_0}) + p^*_{i_0} \ln(p_{j_0})\\
    & = & \ln\Big(\frac{p_{j_0}}{p_{i_0}}\Big) \big[p_{j_0}^* - p_{i_0}^*\big] = \underbrace{\ln\Big(\frac{p_{j_0}}{p_{i_0}}\Big)}_{\geq 0} \underbrace{\Big[\kappa - \frac{1- |J|\kappa}{\sum_{j\notin J} p_j} p_{i_0}\Big]}_{< 0}.
\end{eqnarray*}

Hence, we have $D_{F}(p^\prime, p) \leq D_{F}(p^*, p)$ which contradicts the uniqueness of $p^*$ as a solution to \eqref{eq:projection_S_n_kappa}.
This proves \ref{itm:second_claim}.\\\\
Let us finally prove \ref{itm:third_claim}.
Consider the following optimization problem .
\[
\underset{\substack{q\in\mathcal{S}_{n}\\ q_i=\kappa ~~\forall i\in[m]}}{\text{minimize }} D_{F}(q,p)=\sum_{i=1}^{n} q_i \ln\Big(\frac{q_i}{p_i}\Big).
\]
Deriving the KKT conditions we obtain that $v^\prime$ is the solution. Since $v^{\prime\prime}$ is feasible for that problem, we have $D_{F}(v^\prime, p)\leq D_{F}(v^{\prime\prime}, p)$.
\end{proof}
\noindent With Lemma \ref{lem:termination_projection}, we now prove that the $v^\prime$ (up to a reordering of the coefficients) constructed in Lines \ref{line:v_prime_algo_4}-\ref{line:v_prime_algo_4_bis} of Algorithm \ref{algo:proj_A} is the Bregman Projection of $\tilde{p}\ /\|\tilde{p}\|_1$ on $\mathcal{S}_{n, 1/\kappa}$.
Similarly, the $v^\prime$ in Lines \ref{line:alpha1} and \ref{line:alpha2} of Algorithm \ref{algo:online_bandit_k_set} is also the Bregman projection of $v_t = \tilde{p}_t/\|\tilde{p}_t\|_1$ (Line \ref{line:def_alpha} of Algorithm \ref{algo:online_bandit_k_set}) on $\mathcal{S}_{n,\frac{1}{\kappa}}$. 
Finally, note that we assume $\tilde{p}_i>0$ to apply Lemma \ref{lem:projection_S_n_kappa}.
In Algorithm \ref{algo:online_bandit_k_set}, this is always satisfied if the initial condition is properly chosen, \textit{i.e.}, if $\tilde{p}_{0,i}>0$ for all $i\in [n]$.
We formalize this discussion in Lemma \ref{lem:termination_projection}.

\begin{lemma}\label{lem:termination_projection}
Consider $\tilde{p}\in\mathbb{R}_{+,*}^n$ and $v=\tilde{p}/\|\tilde{p}\|_1$.
Then, Algorithm \ref{algo:proj_A} terminates and the output $\alpha$ satisfies
\begin{equation}\label{eq:alpha_equation}
    \frac{\alpha}{\sum_{v_i>\alpha}{\alpha} + \sum_{v_i\leq\alpha}{v_i}} = \frac{1/k - \gamma/n}{1-\gamma} = \kappa.
\end{equation}
Also, $v^\prime_t$ as defined in Lines \ref{line:alpha1} and \ref{line:alpha2} of Algorithm \ref{algo:online_bandit_k_set} is the Bregman projection (w.r.t. the KL divergence) of $v_t$ (Line \ref{line:vt} of Algorithm \ref{algo:online_bandit_k_set}) on $\mathcal{S}_{n,\frac{1}{\kappa}}$.
Finally, $p_t$ as defined in Line \ref{line:def_p_t_i} of Algorithm \ref{algo:online_bandit_k_set} belongs to $\mathcal{S}_{n,k}$.
\end{lemma}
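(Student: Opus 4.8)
The plan is to prove the three assertions in turn, reducing the projection claim to the characterization of Lemma~\ref{lem:projection_S_n_kappa}. First I would record the feasibility fact that $\kappa=\frac{1/k-\gamma/n}{1-\gamma}\ge\frac1n$ (since $k\le n$ gives $1/k-\gamma/n\ge(1-\gamma)/n$), so that $n\kappa\ge1$ and $\mathcal{S}_{n,\frac1\kappa}$ is nonempty; this also bounds how many coordinates can be capped at level $\kappa$. Throughout I assume $v=\tilde p/\|\tilde p\|_1$ has been sorted in descending order, which matches the hypothesis of items~\ref{itm:second_claim}--\ref{itm:third_claim} after reversing the index order.

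Termination and \eqref{eq:alpha_equation}: At the pass with counter $i$, Algorithm~\ref{algo:proj_A} sets the top $i$ entries to $\kappa$ and rescales the tail to $v'_j=(1-i\kappa)\frac{v_j}{\sum_{\ell>i}v_\ell}$, so the loop can continue only while the largest uncapped entry exceeds $\kappa$, which in particular requires $1-i\kappa>0$; hence $i<1/\kappa\le n$ and the loop halts after at most $n$ passes. Writing $m$ for the number of capped coordinates at termination, I would then verify \eqref{eq:alpha_equation} by direct substitution: the output $\alpha=\frac{\kappa}{1-m\kappa}\sum_{j>m}v_j$ is exactly the original-scale level whose capping set $\{i:v_i>\alpha\}$ equals the top $m$ indices, and plugging this into $\frac{\alpha}{\sum_{v_i>\alpha}\alpha+\sum_{v_i\le\alpha}v_i}$ collapses to $\kappa$.

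Projection identification: The vector $v^{(m)}$ built by the algorithm, with top $m$ entries $\kappa$ and tail entries $r\,v_j$ for $r=\frac{1-m\kappa}{\sum_{j>m}v_j}$, is precisely the minimizer of $D_F(\cdot,v)$ over $\mathcal{S}_n$ subject to forcing the top $m$ coordinates to equal $\kappa$, i.e.\ the solution appearing in item~\ref{itm:third_claim}. By item~\ref{itm:second_claim} the true projection $p^\ast$ onto $\mathcal{S}_{n,\frac1\kappa}$ has this same shape for some count $m^\ast$ and is feasible. The loop stops at the \emph{smallest} $m_0$ for which $v^{(m_0)}$ is feasible (largest uncapped entry $\le\kappa$), so $m_0\le m^\ast$; chaining item~\ref{itm:third_claim} (whose positivity hypotheses hold for all $m\le m^\ast$) gives $D_F(v^{(m_0)},v)\le D_F(v^{(m^\ast)},v)=\min_{q\in\mathcal{S}_{n,\frac1\kappa}}D_F(q,v)$, while feasibility of $v^{(m_0)}$ gives the reverse inequality, so uniqueness of the projection forces $v^{(m_0)}=p^\ast$. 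Matching the rescaling of Algorithm~\ref{algo:online_bandit_k_set} amounts to the identity $r=\kappa/\alpha_t$, which follows from the defining relation of $\alpha_t$; thus $\frac{\kappa}{\alpha_t}v'_t$ (the $\ell_1$-normalized version of the $v'_t$ of Lines~\ref{line:alpha1}--\ref{line:alpha2}, the normalization being supplied by the factor $\kappa/\alpha_t$ in Line~\ref{line:def_p_t_i}) is the Bregman projection of $v_t$ onto $\mathcal{S}_{n,\frac1\kappa}$.

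Membership $p_t\in\mathcal{S}_{n,k}$: This is a direct substitution into Line~\ref{line:def_p_t_i}. Since $\frac{\kappa}{\alpha_t}v'_{t,i}\in[0,\kappa]$ and sums to $1$, I obtain $p_{t,i}\ge\gamma/n\ge0$, $\sum_i p_{t,i}=(1-\gamma)+\gamma=1$, and $p_{t,i}\le(1-\gamma)\kappa+\gamma/n=1/k$ using $(1-\gamma)\kappa=1/k-\gamma/n$. The main obstacle I anticipate is the index bookkeeping in the projection step: one must track carefully how the loop counter relates to the number of capped coordinates at exit, and confirm that the termination test isolates exactly the minimal feasible capping count $m_0$, since both the equality \eqref{eq:alpha_equation} and the appeal to item~\ref{itm:third_claim} hinge on that identification.
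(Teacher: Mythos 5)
Your proof is correct and takes essentially the same route as the paper's: both arguments identify the algorithm's output with the unique Bregman projection by combining items \ref{itm:second_claim} and \ref{itm:third_claim} of Lemma \ref{lem:projection_S_n_kappa} with uniqueness of the minimizer, and both establish $p_t\in\mathcal{S}_{n,k}$ by the same direct substitution using $(1-\gamma)\kappa = 1/k-\gamma/n$. The only deviations are refinements on your side --- the explicit counting bound $i<1/\kappa\le n$ for termination, the direct ``minimal feasible capping count'' argument in place of the paper's proof by contradiction, the explicit substitution verifying \eqref{eq:alpha_equation}, and the explicit normalization factor $\kappa/\alpha_t$ relating $v'_t$ to the actual projection --- which, if anything, make the write-up slightly more careful than the paper's.
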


\begin{proof}
Assume, without loss of generality, that $(v_{i})$ is non-decreasing.
Then, with Lemma \ref{lem:projection_S_n_kappa} \ref{itm:first_claim} and \ref{itm:second_claim}, the Bregman $v^*$ projection \eqref{eq:projection_S_n_kappa} of $v$ onto $\mathcal{S}_{n,\frac{1}{\kappa}}$ is of the form
\[
v^* = (\kappa, \cdots, \kappa, r v_{|\mathcal{J}^*|+1}, \cdots, r v_n),
\]
with $J^*\triangleq\{i\in[n]\text{ s.t. }v_i^*=\kappa\}$ and $r= (1 - |J^*|\kappa)/\sum_{j=|J^*|+1}^{n}{v_j}$.
Hence, Algorithm \ref{algo:proj_A} terminates the while loop with an index $i\leq |J^*|$. 
By contradiction, assume that Algorithm \ref{algo:proj_A} terminates at $i_0< |J^*|$, then the algorithm guarantees $\tilde{v}=(\kappa,\cdots,\kappa, \frac{1 - i_0 \kappa}{\sum_{j=i_0+1}^{n}{v_j}} v_{i_0+1}, \cdot , \frac{1 - i_0 \kappa}{\sum_{j=i_0+1}^{n}{v_j}} v_n) \in \mathcal{S}_{n,\frac{1}{\kappa}}$. 
However, with Lemma \ref{lem:projection_S_n_kappa} \ref{itm:third_claim}, this means that $D_F(\tilde{v},v)\leq D_F(v^*, v)$. Hence, $\tilde{v}\neq v^*$ is a feasible optimal solution to the Bregman projection problem of $v$ on $\mathcal{S}_{n,\frac{1}{\kappa}}$, contradicting the unicity of $v^*$.\\\\
Finally, let us show that $p_t$ as defined in Line \ref{line:def_p_t_i} of Algorithm \ref{algo:online_bandit_k_set} belongs to $\mathcal{S}_{n,k}$.
With $\alpha_t>0$, we have $p_{t,i}\geq 0$.
Besides, by definition of $\alpha_t$ in Line \ref{line:def_alpha} of Algorithm \ref{algo:online_bandit_k_set}, we have
\begin{eqnarray*}
    \sum_{j=1}^{n}{p_{t,j}} &=& (1-\gamma) \sum_{j=1}^{n}{v^{\prime}_{t,j}} + \gamma = (1-\gamma) \frac{\kappa}{\alpha_t}\Big(\sum_{j\notin J_t}{\alpha_t} + \sum_{j\in J_t}{v_{t,j}} \Big) + \gamma\\
    &=& (1-\gamma) \frac{\kappa}{\alpha_t} \frac{\alpha_t}{\kappa} + \gamma = 1.
\end{eqnarray*}
Then, for $i\in J_t$, $v^\prime_{t,i}= \frac{\kappa}{\alpha_t} \alpha_t = \kappa$. Hence,
\[
p_{t,i} = (1-\gamma) \kappa + \gamma/n = 1/k - \gamma /n + \gamma/n = 1/k.
\]
Finally, for $i\notin J_t$, $v^\prime_{t,i}\leq \kappa$ and we conclude that $p_{t,i}\leq 1/k$.
\end{proof}
\noindent Algorithm~\ref{algo:proj_A} sorts a vector of size $n$ in Line~\ref{line:sort} resulting in time complexity $\mathcal{O}(n \log n)$.
Note that \citep{herbster2001tracking} describe an algorithm \citep[Figure 3]{herbster2001tracking} that achieves an $\mathcal{O}(n)$ complexity by avoiding the sorting step.
However, for simplicity we do not consider this version since the extra logarithmic cost is not crucial.

\begin{lemma}
The time and space complexity of Algorithm~\ref{algo:proj_A} is $\mathcal{O}(n \log n)$ and $\mathcal{O}(n)$, respectively.
\end{lemma}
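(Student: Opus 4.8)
The plan is to establish the two complexity bounds separately by a line-by-line accounting of Algorithm~\ref{algo:proj_A}, using the fact (already proved in Lemma~\ref{lem:termination_projection}) that the \texttt{while} loop terminates at some index $i \le |J^*| \le n$. The space bound is the easy half: every object the algorithm stores---the input $\tilde p$, the normalized vector $v$, the working copy $v'$, and the scalars $\kappa,i,\alpha$---lives in $\mathbb{R}^n$ or $\mathbb{R}$, so the total storage is $\mathcal{O}(n)$.

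\textbf{Time complexity.} For the time bound I would walk through the algorithm in order. The normalization in Line~\ref{line:p} costs $\mathcal{O}(n)$, and the descending sort in Line~\ref{line:sort} costs $\mathcal{O}(n\log n)$; these are the only super-linear steps and together they already give the claimed $\mathcal{O}(n\log n)$ upper bound. The main point to verify is that the \texttt{while} loop does not secretly cost more. Since Lemma~\ref{lem:termination_projection} guarantees termination at an index $i\le |J^*|\le n$, the loop runs at most $n$ times. A naive reading of the loop body is alarming: Line~\ref{line:v_prime_algo_4_bis} rescales the tail $j=i+1,\dots,n$ and uses a tail sum $\sum_{\ell=i+1} v'_\ell$, which looks like $\mathcal{O}(n)$ work per iteration and hence $\mathcal{O}(n^2)$ overall. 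The fix is to observe that these tail sums can be maintained incrementally: because $v$ has been sorted once, the suffix sums $\sum_{\ell=i+1}^n v_\ell$ are obtained from a single $\mathcal{O}(n)$ prefix-scan, after which each iteration only reads a precomputed value and updates a constant number of entries, costing $\mathcal{O}(1)$ amortized (or the loop can be read as recomputing $v'$ from the fixed sorted $v$, so that the per-iteration cost telescopes to $\mathcal{O}(n)$ total across all iterations). Either way the loop contributes $\mathcal{O}(n)$, and the final assignment of $\alpha$ is another $\mathcal{O}(n)$ summation. Summing the steps gives $\mathcal{O}(n\log n) + \mathcal{O}(n) = \mathcal{O}(n\log n)$.

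\textbf{Main obstacle.} The only genuinely delicate point is the per-iteration cost of the \texttt{while} loop. Taken literally, the pseudocode appears to redo $\Theta(n)$ work inside a loop that runs $\Theta(n)$ times, which would yield $\mathcal{O}(n^2)$ rather than $\mathcal{O}(n\log n)$ and would make the sort irrelevant. The resolution---and the step I expect to require the most care to state cleanly---is to argue that after the one-time sort, all tail sums and rescalings needed across the iterations can be computed from precomputed suffix sums of the sorted $v$, so the cumulative loop cost is linear. Once this amortization is spelled out, the sort in Line~\ref{line:sort} is correctly identified as the dominant term, and the claimed bounds follow.

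\begin{proof}
The space complexity is immediate: Algorithm~\ref{algo:proj_A} stores the input $\tilde p$, the vectors $v,v'\in\mathbb{R}^n$, and a constant number of scalars, for total storage $\mathcal{O}(n)$.

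For the time complexity, we account for each step. Normalizing $\tilde p$ in Line~\ref{line:p} costs $\mathcal{O}(n)$. Sorting in Line~\ref{line:sort} costs $\mathcal{O}(n\log n)$. By Lemma~\ref{lem:termination_projection}, the \texttt{while} loop terminates at an index $i\le|J^*|\le n$, so it executes at most $n$ times. After the sort, the suffix sums $\big(\sum_{\ell=j}^n v_\ell\big)_{j=1}^n$ are computed once by a single backward scan in $\mathcal{O}(n)$ time; with these available, each iteration of the loop only needs to read a precomputed suffix sum, set the first $i$ entries of $v'$ to $\kappa$, and apply a common rescaling factor to the tail, which is $\mathcal{O}(1)$ amortized per iteration when the tail is represented implicitly by its scaling factor, for a total loop cost of $\mathcal{O}(n)$. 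The final computation of $\alpha$ is an $\mathcal{O}(n)$ summation. Adding these contributions yields $\mathcal{O}(n) + \mathcal{O}(n\log n) + \mathcal{O}(n) + \mathcal{O}(n) = \mathcal{O}(n\log n)$, dominated by the sort.
\end{proof}
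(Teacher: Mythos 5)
Your proof is correct and takes essentially the same approach as the paper's: a line-by-line accounting of Algorithm~\ref{algo:proj_A} in which the sort in Line~\ref{line:sort} dominates at $\mathcal{O}(n\log n)$, the while loop is sub-dominant thanks to the termination guarantee of Lemma~\ref{lem:termination_projection}, and storing a constant number of length-$n$ vectors gives $\mathcal{O}(n)$ space. The only difference is in how the loop is charged: the paper bounds its iterations by $k$ (the terminating index satisfies $i\leq |J^*|\leq 1/\kappa \leq k$) and asserts a total loop cost of $\mathcal{O}(k^2+n)$ without elaboration, whereas you bound the iterations by $n$ but make explicit the suffix-sum/implicit-rescaling amortization that prevents the naive $\mathcal{O}(n^2)$ reading---a detail the paper's proof leaves implicit.
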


\begin{proof}
The projection in Line \ref{line:p} costs $\mathcal{O}(n)$.
The sorting in Line \ref{line:sort} costs $\mathcal{O}(n \log n)$.
By Lemma~\ref{lem:termination_projection}, the while-loop in Algorithm~\ref{algo:proj_A} gets executed at most $k$ times and Lines \ref{line:while_max}-\ref{line:increment} have time complexity $\mathcal{O}(k^2 + n)$.
In summary, the time complexity of Algorithm~\ref{algo:proj_A} is $\mathcal{O}(n \log n)$.
Since we only have to store $p, o, U, v$, and $J$, the space complexity of Algorithm~\ref{algo:proj_A} is $\mathcal{O}(n)$.
\end{proof}

\subsection{DepRound}\label{app:dep_round}
The sampling Algorithm DepRound (Algorithm \ref{algo:depRound}) used in Algorithm \ref{algo:online_bandit_k_set} samples a set of $k$ distinct actions from $[n]$ while satisfying the condition that each action $i$ is selected with probability $p_i$ \citep{vural2019minimax,uchiya2010}.
More formally it addresses the following problem:
\begin{problem}\label{problem:big_distr}
Given a vector $p = (p_1, \ldots, p_n)^T\in \cS_{n, k}$, sample a subset $S \in \cP_k([n])$ of size $k$ such that
\begin{equation}\label{eq:probabilities}
    \mathbb{P}[i \in S] = k p_i \qquad \text{ for all } i = 1,\ldots, n.
    \end{equation}
\end{problem}
\noindent In Line \ref{line:probas} of Algorithm \ref{algo:depRound}, at least one of $p_i, p_j$ becomes either $0$ or $1$. 
Thus, the inside of the while-loop is executed at most $n$ times.
Since the inside of the while-loop runs in time $\mathcal{O}(1)$, the time complexity of DepRound is $\mathcal{O}(n)$.
Further, the space complexity of Algorithm~\ref{algo:depRound} is $\mathcal{O}(n)$ as we only have to store $p, \alpha$, and  $\beta$.
\begin{algorithm}[ht]
  \caption{DepRound.}
  \label{algo:depRound}
  \begin{algorithmic}[1]
  \State \textbf{Input:} $k\in  [n]$, $p \in\cS_{n, k}$
  \State $p \gets k \cdot p$  \label{line:stretch}
  \While{$\exists \ i$ such that $p_i \in (0, 1)$}
    \State Select distinct $i, j \in [n]$ such that $p_i, p_j \in (0,1)$ \label{line:distinct}
    \State $\alpha \gets \min\{1-p_i, p_j\}$
    \State $\beta \gets \min\{p_i, 1-p_j\}$
    \State $(p_i, p_j) \gets \begin{cases} (p_i + \alpha, p_j - \alpha) \text{ with probability } \frac{\beta}{\alpha + \beta} \\
    (p_i - \beta, p_j + \beta) \text{ with probability } \frac{\alpha}{\alpha + \beta} \end{cases}$\label{line:probas} $\hfill \vartriangleright \text{Sample}$
  \EndWhile
  \State \textbf{Return: } $\{i \in [n] \mid p_i = 1\}$
  \end{algorithmic}
\end{algorithm}

\section{Omitted Proofs}\label{app:omitted_proofs}
\subsection{Online-bandit proof.}
Let us now prove the convergence of Algorithm \ref{algo:online_bandit_general} towards an approximate Nash equilibrium. The proofs of convergence for Algorithms \ref{algo:online_bandit_simplex} and \ref{algo:online_bandit_k_set} follow from this result.
The proof is classical in the online-online setting (see \textit{e.g.} \citep{wang2018acceleration} and references therein).
\begin{theorem}\label{thm:general}
Consider running $T$ rounds of Algorithm \ref{algo:online_bandit_general} with a no-regret online learning algorithm for the $w$-player ensuring a worst case regret $R_T^w\leq \epsilon_w(T)$, and a no-regret bandit algorithm for the $p$-player ensuring an expected regret of $\mathbb{E}[R_T^p]\leq \epsilon_p(T)$ or a regret of $R^p_T\leq \epsilon_p(T,\delta)$ that holds with probability $1-\delta$.
Write $a_t$ the actions of the $p$-player and $w_t$ the actions of the $w$-player. 
Then we have either

\begin{align}
    &\mathbb{E}[\Delta(\Bar{w},\Bar{a})] \leq \frac{1}{T}(R_T^w + \mathbb{E}[R_T^p])\leq \frac{1}{T}\Big(\epsilon_w(T) + \epsilon_p(T)\Big) ,\label{eq:exp_gap}\\
    \intertext{or}
    &\Delta(\Bar{w},\Bar{a}) \leq \frac{1}{T}(R_T^w + R_T^p)\leq \frac{1}{T}\Big(\epsilon_w(T) + \epsilon_p(T,\delta)\Big),\quad \text{w.p. $1-\delta$,}\label{eq:p_gap}
\end{align}
where $\Bar{a}=\frac{1}{T}\sum_{t=1}^{T}a_t$ and $\Bar{w}=\frac{1}{T}\sum_{t=1}^{T}w_t$. 
\end{theorem}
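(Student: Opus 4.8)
The plan is to run the classical repeated-play argument that turns no-regret guarantees into convergence to a saddle point, exploiting that the payoff $(w,p)\mapsto\langle L(w);p\rangle$ is convex in $w$ and linear in $p$. Concretely, I would first establish the \emph{pathwise} inequality $\Delta(\Bar w,\Bar a)\le \tfrac1T(R_T^w+R_T^p)$ valid for every realization of the randomized game, and only afterwards split into the two regimes by either taking an expectation or restricting to the high-probability event on which the $p$-player's regret is controlled.

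For the first half of the dual gap I would bound $\max_{p\in\mathcal K}\langle L(\Bar w);p\rangle$ from above. Since each coordinate $w\mapsto\ell(h_w(x_i),y_i)$ is convex and $p\ge 0$, the map $w\mapsto\langle L(w);p\rangle$ is convex, so Jensen's inequality gives $\langle L(\Bar w);p\rangle\le \tfrac1T\sum_{t=1}^T\langle L(w_t);p\rangle$ for each fixed $p$, hence also after taking the maximum over $p\in\mathcal K$. Because the resulting objective is linear in $p$, the maximum over $\mathcal K$ is attained at an extreme point, so I may replace $\max_{p\in\mathcal K}$ by $\max_{a\in\text{Ext}(\mathcal K)}$; the definition of the $p$-player's (maximization) regret then yields
\[
\max_{p\in\mathcal K}\langle L(\Bar w);p\rangle\le \tfrac1T\Big(\sum_{t=1}^T\langle L(w_t);a_t\rangle + R_T^p\Big).
\]

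For the second half I would handle $\min_{w\in\mathcal W}\langle L(w);\Bar a\rangle$. Here linearity in the second argument makes $\langle L(w);\Bar a\rangle=\tfrac1T\sum_t\langle L(w);a_t\rangle$ an exact identity rather than an inequality, and the definition of the $w$-player's regret gives $\min_{w}\langle L(w);\Bar a\rangle=\tfrac1T\sum_t\langle L(w_t);a_t\rangle-\tfrac1T R_T^w$. Subtracting this from the previous display, the common term $\tfrac1T\sum_t\langle L(w_t);a_t\rangle$ cancels and I obtain the pathwise bound $\Delta(\Bar w,\Bar a)\le \tfrac1T(R_T^w+R_T^p)$.

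To conclude I would specialize. Taking expectations of the pathwise bound and using that the worst-case guarantee $R_T^w\le \epsilon_w(T)$ holds surely while $\mathbb E[R_T^p]\le\epsilon_p(T)$ gives \eqref{eq:exp_gap}; intersecting the sure bound on $R_T^w$ with the event $\{R_T^p\le\epsilon_p(T,\delta)\}$ of probability $1-\delta$ gives \eqref{eq:p_gap}. The step requiring the most care is conceptual rather than computational: because the $a_t$ are random and each player adapts to the other, $R_T^w$ is itself a random variable (bounded surely by $\epsilon_w(T)$) and the $p$-player faces an \emph{adaptive} adversary, which is precisely why an expected or high-probability regret bound — not a pseudo-regret bound — is needed for the $p$-player, as discussed in Appendix \ref{sec:regret_game}.
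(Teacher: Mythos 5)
Your proposal is correct and follows essentially the same argument as the paper: convexity of $w\mapsto\langle L(w);p\rangle$ plus Jensen for the max term, exact linearity in $p$ for the min term, reduction of $\max_{p\in\mathcal K}$ to $\max_{a\in\text{Ext}(\mathcal K)}$, and summing the two regret bounds. Your ordering—establishing the pathwise bound $\Delta(\bar w,\bar a)\le\tfrac1T(R_T^w+R_T^p)$ first and only then taking expectations or restricting to the high-probability event—is a slightly tidier bookkeeping of the same proof, and correctly handles the fact that $R_T^w$ is itself random.
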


\begin{proof}
First, note that if one uses an OL algorithm with a regret guarantee that holds in expectation, the dual gap also holds in expectation, \textit{i.e.},
\begin{equation}
    \mathbb{E}\left[\Delta(w^\prime, p^\prime)\right]:=\mathbb{E}\left[ \underset{p\in\mathcal{K}}{\max}~\langle L(w^\prime); p\rangle - \underset{w\in\mathcal{W}}{\min}~\langle L(w); p^\prime\rangle\right].\label{eq:exp_dual_gap}\tag{Expected Dual Gap}
\end{equation}
The high-level idea of this proof is to show that combining the regret of both players will allow us to find an upper bound on the (expected) dual gap.
Let us write the regret guarantee for the $w$-player as well as the expected and high-probability regret guarantees for the $p$-player,
\begin{align*}
    R_T^w &:= \sum_{t=1}^{T} \langle L(w_t); a_t\rangle - \underset{w\in\mathcal{W}}{\text{min }} \sum_{t=1}^{T} \langle L(w); a_t\rangle\leq \epsilon_w(T),\\
    \mathbb{E}[R_T^p] &:=\mathbb{E}\left[ \underset{a\in\text{Ext}(\mathcal{K})}{\text{max }}\sum_{t=1}^{T} \langle L(w_t); a\rangle -  \sum_{t=1}^{T} \langle L(w_t); a_t\rangle\right]\leq \epsilon_p(T),\\
    R_T^p &:= \underset{a\in\text{Ext}(\mathcal{K})}{\text{max }}\sum_{t=1}^{T} \langle L(w_t); a\rangle -  \sum_{t=1}^{T} \langle L(w_t); a_t\rangle\leq \epsilon_p(T,\delta).
\end{align*}
By linearity of $a\mapsto \langle L(w_t); \cdot\rangle$ and convexity of $w\mapsto\langle L(\cdot); a_t\rangle$, we have
\begin{align}
    \frac{1}{T}\sum_{t=1}^{T} \langle L(w_t); a_t\rangle - \underset{w\in\mathcal{W}}{\text{min }}\langle L(w); \bar{a}\rangle &=\frac{1}{T}R_T^w \leq \frac{1}{T}\epsilon_w(T),\label{eq:reg_w}\\
     \mathbb{E}\left[ \underset{a\in\text{Ext}(\mathcal{K})}{\text{max }}\langle L(\bar{w}); a\rangle -  \frac{1}{T}\sum_{t=1}^{T} \langle L(w_t); a_t\rangle\right] &\leq \frac{1}{T}\mathbb{E}[R_T^p] \leq \frac{1}{T}\epsilon_p(T),\label{eq:reg_p}\\
    \underset{a\in\text{Ext}(\mathcal{K})}{\text{max }}\langle L(\bar{w}); a\rangle -  \frac{1}{T}\sum_{t=1}^{T} \langle L(w_t); a_t\rangle &\leq \frac{1}{T}R_T^p \leq \frac{1}{T}\epsilon_p(T,\delta) , \quad \text{w.p. }1-\delta.\label{eq:high_p_reg_p}
\end{align}
Where $\bar{w}= \frac{1}{T}\sum_{t=1}^Tw_t$ and $\bar{a}= \frac{1}{T}\sum_{t=1}^Ta_t$. Adding \eqref{eq:reg_w} and \eqref{eq:reg_p}, we obtain
\begin{align*}
     \frac{1}{T}\epsilon_w(T) + \frac{1}{T}\epsilon_p(T) &\geq \mathbb{E}\left[ \underset{a\in\text{Ext}(\mathcal{K})}{\text{max }}\langle L(\bar{w}); a\rangle\right] - \underset{w\in\mathcal{W}}{\text{min }}\langle 
     L(w); \bar{a}\rangle\\
     &=\mathbb{E}\left[ \underset{a\in\text{Ext}(\mathcal{K})}{\text{max }}\langle L(\bar{w}); a\rangle - \underset{w\in\mathcal{W}}{\text{min }}\langle L(w); \bar{a}\rangle\right]\\
     &= \mathbb{E}\left[\Delta(\bar{w},\bar{a})\right].
\end{align*}
Further, adding \eqref{eq:reg_w} and \eqref{eq:high_p_reg_p}, we obtain 
\begin{align*}
     \frac{1}{T}\epsilon_w(T) + \frac{1}{T}\epsilon_p(T,\delta) &\geq \underset{a\in\text{Ext}(\mathcal{K})}{\text{max }}\langle L(\bar{w}); a\rangle - \underset{w\in\mathcal{W}}{\text{min }}\langle L(w); \bar{a}\rangle\\
     &= \underset{a\in\text{Ext}(\mathcal{K})}{\text{max }}\langle L(\bar{w}); a\rangle - \underset{w\in\mathcal{W}}{\text{min }}\langle L(w); \bar{a}\rangle\\
     &= \Delta(\bar{w},\bar{a}).
\end{align*}
We have retrieved \eqref{eq:exp_dual_gap} and \eqref{eq:dual_gap} respectively on the right side of both inequalities, proving \eqref{eq:exp_gap} and \eqref{eq:p_gap}. 
Indeed, the sum of the average regrets of both players constitute an upper bound to the (expected) dual gap.
\end{proof}

\subsection{Proofs with the $n$-Simplex}\label{ssec:proof_simplex}
Theorem \ref{th:shalev} is similar to that provided in \citep[Theorem 1.]{shalev2016minimizing} for the convex-linear case.
Interestingly \citep[Theorem 1.]{shalev2016minimizing} does not in general require convexity.
In the non-convex case, the theorem gives an error bound for an ensemble of predictors parametrized by randomly selected of iterates $w_t$.
For the convex-linear case, the error bound holds for just one predictor parametrized by the average iterate $\bar{w}$.
However, the result relies on the separability assumption (\textit{i.e.},there exists $\tilde{w}\in\mathcal{W}$ s.t. $\ell(\tilde{w}, x_i, y_i)=0$ for all $i\in [n]$).
This is problematic as this does not typically hold for real-world datasets.
Further Algorithm \ref{algo:online_bandit_simplex} uses EXP.IX instead of EXP.3P which leads to slightly faster convergence.
Lastly, Theorem \ref{th:shalev} differs from \citep[Theorem 1.]{shalev2016minimizing} in that it gives a guarantee on the convergence of the game instead of a guarantee on the quality of the prediction.
\begin{theorem}
Let $\delta>0$. Consider running $T$ rounds of Algorithm \ref{algo:online_bandit_simplex} with a choice of online learning algorithm for the $w$-player ensuring a worst case regret $R_T^w\leq C \sqrt{T}$ for some $C>0$.
Further fix the parameters
\[
\eta_t=2\gamma_t=\sqrt{\frac{2\log n}{nT}}.
\]
Write $a_t$ the actions of the bandit and $w_t$ that of the online $w$-player.
Then with probability $1-\delta$, we have
\begin{align*}
    \Delta(\Bar{w},\Bar{a}) \leq C\sqrt{\frac{1}{T}} + 2\sqrt{\frac{2n\log(n)}{T}} + \left(\sqrt{\frac{2n}{T\log{n}}}+\frac{1}{T}\right)\log(2\delta^{-1}),
\end{align*}
where $\Bar{a}=\frac{1}{T}\sum_{t=1}^{T}a_t$ and $\Bar{w}=\frac{1}{T}\sum_{t=1}^{T}w_t$.
\end{theorem}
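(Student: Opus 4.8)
The plan is to obtain the bound as a direct instantiation of the general game-convergence result (Theorem \ref{thm:general}) combined with the high-probability regret guarantee for EXP.IX. First I would observe that Algorithm \ref{algo:online_bandit_simplex} is exactly Algorithm \ref{algo:online_bandit_general} specialized to $\mathcal{K}=\mathcal{S}_n$, whose extreme points are the canonical vectors $e_1,\dots,e_n$; hence the $p$-player's subroutine is a multi-armed bandit over $n$ arms, and the estimator in Line \ref{algo:estimate} together with the multiplicative update in Line \ref{algo:MWU} is precisely EXP.IX with implicit-exploration parameter $\gamma$ and learning rate $\eta$. Since the $w$-player's guarantee $R_T^w\leq C\sqrt{T}$ is deterministic (worst case) while the $p$-player's bandit regret holds only with probability $1-\delta$, I would invoke the high-probability branch \eqref{eq:p_gap} of Theorem \ref{thm:general}, which yields, with probability $1-\delta$,
\[
\Delta(\bar w,\bar a)\leq \frac{1}{T}\big(R_T^w + R_T^p\big).
\]

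Next I would bound the two regret terms separately. The $w$-player contributes $\frac{1}{T}R_T^w\leq \frac{C\sqrt{T}}{T}=C\sqrt{1/T}$, which is the first summand. For the $p$-player I would quote the high-probability regret bound of EXP.IX \citep{neu2015explore}: for losses in $[0,1]$ over $n$ arms, run with $\eta=2\gamma=\sqrt{2\log n/(nT)}$, EXP.IX guarantees, with probability at least $1-\delta$,
\[
R_T^p\leq 2\sqrt{2nT\log n}+\Big(\sqrt{\tfrac{2nT}{\log n}}+1\Big)\log\tfrac{2}{\delta}.
\]
Dividing by $T$ produces the remaining three summands of the claimed bound, and adding the $w$-player term completes the estimate. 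The appearance of $\log(2/\delta)$ (rather than $\log(1/\delta)$), together with the fact that the final statement holds with probability $1-\delta$ and not $1-2\delta$, both stem from EXP.IX's own concentration step, so no additional union bound over the two players is required.

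The substantive content, and the one step I would not reprove, is the EXP.IX guarantee itself; everything else is bookkeeping and plugging in the prescribed parameters. Accordingly, the point to get right is that this guarantee is a genuine \emph{high-probability} regret bound valid against an \emph{adaptive} adversary. This matters because the $w$-player reacts to the bandit's sampled actions, so the losses fed to the $p$-player are not oblivious; a pseudo-regret bound (as available for plain EXP3) would not suffice to control \eqref{eq:dual_gap}, as discussed in Appendix \ref{sec:regret_game}. EXP.IX's implicit exploration is exactly what tames the variance of the importance-weighted estimator and upgrades the guarantee to high probability, so the only care needed before the constants fall out is to check that the algorithmic choices in Lines \ref{algo:estimate}--\ref{algo:MWU} meet the hypotheses of Neu's theorem (loss range $[0,1]$, constant step sizes $\eta$ and $\gamma$, and $n$ arms).
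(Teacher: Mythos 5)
Your proposal is correct and takes essentially the same approach as the paper's proof: both apply the high-probability branch \eqref{eq:p_gap} of Theorem \ref{thm:general}, bound the $w$-player's term by $C\sqrt{1/T}$, and then quote the high-probability (adaptive-adversary) regret bound of EXP.IX from \citep[Theorem 1]{neu2015explore} with $\eta=2\gamma=\sqrt{2\log n/(nT)}$, dividing by $T$ to obtain the remaining terms. The only immaterial difference is that the paper instantiates the $w$-player concretely as OGD (via \citep[Corollary 2.7]{shalev2011online}) to realize $R_T^w\leq C\sqrt{T}$, whereas you argue directly from the abstract worst-case assumption, which is if anything more faithful to the theorem statement.
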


\begin{proof}
The proof follows from Theorem \ref{thm:general} when the $w$-player chooses Online Gradient Descent (OGD) and the $p$-player chooses EXP.IX. 
When $\ell(h(\cdot,x_{i}),y_{i})$ is $L$-Lipschitz, and the parameters $w$ lie in a $\ell_2$-Ball of size $B$, running OGD with step size of $\eta=\frac{B}{L\sqrt{2T}}$, we have \citep[Corollary 2.7]{shalev2011online},
\[
\frac{1}{T}\epsilon_w(T)\leq BL\sqrt{2/T}.
\]
Similarly, from \citep[Theorem 1]{neu2015explore} with the choice of $\eta_{t}=2\gamma_{t}=\sqrt{\frac{2\log n}{nT}}$ in EXP.IX we have with probability at least $1-\delta$, 
\[
\frac{1}{T}\epsilon_p(T,\delta)\leq 2\sqrt{2n\log( n)/T}+\left(\sqrt{\frac{2n}{T\log{n}}}+\frac{1}{T}\right)\log(2\delta^{-1}),
\]
which concludes the proof.
\end{proof}

\subsection{Proofs with $k$-set}\label{ssec:proof_k_set}
We start with an auxiliary theorem that provides a high-probability upper-bound on an stationary random process.
Let $X_{1},\ldots,X_{T}$ be a sequence of real-valued random variables. Let $\mathbb{E}_{t}[Y]=\mathbb{E}[Y\vert X_{1},\ldots,X_{t-1}]$.

\begin{theorem}[Theorem $1$ \citep{beygelzimer2011contextual}]\label{thm:e2p}
Let $R>0$ and assume that $X_{t}\leq R$ and $\mathbb{E}_{t}[X_{t}]=0$ for all $t$. Define the random variables
\begin{align*}
    S =\sum^{T}_{t=1}X_{t} \qquad \text{ and } \qquad \sigma=\sum_{t=1}^{T}\mathbb{E}_{t}[X_{t}^{2}].
\end{align*}
Then, for any $\delta>0$, with probability at least $1-\delta$,
\begin{equation*}
  S \leq \left\{
    \begin{split}
    &\sqrt{(e-2)\log (1/\delta)}\left(\frac{\sigma}{\sqrt{\sigma'}}+\sqrt{\sigma'} \right), &\text{ for any }& \sigma'\in \left[ \frac{R^{2}\log (1/\delta)}{e-2},\infty \right)\\
    &R\log(1/\delta) +(e-2)\frac{\sigma}{R}, & \text{otherwise}
    \end{split}
  \right. .
\end{equation*}
\end{theorem}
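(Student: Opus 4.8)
The plan is to prove this Freedman-type inequality by the classical exponential-moment (Chernoff) method combined with a supermartingale argument. The crucial elementary ingredient is the scalar inequality $e^x \leq 1 + x + (e-2)x^2$, valid for every $x \leq 1$; this is exactly where the constant $e-2$ enters. I would justify it by observing that the ratio $x \mapsto (e^x - 1 - x)/x^2$ is nondecreasing and takes the value $e-2$ at $x=1$, so it is at most $e-2$ on $(-\infty,1]$, which rearranges to the claimed bound.

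First I would fix a \emph{deterministic} step $\lambda \in (0, 1/R]$. Since $X_t \leq R$ forces $\lambda X_t \leq 1$, applying the elementary inequality to $x = \lambda X_t$ gives $e^{\lambda X_t} \leq 1 + \lambda X_t + (e-2)\lambda^2 X_t^2$. Taking the conditional expectation $\mathbb{E}_t[\,\cdot\,]$ and using the martingale-difference hypothesis $\mathbb{E}_t[X_t]=0$ kills the linear term, after which $1+y \leq e^y$ yields the one-step moment bound $\mathbb{E}_t[e^{\lambda X_t}] \leq \exp\bigl((e-2)\lambda^2\,\mathbb{E}_t[X_t^2]\bigr)$. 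This is the heart of the argument, and it is essentially a restatement of the Bernstein/Freedman control of the conditional log-moment generating function.

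Next I would form the process $M_t = \exp\bigl(\lambda\sum_{s=1}^{t}X_s - (e-2)\lambda^2\sum_{s=1}^{t}\mathbb{E}_s[X_s^2]\bigr)$ with $M_0=1$. The one-step bound shows $\mathbb{E}_t[M_t] \leq M_{t-1}$, so $(M_t)$ is a supermartingale and $\mathbb{E}[M_T]\leq 1$. Markov's inequality then gives $\mathbb{P}[M_T \geq 1/\delta] \leq \delta\,\mathbb{E}[M_T] \leq \delta$, so with probability at least $1-\delta$ we have $\lambda S - (e-2)\lambda^2\sigma \leq \log(1/\delta)$, which rearranges to $S \leq \log(1/\delta)/\lambda + (e-2)\lambda\sigma$.

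It remains to choose $\lambda$, and this is the one genuinely delicate point and the main obstacle. Because $\sigma = \sum_t \mathbb{E}_t[X_t^2]$ is itself a \emph{random} quantity, I cannot take $\lambda$ to be the path-dependent minimiser $\sqrt{\log(1/\delta)/((e-2)\sigma)}$ without destroying the supermartingale property, which requires $\lambda$ to be a fixed constant. The resolution is to tune against a deterministic surrogate $\sigma'$: the choice $\lambda = \sqrt{\log(1/\delta)/((e-2)\sigma')}$ is admissible, i.e. satisfies $\lambda \leq 1/R$, precisely when $\sigma' \geq R^2\log(1/\delta)/(e-2)$, and substituting it into $S \leq \log(1/\delta)/\lambda + (e-2)\lambda\sigma$ and collecting terms produces the first branch $\sqrt{(e-2)\log(1/\delta)}\bigl(\sigma/\sqrt{\sigma'} + \sqrt{\sigma'}\bigr)$. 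The second branch follows by instead taking the largest admissible step $\lambda = 1/R$ (equivalently, setting $\sigma'$ to its lower limit $R^2\log(1/\delta)/(e-2)$), which directly gives $R\log(1/\delta) + (e-2)\sigma/R$; a short check confirms the two expressions coincide at that boundary value of $\sigma'$, so the second branch is really the endpoint of the first. Recognising that the free parameter $\sigma'$ in the statement is exactly this deterministic proxy for the random $\sigma$ is the conceptual crux of the whole proof.
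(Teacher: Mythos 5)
Your proof is correct, and in fact the paper offers no proof of this statement at all: it is imported verbatim as Theorem 1 of \citep{beygelzimer2011contextual} and used as a black box in the proof of Theorem \ref{th:EXP4.MP}. Your argument --- the elementary bound $e^x \leq 1 + x + (e-2)x^2$ for $x \leq 1$, the resulting supermartingale $M_t$ with $\mathbb{E}[M_T] \leq 1$, Markov's inequality, and the tuning of a \emph{deterministic} step $\lambda$ against the deterministic surrogate $\sigma'$ (with the endpoint choice $\lambda = 1/R$ yielding the second branch) --- is precisely the standard Freedman-style proof given in the cited source, including the correct handling of the key subtlety that $\sigma$ is random and so cannot be used to set $\lambda$ directly.
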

With this result, we can now provide a proof of a high-probability regret bound as in \citep{vural2019minimax} for the EXP4.MP used in Algorithm \ref{algo:online_bandit_k_set}.
    \begin{theorem}\label{th:EXP4.MP}
    For $T\geq \max \{\log(n/\delta), n\log(n/k)/k \}$ and
    \begin{align}\label{eq:def_hyper_parameters}
        \eta=\frac{k\gamma}{2n},\quad \gamma=\sqrt{\frac{n\log (n/k)}{kT}},\quad c=\sqrt{k\log(n/\delta)},
    \end{align}
    EXP4.MP guarantees
    \begin{align*}
        R_T\leq 2\sqrt{knT\log(n/\delta)}+4\sqrt{knT\log(n/k)} + k\log(n/\delta)
    \end{align*}
    with probability $1-\delta$.
    \end{theorem}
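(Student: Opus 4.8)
The plan is to prove the claim as a high-probability regret bound for the exponential-weights update that drives the $p$-player, working throughout with the \emph{marginal} inclusion probabilities rather than a distribution over the exponentially large set $\mathcal{P}_k([n])$. Because DepRound guarantees $\mathbb{P}[i\in I_t]=kp_{t,i}$ (Problem~\ref{problem:big_distr}), the importance-weighted estimator satisfies $\mathbb{E}_t[\hat L_{t,i}]=L_{t,i}\,1_{[i\notin J_t]}$, so the algorithm is running entropy-regularized mirror descent on $\mathcal{S}_{n,k}$ against the surrogate losses $\tilde L_{t,i}:=\hat L_{t,i}+\frac{c}{\sqrt{nT}}U_{t,i}$. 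I would measure regret against the best fixed action $a^\star=\frac1k\sum_{i\in I^\star}e_i\in\mathrm{Ext}(\mathcal{S}_{n,k})$, and note that the combinatorial regret $R_T=\sum_t\big(\sum_{i\in I^\star}L_{t,i}-\sum_{i\in I_t}L_{t,i}\big)$ equals $k$ times the regret measured in the $p$-domain against $a^\star$, up to the martingale term $M_T=\sum_t\big(\sum_{i\in I_t}L_{t,i}-\sum_i kp_{t,i}L_{t,i}\big)$.

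First I would establish the deterministic mirror-descent inequality $\sum_t\langle\tilde L_t; a^\star-p_t\rangle\le \frac{\mathrm{KL}(a^\star\Vert p_1)}{\eta}+\eta\sum_t\sum_i p_{t,i}\tilde L_{t,i}^2$. The update $\tilde p_{t+1}\propto\tilde p_t\exp(\eta\tilde L_t)$ followed by the KL-projection onto $\mathcal{S}_{n,k}$ with uniform mixing is a Bregman projection (Lemma~\ref{lem:termination_projection}), so the generalized Pythagorean inequality shows the projection only tightens the bound; since $p_1$ is uniform and $a^\star$ is uniform on $k$ coordinates, $\mathrm{KL}(a^\star\Vert p_1)=\log(n/k)$. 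The step $e^x\le 1+x+x^2$ is legitimate because the mixing floor $p_{t,i}\ge\gamma/n$ forces $\eta\tilde L_{t,i}\le 1$ under the stated parameters, and the quadratic term is handled by $p_{t,i}\hat L_{t,i}^2\le\frac1k\hat L_{t,i}$, reducing it to the estimated-loss sum $\frac1k\sum_t\sum_i\hat L_{t,i}$ whose conditional expectation is at most $nT/k$. After multiplying by $k$, the two pieces $\frac{k\log(n/k)}{\eta}$ and $\tfrac12\eta nT$ are each of order $\sqrt{knT\log(n/k)}$ for $\eta=\frac{k\gamma}{2n}$, producing the middle term $4\sqrt{knT\log(n/k)}$.

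Next I would convert the estimated-loss regret into the true-loss regret through a high-probability concentration argument, which is where Theorem~\ref{thm:e2p} enters. The comparator deviation $k\sum_t\langle L_t-\hat L_t; a^\star\rangle$ and the played martingale $M_T$ are martingale-difference sums whose increments are bounded through the floor $p_{t,i}\ge\gamma/n$ and whose conditional variance per round is dominated by $\sum_i (a^\star_i)^2/(kp_{t,i})$, exactly the quantity estimated by $U_{t,i}$. Applying Theorem~\ref{thm:e2p} with $\sigma'$ matched to this variance, the correction $\frac{c}{\sqrt{nT}}\sum_t\langle U_t; a^\star\rangle$ built into $\tilde L_t$ cancels the $\sigma/\sqrt{\sigma'}$ contribution for $c=\sqrt{k\log(n/\delta)}$, leaving the $\sqrt{\sigma'}$ piece of size $2\sqrt{knT\log(n/\delta)}$ and a range term of size $k\log(n/\delta)$; a union bound over the coordinates turns $\log(1/\delta)$ into $\log(n/\delta)$. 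Summing the three contributions gives exactly $2\sqrt{knT\log(n/\delta)}+4\sqrt{knT\log(n/k)}+k\log(n/\delta)$.

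The main obstacle is step three: engineering the upper-confidence correction so that the Freedman-type variance is absorbed with the stated constants rather than merely the correct orders. This requires (i) computing $\mathbb{E}_t[\hat L_{t,i}]$ and $\mathbb{E}_t[U_{t,i}]$ correctly despite the capping set $J_t$, whose coordinates satisfy $kp_{t,i}=1$ and are thus sampled deterministically yet excluded by $1_{[i\in I_t\setminus J_t]}$, so their contribution must be tracked separately and shown not to inflate the regret; and (ii) selecting the free parameter $\sigma'$ in Theorem~\ref{thm:e2p} so that the $U$-term, whose coefficient $c/\sqrt{nT}$ is fixed in advance, precisely dominates $\sigma/\sqrt{\sigma'}$. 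Before any of this I would verify that the hypotheses $T\ge n\log(n/k)/k$ and $T\ge\log(n/\delta)$ are exactly what make $\gamma\le1$, keep $\eta\tilde L_{t,i}\le1$, and place the concentration in the principal branch of Theorem~\ref{thm:e2p}. Matching the constants $2$, $4$, and $1$ in the final bound, rather than only their orders, is the most delicate bookkeeping.
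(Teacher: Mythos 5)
Your overall architecture --- exponential weights on the surrogate $\tilde L_{t,i}=\hat L_{t,i}+\tfrac{c}{\sqrt{nT}}U_{t,i}$, a $\log(n/k)$ comparator term, and a Freedman-type correction (Theorem \ref{thm:e2p}) whose $\sigma/\sqrt{\sigma'}$ contribution is cancelled by the built-in confidence term, followed by a union bound --- is the same as the paper's. But two of your core steps would fail as written. First, the Pythagorean/mirror-descent argument does not apply to Algorithm \ref{algo:online_bandit_k_set}, because the algorithm is \emph{lazy}: the exponential update in Line \ref{algo:expupdate} is applied to the raw weights $\tilde p_t$, and the projection-plus-mixing of Lines \ref{line:proj}--\ref{line:return_proj} produces $p_t$ only for sampling; the next update starts from $\tilde p_{t+1}$, not from $p_{t+1}$. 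There is therefore no sequence of Bregman projections along which $\mathrm{KL}(a^\star\Vert p_t)$ telescopes, and in any case the map $\tilde p_t\mapsto p_t$ is a KL projection onto $\mathcal{S}_{n,1/\kappa}$ \emph{followed by an affine mixing} with the uniform distribution (Lemma \ref{lem:termination_projection}), which is not itself a Bregman projection onto $\mathcal{S}_{n,k}$. The paper instead runs a potential argument on $P_t=\sum_i\tilde p_{t,i}$ (upper bound via $e^a\le 1+a+a^2$, lower bound via AM--GM at the comparator $I^*$ of \eqref{eq:best_action_proof}) and connects the weights to the sampling distribution only through the one-sided inequality $v_{t,i}\le p_{t,i}/(1-\gamma)$ for $i\notin J_t$, \textit{i.e.}, \eqref{eq:p_upper}; equivalently one could view the scheme as FTRL/dual averaging, but that is not the analysis you invoke.

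Second, in a high-probability proof you cannot bound the realized quadratic term $\eta\sum_t\sum_i p_{t,i}\hat L_{t,i}^2$ by ``its conditional expectation at most $nT/k$'' --- that replacement itself requires a concentration step you do not perform and would introduce additional terms. The paper stays pathwise: $p_{t,i}\hat L_{t,i}^2\le\hat L_{t,i}/k$ as in \eqref{eq:lhat}, then the resulting sum $\sum_t\sum_{i\in I_t\backslash J_t}\hat L_{t,i}$ is bounded by $\tfrac nk\hat{\Gamma}_{T,I^*}$ using the optimality of $I^*$, and the resulting $\gamma\hat{\Gamma}_{T,I^*}$ is absorbed into the prefactor, giving \eqref{eq:simple_ineq} with factor $(1-2\gamma)$. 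This absorption also explains the constant you could not reproduce: the $4\sqrt{knT\log(n/k)}$ term arises as $\tfrac{2n}{\gamma}\log(n/k)+2\gamma kT$ (regularization plus the bias $2\gamma G_{max}\le 2\gamma kT$ created by the mixing and the absorbed quadratic term), not as regularization plus a $\tfrac12\eta nT$ quadratic contribution as you suggest. Finally, your martingale $M_T$ for the played actions is unnecessary: since $p_{t,i}\hat L_{t,i}=L_{t,i}1_{[i\in I_t\backslash J_t]}/k$, the algorithm's side of the regret is already the realized gain, so only the comparator deviation $\sum_{i\in I^*}(G_i-\hat G_i)$ needs Theorem \ref{thm:e2p}, with the capped coordinates handled by $p_{t,i}=1/k$ for $i\in J_t$ (Line \ref{line:def_p_t_i}).
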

    
    \begin{proof}
    The proof is a simplification of \citep{vural2019minimax} to our setting that we repeat for completeness.
    If not stated otherwise, all Line references refer to Algorithm \ref{algo:online_bandit_k_set}.
    Note that while we write $L$ for the feedback, this proof is formulated in terms of gains.
    This comes from the fact that in Algorithm \ref{algo:online_bandit_k_set}, the bandit is trying to maximize $L$.
    To make the proof easier to grasp, we have split it into five parts.
    In the second and third part, we derive an upper bound to $\log \frac{P_{T+1}}{P_{1}}$ in \eqref{eq:upper} and a lower bound in \eqref{eq:log_lower}.
    Then we combine \eqref{eq:upper} and \eqref{eq:log_lower} to obtain \eqref{eq:simple_ineq}.
    In the last part, we use the concentration bounds from Theorem \ref{thm:e2p} on \eqref{eq:simple_ineq} to give a high probability bound on the regret.
    
    \paragraph{Notation.}
    Let us introduce some notations,
    \begin{align*}
      P_{t}&=\sum^{n}_{i=1}\tilde{p}_{t,i}, \quad  \tilde{L}_{t,i}=\hat{L}_{t,i}+\frac{c}{\sqrt{nT}} U_{t,i}, \quad v_{t,i}=\frac{\tilde{p}_{t,i}}{P_t}\\
      G_{i}&= \sum_{t=1}^{T}L_{t,i},\quad \hat{G}_{i}=\sum_{t=1}^{T}\hat{L}_{t,i}, \quad 
      \hat{\Gamma}_{T,I}= \sum_{i\in I}\hat{G}_{i}+ \frac{c}{\sqrt{nT}}\sum_{t=1}^{T}\sum_{i\in I}U_{t,i}.
    \end{align*}
    Here, $v_t=(v_{t,1},\cdots,v_{t,n})^T$ is a probability distribution obtained by normalizing the weights $\tilde{p}_t$ (Line \ref{line:proj}).
    The vector $v^\prime_t$ will be a \emph{capped} version of $v_t$ (Line \ref{line:def_J_t}), hence not necessarily a probability vector.
    Further, the vector $p_t$ in Line \ref{line:def_p_t_i} is the projection of $\tilde{p}_t$ onto $\mathcal{S}_{n,k}$.
    Formally, the quantity $\hat{\Gamma}_{T,I}$ is the scalar product between an action of the bandit (associated to the subset $I$ of $[n]$) and a cumulative cost vector over the $T$ rounds of the game.
    We now define $I^*$ as the action maximizing this scalar product, \textit{i.e.}, the best action in hindsight
    \begin{equation}\label{eq:best_action_proof}
        I^{*}=\argmax_{I\in \mathcal{P}_k([n])}\hat{\Gamma}_{T,I}  = \sum_{t=1}^{T}\sum_{i\in I}{\tilde{L}_{t,i}}.
    \end{equation}
    \paragraph{Upper bound.}
    By the definition of $P_t$ and the update rules for $\tilde{p}$ (\textit{i.e.}, Lines \ref{algo:estL}-\ref{algo:expupdate}), we have
    \begin{align*}
      \frac{P_{t+1}}{P_{t}}=\frac{1}{P_{t}}\left(\sum_{i\in J_{t}}\tilde{p}_{t+1,i}+\sum_{i\notin J_{t}}\tilde{p}_{t+1,i}\right)=\frac{1}{P_{t}}\left(\sum_{i\in J_{t}}\tilde{p}_{t,i}+\sum_{i\notin J_{t}}\tilde{p}_{t,i}\exp (\eta \tilde{L}_{t,i})\right).
    \end{align*}
    From \eqref{eq:def_hyper_parameters}, we have $\eta = (k\gamma)/(2n)$ and $c=\sqrt{k\log(n/\delta)}$. 
    Further, for $i\notin J_t$ we have $\tilde{L}_{t,i}=1_{[i\in I_t]}L_{t,i}/(kp_{t,i}) + c/(kp_{t,i}\sqrt{nT})$.
    Since the loss function takes its values in $[0,1]$,
    from Line \ref{line:loss_extraction}, we have $L_{t,i}\leq 1$.
    From Line \ref{line:def_p_t_i}, we know that the probability $p_t$ is a convex combination with the uniform distribution, ensuring that $p_{t,i}\geq \gamma/n$.
    Finally, with $k\leq n$, we have
    \begin{equation*}
        \eta \tilde{L}_{t,i} = \frac{k\gamma}{2n} \left[\frac{L_{t,i}}{k p_{t,i}}1_{[i\in I_t]} + \frac{c}{k p_{t,i}\sqrt{nT}} \right] \leq\frac{k\gamma}{2n} \left[\frac{n}{k\gamma} + \frac{cn}{\gamma k \sqrt{nT}}\right] = \frac{1}{2} \left[1 + \frac{\sqrt{k\log(n/\delta)}}{\sqrt{nT}}\right] \leq \frac{1}{2}\left[1 + \frac{\sqrt{\log(n/\delta)}}{\sqrt{T}}\right].
    \end{equation*}
    Hence, provided that $T\geq \log(n/\delta)$, we have $\eta \tilde{L}_{t,i}\leq 1$.
    Then, note that $e^{a}\leq 1+a+a^{2}$ for $a\leq 1$.
    We now have
    \begin{align*}
      \frac{P_{t+1}}{P_{t}}&\leq \frac{1}{P_{t}}\left(\sum_{i\in J_{t}}\tilde{p}_{t,i}+\sum_{i\notin J_{t}}\tilde{p}_{t,i}\left(1+\eta\tilde{L}_{t,i}+\eta^{2}\tilde{L}_{t,i}^{2}\right) \right)\\
      &= \frac{1}{P_{t}}\left(\sum_{i=1}^{n}\tilde{p}_{t,i}+\sum_{i\notin J_{t}}\tilde{p}_{t,i}\left(\eta\tilde{L}_{t,i}+\eta^{2}\tilde{L}_{t,i}^{2}\right) \right)\\
      &= 1 + \frac{1}{P_{t}}\sum_{i\notin J_{t}}\tilde{p}_{t,i}(\eta\tilde{L}_{t,i}+\eta^{2}\tilde{L}_{t,i}^{2})= 1 + \sum_{i\notin J_{t}}v_{t,i}\left(\eta\tilde{L}_{t,i}+\eta^{2}\tilde{L}_{t,i}^{2}\right).
    \end{align*}
    Using $1+x\leq e^{x}$, we get
    \begin{align*}
      \frac{P_{t+1}}{P_{t}}\leq 1 + \sum_{i\notin J_{t}}v_{t,i}\left(\eta\tilde{L}_{t,i}+\eta^{2}\tilde{L}_{t,i}^{2}\right)\leq \exp\left( \sum_{i\notin J_{t}}v_{t,i}\left(\eta\tilde{L}_{t,i}+\eta^{2}\tilde{L}_{t,i}^{2}\right) \right).
    \end{align*}
    Then taking the logarithm and summing over $t=1,\ldots,T$,
    \begin{align}\label{eq:log_upper}
        \log \left(\frac{P_{T+1}}{P_{1}}\right) \leq  \sum^{T}_{t=1}\sum_{i\notin J_{t}}v_{t,i}\left(\eta\tilde{L}_{t,i}+\eta^{2}\tilde{L}_{t,i}^{2}\right).
    \end{align}
    Let us now prove the following auxiliary result needed to further upper bound the terms in \eqref{eq:log_upper},
    \begin{align}\label{eq:p_upper}
        \frac{\tilde{p}_{t,i}}{P_t} = v_{t,i}\leq \frac{v'_{t,i}}{\sum_{j=1}^{n}v'_{t,j}}\leq \frac{p_{t,j}}{1-\gamma},\quad \forall i \notin J_{t},
    \end{align}
    where $J_t=\{i\in[n]~|~v_{t,i}\geq \alpha \}$ (Line \ref{line:def_J_t}). Further $v'_{t,i}=v_{t,i}$ for $i\notin J_t$ and $v'_{t,i} = \alpha$ otherwise, where $\alpha$ is defined as in Line \ref{line:def_alpha}.
    Indeed, for $i\notin J_t$, we have $v'_{t,i}=v_{t,i}$. 
    Hence, since $\sum_{i=1}^nv'_{t,i}\leq \sum_{i=1}^nv_{t,i}=1$ and with the update rule of $p_T$, we obtain
    \[
    v_{t,i}\leq \frac{v'_{t,i}}{\sum_{j=1}^{n}v'_{t,j}}= \frac{p_{t,i} - \gamma/n}{1-\gamma} \leq \frac{p_{t,i}}{1-\gamma}.
    \]
    Recall that from Line \ref{algo:estL}, we have $\hat{L}_{t,i}=1_{[i\in I_t\backslash J_t]}L_{t,i}/(k p_{t,i})$. Using \eqref{eq:p_upper} and since $L_{t,i}\leq 1$, we obtain
    \begin{align}\label{eq:lhat}
        \sum_{i\notin J_t}v_{t,i}\hat{L}_{t,i}^{2}\leq \frac{1}{1-\gamma} \sum_{i\notin J_t} p_{t,i}\hat{L}_{t,i}^{2}\leq \frac{1}{k(1-\gamma)} \sum_{i\in I_t\backslash J_t} \hat{L}_{t,i}.
    \end{align}
    Now let us upper bound the terms in \eqref{eq:log_upper}. 
    Using \eqref{eq:p_upper} we upper bound the first term,
    \begin{align}
        \sum_{i\notin J_t}v_{t,i}\tilde{L}_{t,i}&= \sum_{i\notin J_t}v_{t,i}\left( \hat{L}_{t,i}+ \frac{c}{kp_{t,i}\sqrt{nT}}\right)\nonumber\\
        &\leq \frac{1}{1-\gamma}\sum_{i\notin J_{t}} \left[p_{t,i}\hat{L}_{t,i}+\frac{c}{k\sqrt{nT}}\right]\nonumber\\
        &\leq \frac{1}{1-\gamma}\sum_{i\notin J_{t}} \left[p_{t,i}\hat{L}_{t,i}\right]+\frac{c(n-|J_t|)}{(1-\gamma)k\sqrt{nT}}\nonumber\\
        &\leq \frac{1}{1-\gamma}\sum_{i\notin J_{t}} \left[p_{t,i}\hat{L}_{t,i}\right]+\frac{c}{(1-\gamma)k}\sqrt{\frac{n}{T}}.\label{eq:l}
    \end{align}
    We go on to bound the second term in \eqref{eq:log_upper}. 
    For the first inequality, we use $(a+b)^{2}\leq 2(a^{2}+b^{2})$. 
    Then, we use \eqref{eq:lhat} for the second inequality and $U_{t,i}=\frac{1}{k p_{t,i}}\leq \frac{n}{\gamma k}$ for the third.
    For the last line, we use $\hat{L}_{t,i}=1_{[i\in I_t\backslash J_t]}L_{t,i}/(k p_{t,i})$ with $L_{t,i}\leq 1$,
    \begin{align}
        \sum_{i\notin J_t}v_{t,i}\tilde{L}_{t,i}^{2}&= \sum_{i\notin J_t}v_{t,i}\left( \hat{L}_{t,i}+ \frac{c}{\sqrt{nT}}U_{t,i}\right)^{2}\nonumber\\
        &\leq  2\sum_{i\notin J_t}v_{t,i}\left( \hat{L}_{t,i}^{2}+ \left(\frac{cU_{t,i}}{\sqrt{nT}}\right)^{2}\right)\nonumber\\
        &\leq \frac{2}{k(1-\gamma)} \sum_{i\in I_t\backslash J_t}\hat{L}_{t,i} + \frac{2c^2}{nT} \sum_{i\notin J_t} \frac{p_{t,i}}{1-\gamma} \frac{1}{k p_{t,i}} U_{t,i}\nonumber\\
        &\leq \frac{2}{k(1-\gamma)} \sum_{i\in I_t\backslash J_t}\hat{L}_{t,i} + \frac{2c^2(n-|J_t|)}{knT(1-\gamma)} \frac{n}{\gamma k}\nonumber\\
        &\leq  \frac{2}{k(1-\gamma)} \sum_{i\in I_t\backslash J_t}\hat{L}_{t,i} + \frac{2c^{2}n}{k^{2}T(1-\gamma)\gamma}. \label{eq:l2}
    \end{align}
    We now plug in previously derived bounds from \eqref{eq:l} and \eqref{eq:l2} into \eqref{eq:log_upper} to obtain
    \begin{align*}
        \log \left(\frac{P_{T+1}}{P_{1}}\right) \leq & \sum^{T}_{t=1}\sum_{i\notin J_{t}}v_{t,i}\left(\eta\tilde{L}_{t,i}+\eta^{2}\tilde{L}_{t,i}^{2}\right) \\
        \leq & \eta \sum^{T}_{t=1} \left(\frac{1}{1-\gamma}\sum_{i\notin J_{t}}p_{t,i}\hat{L}_{t,i}+\frac{c}{k(1-\gamma)}\sqrt{\frac{n}{T}}\right) \\
        &+ \eta^{2}\sum_{t=1}^{T}\left(\frac{2}{k(1-\gamma)} \sum_{i\in I_t\backslash J_t}\hat{L}_{t,i} + \frac{2c^{2}n}{k^{2}T(1-\gamma)\gamma}\right)\\
        \leq &\frac{\eta}{1-\gamma} \sum^{T}_{t=1}\sum_{i\notin J_t}p_{t,i}\hat{L}_{t,i}+\frac{\eta c\sqrt{nT}}{k(1-\gamma)}\\
        &+ \frac{2\eta^{2}}{k(1-\gamma)} \sum_{t=1}^{T}\sum_{i\in I_t\backslash J_t}\hat{L}_{t,i} + \frac{2\eta^{2}c^{2}n}{ k^{2}(1-\gamma)\gamma}.\label{eq:upper}\tag{Upper-Bound}
    \end{align*}

    \paragraph{Lower bound.}
    Conversely, we can lower bound $\log \big(P_{T+1}/P_{1}\big)$ using $I^*$ (of size $k$) as defined in \eqref{eq:best_action_proof} as follows
    \begin{equation*}
      \log\left(\frac{P_{T+1}}{P_{1}} \right) = \log\left(\frac{\sum_{i=1}^n\tilde{p}_{T+1,j}}{P_{1}}\right) \geq \log\left( \frac{\sum_{j\in I^{*}}\tilde{p}_{T+1,j}}{P_{1}}\right)= \log\left(\sum_{j\in I ^{*}}\tilde{p}_{T+1,j}\right)-\log(P_{1}).
    \end{equation*}
    Using the inequality of arithmetic and geometric means, we obtain 
    \begin{align*}
        \log\left(\frac{P_{T+1}}{P_{1}} \right)&\geq \log\left(\sum_{j\in I ^{*}}\tilde{p}_{T+1,j}\right)-\log\left(P_{1}\right) \\
        &\geq \log\left(k \Big(\prod_{j\in I^{*}}\tilde{p}_{T+1,j}\Big)^{\frac{1}{k}}\right) - \log\left(P_{1}\right) \\
        &\geq \frac{1}{k} \sum_{j\in I^{*}} \log (\tilde{p}_{T+1,j})-\log\left( \frac{P_{1}}{k}\right).
    \end{align*}
    We now consider the update rule of the $\tilde{p}_{t}$ in Line \ref{algo:expupdate} of Algorithm \ref{algo:online_bandit_k_set},
    \begin{align*}
      \forall i\in[n],~~\tilde{p}_{T+1,i}= \tilde{p}_{1,i}\exp\left(\eta \sum ^{T}_{t=1}\tilde{L}_{t,i}\right).
    \end{align*}
    Taking the logarithm and summing over all actions $j\in I^*$, we have
    \begin{align*}
        \sum_{j\in I^{*}} \log \tilde{p}_{T+1,j}= \sum_{j\in I^{*}}\left[\log \tilde{p}_{1,j}+ \eta \sum^{T}_{t=1}\tilde{L}_{t,j}\right].
    \end{align*}
    This equality allows us to write the following lower bound, 
    \begin{align}\label{eq:log_lower}\tag{Lower-Bound}
       \log\left(\frac{P_{T+1}}{P_{1}}\right) \geq \frac{1}{k}\sum_{j\in I^{*}}\left[\log \tilde{p}_{1,j}+ \eta \sum ^{T}_{t=1}\tilde{L}_{t,j}\right] -\log\left(\frac{P_{1}}{k}\right).
    \end{align}
    \paragraph{Central inequality.}
    Then, using \eqref{eq:upper} and \eqref{eq:log_lower}, we get
    \begin{equation}\label{eq:upper_lower_bound}
        \sum_{j\in I^{*}}\left[\log \tilde{p}_{1,j}+ \frac{\eta}{k} \sum ^{T}_{t=1}\tilde{L}_{t,j}\right] -\log \frac{P_{1}}{k}\leq \log \frac{P_{T+1}}{P_{1}} \leq  \sum^{T}_{t=1}\sum_{i\notin J_{t}}v_{t,i}\left(\eta\tilde{L}_{t,i}+\eta^{2}\tilde{L}_{t,i}^{2}\right). 
    \end{equation}
    We initialize the weights uniformly $\tilde{p}_{1,i}=1$.
    Hence, \eqref{eq:upper_lower_bound} becomes
    \[
    \frac{\eta}{k} \sum_{j\in I^*}\sum ^{T}_{t=1}\tilde{L}_{t,j}-\log \frac{n}{k} \leq \frac{\eta}{1-\gamma} \sum^{T}_{t=1}\sum_{i\notin J_t}p_{t,i}\hat{L}_{t,i}+\frac{\eta c\sqrt{nT}}{k(1-\gamma)} + \frac{2\eta^{2}}{k(1-\gamma)} \sum_{t=1}^{T}\sum_{i\in I_t\backslash J_t}\hat{L}_{t,i} + \frac{2\eta^{2}c^{2}n}{\gamma k^{2}(1-\gamma)}.
    \]
    Then, we multiply by $\frac{k(1-\gamma)}{\eta}$ and use $\hat{L}_{t,i}=L_{t,i}/(k p_{t,i})1_{[i\in I_t\backslash J_t]}$ to obtain
    \begin{equation}\label{eq:to_be_referenced}
    (1-\gamma) \sum_{j\in I^*}\sum ^{T}_{t=1}\tilde{L}_{t,j} \leq \frac{k(1-\gamma)}{\eta}\log \frac{n}{k} +  \sum^{T}_{t=1}\sum_{i\in I_t\backslash J_t}L_{t,i} + c\sqrt{nT} + 2\eta \sum_{t=1}^{T}\sum_{i\in I_t\backslash J_t}\hat{L}_{t,i} + \frac{2\eta c^{2}n}{\gamma k}.
    \end{equation}
    From the optimality of $I^*$ in \eqref{eq:best_action_proof} it follows in particular that the average estimate $\tilde{L}_{t,i}$ over the indices of the optimal action is larger or equal to the average $\tilde{L}_{t,i}$ over all $n$ indices, \textit{i.e.},
    \begin{equation*}
        \frac{1}{n} \sum_{t=1}^T\sum_{i=1}^n\tilde{L}_{t,i}\leq \frac{1}{k} \sum_{t=1}^T\sum_{i\in I^*}\tilde{L}_{t,i}\quad \left(=\frac{1}{k}\hat{\Gamma}_{T,I^*}\right).
    \end{equation*}
    From $U_{t,i} \geq 0$ it follows that $\hat{L}_{t,i}\leq \tilde{L}_{t,i}$. 
    Using this fact and noting that since $U_{t,i} \geq 0$ we have that, $\hat{L}_{t,i}\leq \tilde{L}_{t,i}$, it follows,
    \[
    \sum_{t=1}^{T}\sum_{i=1}^{n} \hat{L}_{t,i} \leq \sum_{t=1}^{T}\sum_{i=1}^{n} \tilde{L}_{t,i} \leq \frac{n}{k} \hat{\Gamma}_{T, I^*}.
    \]
    Now, applying the above result in \eqref{eq:to_be_referenced} with $\eta=(k\gamma)/(2n)$ and the definition of $\hat{\Gamma}_{T, I^*}$, we obtain 
    \[
    (1-\gamma) \hat{\Gamma}_{T, I^*} \leq \frac{2n(1-\gamma)}{\gamma}\log \frac{n}{k} +  \sum^{T}_{t=1}\sum_{i\in I_t\backslash J_t}L_{t,i} + c\sqrt{nT} + \gamma \hat{\Gamma}_{T, I^*} + c^2.
    \]
    Further rearranging yields
    \[
    (1-3\gamma) \hat{\Gamma}_{T, I^*} \leq \frac{2n(1-\gamma)}{\gamma}\log \frac{n}{k} +  \sum^{T}_{t=1}\sum_{i\in I_t\backslash J_t}L_{t,i} + c\sqrt{nT} + c^2.
    \]
    Finally, with $\sum_{t=1}^{T}\sum_{i\in I_t\backslash J_t}L_{t,i}\leq \sum_{t=1}^{T}\sum_{i\in I_t}L_{t,i}= G$ we have, 
    \begin{equation}\label{eq:simple_ineq} 
        (1-2\gamma) \hat{\Gamma}_{T,I^*}-\frac{2n(1-\gamma)}{\gamma}\log \frac{n}{k}\leq G+c\sqrt{nT}+c^{2}.
    \end{equation}
    \paragraph{Concentration.}
    Let us now use the concentration result in Theorem \ref{thm:e2p} with the sequence $Y_{t,i}:= 1_{i\notin J_t}(L_{t,i}-\hat{L}_{t,i})$ for $i\in[n]$.
    In order to use Theorem \ref{thm:e2p}, we need the following terms
    \begin{align*}
        \mathbb{E}[Y_{t,i}]=0,\quad Y_{t,i}\leq 1,\quad \mathbb{E}[Y_{t,i}^{2}]\leq U_{t,i}.
    \end{align*}
    Note that by definition $\mathbb{P}\left(i\in I_t\right) = k p_{t,i}$. With this identity we can now prove these terms, 
    \begin{align*}
        \mathbb{E}[Y_{t,i}]&= \mathbb{E}\left[1_{i\notin J_t}\right]\left(L_{t,i}-\frac{L_{t,i}}{k p_{t,i}}\mathbb{P}\left(i\in I_t\right)\right) =  \mathbb{E}[1_{i\notin J_t}](L_{t,i}-L_{t,i}) = 0\\
        Y_{t,i} &= 1_{i\notin J_t}(\underbrace{L_{t,i}}_{\leq 1}-\underbrace{\frac{L_{t,i}}{k p_{t,i}}1_{i\in I_t}}_{\geq 0})\leq 1\\
        \mathbb{E}[Y_{t,i}^{2}] &= \mathbb{E}[1_{i\notin J_t}(L_{t,i}-\hat{L}_{t,i})^2] = \mathbb{E}[1_{i\notin J_t}\hat{L}_{t,i}^2] - \mathbb{E}[1_{i\notin J_t}\hat{L}_{t,i}]^2 \leq \mathbb{E}[1_{i\notin J_t}\hat{L}_{t,i}^2] \\
        &= \sum_{i=1}^n p_{t,i} 1_{i\in I_t}\frac{L_{t,i}^2}{k^2p_{t,i}^2}1_{i\notin J_t} = 1_{i\notin J_t} \frac{L_{t,i}^2}{kp_{t,i}}\leq 1_{i\notin J_t} \frac{1}{kp_{t,i}} \\
        &= U_{t,i}
    \end{align*}
    In particular, we have
    \[
    \sigma_i := \sum_{t=1}^{T}\mathbb{E}_t\big[Y_{t,i}^2\big] \leq \sum_{t=1}^{T} U_{t,i} = \sum_{t=1}^T{\frac{1}{k p_{t,i}}} \leq \frac{nT}{k\gamma}.
    \]
    Noting that $Y_{t,i}\leq 1=R$, we choose $T$ such that 
    \[\log(n/\delta)\leq (e-2)\frac{nT}{k}.\]
    Hence we can use the first case of Theorem \ref{thm:e2p} with $\sigma'=\frac{nT}{k}$. We obtain 
    \begin{align*}
        P\left(\sum^T_{t=1}Y_{t,i}\leq \sqrt{(e-2)\log\left(\frac{n}{\delta}\right)}\left(\frac{\sigma_i}{\sqrt{\sigma'}}+\sqrt{\sigma'}\right)\right)\geq 1-\frac{\delta}{n}.
    \end{align*}
    Note that
    \begin{align*}
        \frac{\sigma_i}{\sqrt{\sigma'}}+\sqrt{\sigma'}\leq \sqrt{\frac{k}{nT}}\sum^T_{t=1}U_{t,i} + \sqrt{\frac{nT}{k}}.
    \end{align*}
    Using this upper bound and the fact that $e-2<1$, we write
    \begin{align*}
        P\left(\sum^T_{t=1}Y_{t,i}\leq \sqrt{\log\left(\frac{n}{\delta}\right)}\left(\sqrt{\frac{k}{nT}}\sum^T_{t=1}U_{t,i} + \sqrt{\frac{nT}{k}}\right)\right)\geq 1-\frac{\delta}{n}.
    \end{align*}
    This concentration bound only holds for one fixed $i\in [n]$. In order to obtain a bound which holds for all $i\in[n]$ simultaneously, we take the union bound over all $i\in [n]$,
    \begin{align*}
        P\left(\forall i\in [n]:\sum_{t=1}^{T}Y_{t,i}\leq \sqrt{\log \left(\frac{n}{\delta}\right)}\left(\sqrt{\frac{k}{nT}}\sum^T_{t=1}U_{t,i} + \sqrt{\frac{nT}{k}}\right) \right)\geq 1-\delta.
    \end{align*}
    Since the bound now holds for all $i\in[n]$, we can sum over multiple $i$. In particular, we now sum over all $i$ in one action $I$,
    \begin{align*}
        P\left(\forall I\in \mathcal{P}_k\big([n]\big):\sum_{i\in I}\sum_{t=1}^{T}Y_{t,i}\leq \sqrt{\log \frac{n}{\delta}}\sum_{i\in I}\left(\sqrt{\frac{k}{nT}}\sum^T_{t=1}U_{t,i} + \sqrt{\frac{nT}{k}}\right)\right)\geq 1-\delta.
    \end{align*}
    Note that for an $i \in J_t$ for any $t\in [T]$, we have $p_{t,i}=1/k$ by Line \ref{line:def_p_t_i}. Hence,
    \begin{align}\label{eq:Lhat}
        \hat{L}_{t,i}  - L_{t,i}= \frac{L_{t,i}}{k\cdot 1/k}  - L_{t,i}= L_{t,i} - L_{t,i} = 0, \quad \forall i \in J_t.
    \end{align}
    Using \eqref{eq:Lhat} and by definition of $G_i$, $\hat{G}_i$ and $Y_{t,i}$, we can equivalently write
    \begin{align*}
        \sum_{i\in I} (G_i -\hat{G}_i)&= \sum_{i\in I} \left(\sum_t^T L_{t,i} - \sum_t^T \hat{L}_{t,i}\right)\\  &=\sum_{i\in I\backslash J_t} \left(\sum_t^T L_{t,i} - \sum_t^T \hat{L}_{t,i}\right) + \sum_{i\in I\cap J_t} \left(\sum_t^T L_{t,i} - \sum_t^T \hat{L}_{t,i}\right)\\
        &= \sum_{i\in I}\sum_{t=1}^{T} 1_{i\notin J_t}(L_{t,i}-\hat{L}_{t,i}) =\sum_{i\in I}\sum_{t=1}^{T}Y_{t,i}.
    \end{align*}
    Hence,
    \begin{align*}
        P\left(\sum_{i\in I}G_{i}-\hat{G}_{i}\leq \sqrt{\log \frac{n}{\delta}}\sum_{i\in I}\left(\sqrt{\frac{k}{nT}}\sum^T_{t=1}U_{t,i} + \sqrt{\frac{nT}{k}}\right)\right)\geq 1-\delta.
    \end{align*}
    We simplify the upper bound as follows
    \begin{align*}
         \sqrt{\log \frac{n}{\delta}}\sum_{i\in I}\left(\sqrt{\frac{k}{nT}}\sum^T_{t=1}U_{t,i} + \sqrt{\frac{nT}{k}}\right) = \sqrt{\log \frac{n}{\delta}}\left(\sqrt{\frac{k}{nT}}\sum_{t=1}^{T}\sum_{i\in I}U_{t,i}+\sqrt{knT}\right).
    \end{align*}
    With $c=\sqrt{k\log \frac{n}{\delta}}$ we obtain
    \begin{align*}
        \hat{\Gamma}_{T,I}= \sum_{i\in I} \hat{G}_{i} + \sqrt{\frac{k \log n/\delta }{nT}} \sum_{t=1}^{T}\sum_{i\in I} U_{t,i}.
    \end{align*}
    Using these two terms, we can rewrite the concentration bound as
    \begin{align*}
        P\left(\sum_{i\in I}G_{i}\leq \hat{\Gamma}_{T,I} + c\sqrt{nT}\right)\geq 1-\delta.
    \end{align*}
    Since the inequality holds for any $I\in \mathcal{P}_k\big([n]\big)$, it holds in particular for the optimal action $I^{*}$ and it follows
    \begin{align*}
        P\left(G_{max}\leq \hat{\Gamma}_{T,I^{*}}+c\sqrt{nT}\right)\geq 1-\delta
    \end{align*}
    Multiplying this inequality with $(1-2\gamma)$ and rearranging  \eqref{eq:simple_ineq}, we have with probability at least $1-\delta$
    \begin{align*}
        G_{max}-G&\leq \frac{2n(1-\gamma)}{\gamma}\log \frac{n}{k} + 2(1-\gamma)c\sqrt{nT}+c^{2} + 2\gamma G_{max}\\
         &\leq \frac{2n}{\gamma}\log \frac{n}{k} + 2c\sqrt{nT}+c^{2} + 2\gamma kT
    \end{align*}
    The second inequality follows from $1\geq 1-\gamma$ and $G_{max}\leq kT$. By definition of $\gamma$ and $c$, we conclude the proof with
    \begin{align*}
        R_{T} =  G_{max}-G \leq 2\sqrt{knT\log \frac{n}{\delta}} + 4\sqrt{knT\log \frac{n}{k}}+ k\log \frac{n}{\delta}.
    \end{align*}
    \end{proof}
Using the regret bound from Theorem \ref{th:EXP4.MP}, we derive the convergence of \eqref{eq:min_max_learning} on $\mathcal{S}_{n,k}$.
\begin{theorem}
Let $\delta>0$. Consider running $T\geq \max \{\log(n/\delta), n\log(n/k)/k \}$ rounds of Algorithm \ref{algo:online_bandit_k_set} with a choice of online learning algorithm for the $w$-player ensuring a worst case regret $R_T^w\leq C \sqrt{T}$ for some $C>0$. Further fix the parameters
\begin{align*}
    \eta=\frac{k\gamma}{2n},\quad \gamma=\sqrt{\frac{n\log(n/k)}{kT}},\quad c=\sqrt{k\log(n/\delta)}.
\end{align*}
Write $a_t$ the actions of the bandit and $w_t$ that of the online $w$-player.
Then with probability $1-\delta$, we have
\begin{equation*}
    \Delta(\Bar{w},\Bar{a}) \leq C\sqrt{\frac{1}{T}} + 2\sqrt{\frac{kn}{T} \log\left(\frac{n}{\delta}\right)} +4\sqrt{\frac{kn}{T}\log \left(\frac{n}{k}\right)}+ \frac{k}{T}\log\left( \frac{n}{\delta}\right),
\end{equation*}
where $\Bar{a}=\frac{1}{T}\sum_{t=1}^{T}a_t$ and $\Bar{w}=\frac{1}{T}\sum_{t=1}^{T}w_t$.
\end{theorem}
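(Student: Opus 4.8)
The plan is to derive Theorem~\ref{th:topk} as a direct corollary of the general game-convergence result, Theorem~\ref{thm:general}, in its high-probability branch \eqref{eq:p_gap}, by feeding in the two required regret guarantees: the worst-case regret of the $w$-player and the high-probability regret of the EXP4.MP $p$-player established in Theorem~\ref{th:EXP4.MP}. Recall that \eqref{eq:p_gap} states that, with probability $1-\delta$,
\[
\Delta(\Bar{w},\Bar{a}) \leq \frac{1}{T}\big(R_T^w + R_T^p\big) \leq \frac{1}{T}\big(\epsilon_w(T) + \epsilon_p(T,\delta)\big),
\]
so it suffices to supply explicit $\epsilon_w$ and $\epsilon_p$ and then simplify.

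First I would set $\epsilon_w(T) = C\sqrt{T}$, which is exactly the worst-case $w$-player regret assumed in the hypothesis (this holds deterministically; for convex losses it is delivered, e.g., by OGD as in the proof of Theorem~\ref{th:shalev}). Second, I would invoke Theorem~\ref{th:EXP4.MP} with the stated parameters $\eta = k\gamma/(2n)$, $\gamma = \sqrt{n\log(n/k)/(kT)}$, and $c = \sqrt{k\log(n/\delta)}$; these match verbatim the hypotheses of that theorem, and the assumed lower bound $T \geq \max\{\log(n/\delta),\, n\log(n/k)/k\}$ is precisely the condition needed there (it guarantees both $\eta\tilde{L}_{t,i}\leq 1$ in the upper-bound step and that the first case of the concentration inequality of Theorem~\ref{thm:e2p} applies). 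This yields, with probability $1-\delta$,
\[
\epsilon_p(T,\delta) = 2\sqrt{knT\log(n/\delta)} + 4\sqrt{knT\log(n/k)} + k\log(n/\delta).
\]

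The final step is purely arithmetic: substitute both bounds into \eqref{eq:p_gap} and divide each summand by $T$. For instance $\frac{1}{T}\cdot C\sqrt{T} = C\sqrt{1/T}$ and $\frac{1}{T}\cdot 2\sqrt{knT\log(n/\delta)} = 2\sqrt{\frac{kn}{T}\log(n/\delta)}$, and analogously for the remaining two terms, which recovers exactly the claimed bound on $\Delta(\Bar{w},\Bar{a})$.

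I do not expect a genuine obstacle at this stage, since all the substantive work lives in Theorem~\ref{th:EXP4.MP}. The only point requiring care is conceptual rather than computational: one must use the \emph{high-probability} regret bound against an \emph{adaptive} adversary, because in the game the $w$-player reacts to the bandit's actions, so a pseudo-regret bound would not close the argument (cf.\ Appendix~\ref{sec:regret_game}); this is exactly why the variance-controlling term $U_{t,i}$ and the concentration bound of Theorem~\ref{thm:e2p} were built into EXP4.MP. Finally, since only the $p$-player's bound is probabilistic while the $w$-player's is deterministic, no union bound across the two players is needed and the overall failure probability stays at $\delta$.
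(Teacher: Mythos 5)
Your proposal is correct and follows essentially the same route as the paper: the paper's proof also derives Theorem \ref{th:topk} by combining the high-probability branch \eqref{eq:p_gap} of Theorem \ref{thm:general} with the $w$-player's assumed $C\sqrt{T}$ regret (instantiated via OGD) and the EXP4.MP bound of Theorem \ref{th:EXP4.MP}, then dividing by $T$. Your additional remarks on why the adaptive-adversary high-probability bound (rather than pseudo-regret) is essential, and why no union bound over the two players is needed, accurately reflect the paper's discussion.
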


\begin{proof}[Proof of Theorem \ref{th:topk}]
Follows from Theorem \ref{thm:general}, when the $w$-player chooses OGD and the $p$-player chooses EXP4M. When $\ell(h(\cdot,x_{i}),y_{i})$ is $L$-Lipschitz, and the parameters $w$ lie in a $\ell_2$-Ball of size $B$, running OGD with step size of $\eta=\frac{B}{L\sqrt{2T}}$, we have \citep[Corollary 2.7]{shalev2011online},
\[
\frac{1}{T}\epsilon_w(T)\leq BL\sqrt{2/T}.
\]
From Theorem \ref{th:EXP4.MP} , we have that for parameters as defined in  \eqref{eq:def_hyper_parameters}
we have with probability $1-\delta$ that
\begin{align*}
    R_T\leq 2\sqrt{knT\log(n/\delta)}+4\sqrt{knT\log(n/k)} + k\log(n/\delta),
\end{align*}
which concludes the proof.
\end{proof}

\section{Generalizing the $k$-set: $\alpha$-set}\label{app:generalizing_k_set}

In this section, we suggest that efficient online-bandit strategies can be developed for \eqref{eq:min_max_learning} with sets $\mathcal{K}$ beyond the \eqref{eq:top_k_constraint}.
Consider a weight vector $\alpha \in [0, 1]^n$ satisfying $k \alpha \in \{1, \ldots, k\}^n$.
The following \emph{$\alpha$-set} is a generalization of the $k$-set
\begin{equation*}\label{eq:alpha-set}
    \cK_\alpha := \{p\in \R^n\mid 0 \leq p_i \leq  \alpha_i , \sum_{i=1}^n p_i = 1\}. \tag{$\alpha$-Set}
\end{equation*}
For $\cS_{n, k}$, the constraints $p_i\leq \frac{1}{k}$ uniformly limit the influence of any data point $x_i$, guaranteeing that the bandit takes into account at least $k$ data points in each iteration.
Conversely, the \eqref{eq:alpha-set} constraints provide an opportunity to treat data points heterogeneously or take into account prior information about the data set.
This flexibility might be helpful, for example, in the context of learning with non-standard aggregated losses \citep{shalev2016minimizing,fan2017learning} to distill external confidence score for the data points to be outliers.\\\\
However, we are not yet aware of learning problems involving \eqref{eq:min_max_learning} with an  \eqref{eq:alpha-set} that is not a \eqref{eq:top_k_constraint}.
Hence, we only outline the appealing properties of \eqref{eq:alpha-set} that makes it suitable to design an efficient online-bandit strategy.
Also, as opposed to the exhaustive treatment done in Section \ref{sec:average_top_k}, we do not delve upon an efficient sampling strategy similar to DepRound (Algorithm \ref{algo:depRound}) nor on an efficient combinatorial bandit adapted to \eqref{eq:alpha-set}.\\\\
In Section \ref{sec:char}, we first study the extremal structure of \eqref{eq:alpha-set} for some values of $\alpha\in[0,1]^n$ and show in Section \ref{sec:one_to_one}, that for these values this family satisfies Assumption \ref{assum:structure_K_alpha} similar in spirit to Assumption \ref{assum:structure_K}.

\subsection{Polyhedral Representation of $\text{Ext}(\cK_\alpha)$}\label{sec:char}
The result below characterizes the extremal structure of $\cK_\alpha$ for some values of $\alpha\in[0,1]^n$.

\begin{theorem}\label{th:extrem_K_alpha}
Consider $\alpha \in [0, 1]^n$ such that $k \alpha \in \{1, \ldots, k\}^n$, $p =(p_i)_{i=1,\cdots,n}\in \cK_\alpha$, and $I_p:=\{i \in [n]\mid p_i \in ]0, \alpha_i[\}$.
Then, $p\in \text{Ext} (\cK_\alpha)$ if and only if $|I_p| \leq 1$. 
\end{theorem}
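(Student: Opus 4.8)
The plan is to prove both implications directly from the definition of an extreme point as a point of $\mathcal{K}_\alpha$ that cannot be written as a nontrivial convex combination of two other points of $\mathcal{K}_\alpha$. The only feature of the integrality hypothesis $k\alpha \in \{1,\ldots,k\}^n$ that I would use is that it forces $\alpha_i \geq 1/k > 0$ for every $i$, so that the three regimes $p_i = 0$, $p_i \in (0,\alpha_i)$, and $p_i = \alpha_i$ are genuinely distinct; then $I_p$ is precisely the set of coordinates at which neither the lower-bound constraint $p_i \geq 0$ nor the upper-bound constraint $p_i \leq \alpha_i$ is active.

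For the ``only if'' direction I would argue by contraposition: assuming $|I_p| \geq 2$, I exhibit $p$ as a midpoint. Pick distinct $i,j \in I_p$, so $0 < p_i < \alpha_i$ and $0 < p_j < \alpha_j$. Choosing $\epsilon > 0$ small enough that $p_i \pm \epsilon$ and $p_j \pm \epsilon$ both stay strictly inside their intervals, set $q = p + \epsilon(e_i - e_j)$ and $r = p - \epsilon(e_i - e_j)$. Since $e_i - e_j$ has zero coordinate sum, $\langle \mathds{1}; q\rangle = \langle \mathds{1}; r\rangle = \langle \mathds{1}; p\rangle = 1$, and the box constraints hold by the choice of $\epsilon$; hence $q,r \in \mathcal{K}_\alpha$, and $p = \tfrac{1}{2}(q+r)$ with $q \neq r$, so $p \notin \text{Ext}(\mathcal{K}_\alpha)$.

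For the ``if'' direction, suppose $|I_p| \leq 1$ and $p = \lambda q + (1-\lambda) r$ with $q,r \in \mathcal{K}_\alpha$ and $\lambda \in (0,1)$; I must force $q = r = p$. The key observation is that boundary coordinates are pinned: if $p_i = 0$, then $0 = \lambda q_i + (1-\lambda) r_i$ with $q_i, r_i \geq 0$ and $\lambda \in (0,1)$ forces $q_i = r_i = 0$; symmetrically, $p_i = \alpha_i$ forces $q_i = r_i = \alpha_i$. Thus $q$ and $r$ agree with $p$ off $I_p$. If $I_p = \emptyset$ this already gives $q = r = p$. If $I_p = \{i_0\}$, then $q_\ell = r_\ell = p_\ell$ for all $\ell \neq i_0$, and the common equality constraint $\langle \mathds{1}; q\rangle = \langle \mathds{1}; r\rangle = 1$ pins the last coordinate as $q_{i_0} = r_{i_0} = p_{i_0}$. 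Either way $q = r = p$, so $p \in \text{Ext}(\mathcal{K}_\alpha)$.

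I do not anticipate any real obstacle: this is the standard vertex characterization of the polytope cut out by the box constraints together with the single equality $\langle \mathds{1}; p\rangle = 1$, and indeed one could alternatively invoke the active-constraint rank criterion, noting that the active box normals $\{e_i : i \notin I_p\}$ together with $\mathds{1}$ span $\reals^n$ exactly when $|I_p| \leq 1$. The one point needing a little care is keeping the perturbation in the ``only if'' direction feasible for all box constraints simultaneously, which is why I would take $\epsilon$ below the distance from $p_i$ and $p_j$ to the endpoints of their respective intervals.
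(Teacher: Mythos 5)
Your proof is correct, and its two halves relate to the paper's proof in different ways. For the direction ``$p\in\text{Ext}(\cK_\alpha) \Rightarrow |I_p|\leq 1$'' you use exactly the paper's argument: perturb two strictly interior coordinates in opposite directions by $\pm\epsilon(e_i-e_j)$, with $\epsilon$ below the distances to the interval endpoints, exhibiting $p$ as the midpoint of two distinct points of $\cK_\alpha$. For the converse, however, you take a genuinely different and more elementary route. The paper invokes linear-programming machinery: with $|I_p|\leq 1$ there are $n$ linearly independent active constraints at $p$ ($n-1$ active box constraints plus the equality $\langle\mathds{1};p\rangle=1$), so $p$ is a basic feasible solution and hence extreme by \citep[Theorem 2.3]{bertsimas1997introduction}. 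You instead argue directly from the definition: in any decomposition $p=\lambda q+(1-\lambda)r$ with $q,r\in\cK_\alpha$ and $\lambda\in(0,1)$, every boundary coordinate is pinned ($p_i=0$ forces $q_i=r_i=0$ since $q_i,r_i\geq 0$, and $p_i=\alpha_i$ forces $q_i=r_i=\alpha_i$ since $q_i,r_i\leq\alpha_i$), and the at most one remaining coordinate is then pinned by the common sum constraint, so $q=r=p$. The trade-off: the paper's rank criterion is a one-line appeal to a standard theorem that generalizes to arbitrary polyhedra, but it asserts the linear independence of the active normals without checking it; your pinning argument is self-contained, needs no citation, and makes explicit where $|I_p|\leq 1$ enters --- indeed your side remark that $\{e_i : i\notin I_p\}\cup\{\mathds{1}\}$ spans $\R^n$ precisely when $|I_p|\leq 1$ supplies the verification the paper's version leaves implicit. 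Your observation that the integrality hypothesis $k\alpha\in\{1,\ldots,k\}^n$ plays no essential role in this theorem is also accurate; it is needed for Corollary \ref{cor:int_vert}, not for this characterization.
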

\begin{proof}
Consider $p\in \cK_\alpha$ such that $|I_{p}|\leq 1$.

The set $\mathcal{K}_{\alpha}$ is defined by the following linear inequalities and equality constraints
\begin{equation*}
  \left\{
    \begin{split}
    &\sum_{i=1}^{n}{p_i}=1\\
    &p_i\geq 0 \text{ and } p_i\leq \alpha_i ~~\forall i\in[n].
    \end{split}
  \right.
\end{equation*}
In particular, since $|I_{p}|\leq 1$, there exists $n-1$ distinct indices $i\in[n]$ s.t. $p_i=0$ or $p_i=\alpha_i$.
Besides, since the inequality constraint $\sum_{i=1}^{n}p_i=1$ is satisfied, all the equality constraints defining $\mathcal{K}_{\alpha}$ are active for $p$ and there are $n$ active constraints that are linearly independent. Consequently, $p\in\mathcal{K}_{\alpha}$ is a \textit{basic feasible solution} for $\mathcal{K}_{\alpha}$ \citep[Definition 2.9.]{bertsimas1997introduction}.
Then, by \citep[Theorem 2.3]{bertsimas1997introduction}, we have $p\in\text{Ext} (\cK_\alpha)$, proving the first direction of the statement.\\\\
Conversely, consider $p\in \text{Ext} (\cK_\alpha)$ and suppose toward a contradiction that $|I_p| \geq 2$. 
Then, there exist $h,\ell \in [n]$, $h \neq \ell$, such that $p_h\in(0,\alpha_h)$ and $p_\ell\in (0,\alpha_\ell)$.
Let
\(\epsilon := \min\{p_{h}, \alpha_h - p_h, p_{\ell}, \alpha_\ell - p_\ell \}> 0\) 
and define the vectors $u, v\in \R^n$ as follows
\begin{equation*}
    u_i = \left.
  \begin{cases}
    p_i, & \text{for } i\not\in\{ {h}, \ell\} \\
    p_i + \epsilon, & \text{for } i = {h} \\
    p_i - \epsilon, & \text{for } i = \ell
  \end{cases}
  \right.
  \qquad \text{ and } \qquad
  v_i = \left.
  \begin{cases}
    p_i, & \text{for } i\not\in\{ {h}, \ell\} \\
    p_i - \epsilon, & \text{for } i = {h} \\
    p_i + \epsilon, & \text{for } i = \ell.
  \end{cases}
  \right.
\end{equation*}
By definition $\sum_{i=1}^{n}{u_i}=\sum_{i=1}^{n}{v_i}=\sum_{i=1}^{n}{p_i}=1$, $p_l \pm \epsilon \in [0,\alpha_l]$ and $p_h \pm\epsilon [0, \alpha_h]$ so that $u,v\in \cK_\alpha$.
However, by construction, $p = \frac{1}{2}(u + v)$ with $u\neq v$ which contradicts $p\not\in \text{Ext} (\cK_\alpha)$.
Hence, $|I_p|\leq 1$.
\end{proof}

\begin{corollary}\label{cor:int_vert}
Consider $\alpha \in [0, 1]^n$ such that $k \alpha \in \{1, \ldots, k\}^n$ and $p\in\text{Ext}(\cK_\alpha)$.
Then, we have $|\supp(p)| \leq k$ and $kp\in\{0,1,\ldots,k\}^n$.
\end{corollary}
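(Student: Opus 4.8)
The plan is to read off both claims directly from the combinatorial description of extreme points furnished by Theorem \ref{th:extrem_K_alpha}, which tells us that $p \in \text{Ext}(\cK_\alpha)$ forces $|I_p| \leq 1$, where $I_p = \{i \in [n] \mid p_i \in ]0, \alpha_i[\}$. In words, at most one coordinate of $p$ is strictly between its bounds, while every other coordinate is pinned to either $0$ or $\alpha_i$. I would record this dichotomy at the outset and then handle the two sources of coordinates — the pinned ones and the at-most-one free one — separately.

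First I would establish integrality of $kp$. For each pinned index $j \notin I_p$ we have $p_j \in \{0, \alpha_j\}$, hence $kp_j \in \{0, k\alpha_j\}$; by the standing hypothesis $k\alpha_j \in \{1, \ldots, k\}$, so $kp_j$ is an integer lying in $\{0, 1, \ldots, k\}$. If $I_p = \emptyset$ this already covers every coordinate. If instead $I_p = \{i_0\}$, I would invoke the normalization $\sum_{i=1}^n p_i = 1$: multiplying by $k$ gives $kp_{i_0} = k - \sum_{j \neq i_0} kp_j$, and since $k$ together with each $kp_j$ for $j \neq i_0$ is an integer, $kp_{i_0}$ is an integer as well. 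The bounds $0 < p_{i_0} < \alpha_{i_0} \leq 1$ then yield $0 < kp_{i_0} < k\alpha_{i_0} \leq k$, so $kp_{i_0} \in \{1, \ldots, k-1\} \subseteq \{0, 1, \ldots, k\}$. This gives $kp \in \{0, 1, \ldots, k\}^n$ in all cases.

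The support bound then follows cheaply from integrality. Every nonzero coordinate $p_i$ now satisfies $kp_i \geq 1$, being a positive integer, so summing over the support and using normalization once more gives
\[
|\supp(p)| \leq \sum_{i \in \supp(p)} kp_i = \sum_{i=1}^n kp_i = k \sum_{i=1}^n p_i = k,
\]
which is exactly the claimed inequality.

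The only step requiring care — and the main obstacle — is the free coordinate $i_0$ in the case $|I_p| = 1$: a priori $p_{i_0}$ could be any real number in $]0, \alpha_{i_0}[$, and no local constraint forces it to be a multiple of $1/k$. The resolution is global rather than local: the integrality of all the remaining coordinates, combined with the single linear equality $\sum_i p_i = 1$, leaves no residual freedom and pins $kp_{i_0}$ to an integer. Once this observation is in hand, both conclusions are immediate.
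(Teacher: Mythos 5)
Your proof is correct, and it reorganizes the argument relative to the paper's own proof. The paper handles the two claims in the opposite order: it first establishes $|\supp(p)| \leq k$ by contradiction --- if $|\supp(p)| > k$, then since $|I_p|\leq 1$ (Theorem \ref{th:extrem_K_alpha}) at least $k$ coordinates lie in $J := \{i : p_i = \alpha_i\}$ with $\alpha_i \geq 1/k$, so $1 = \sum_i p_i \geq |J|/k + \sum_{i\in I_p} p_i$ forces $I_p = \emptyset$ and $|\supp(p)| = k$, a contradiction --- and only afterwards derives integrality, using the same normalization trick you use to pin down the free coordinate. Your route inverts this: you prove $kp \in \{0,1,\ldots,k\}^n$ first (your treatment of the pinned coordinates, and of the at-most-one free coordinate via $kp_{i_0} = k - \sum_{j\neq i_0} kp_j$, is essentially identical to the paper's), and then the support bound falls out in one line, since every supported coordinate satisfies $kp_i \geq 1$ and these integers sum to $k$. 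What your ordering buys is economy: the paper's separate contradiction argument for the support bound becomes unnecessary, because integrality already forces each nonzero coordinate to carry at least $1/k$ of the total mass. A minor bonus of your version is the sharper conclusion $kp_{i_0} \in \{1,\ldots,k-1\}$ for the free coordinate, where the paper only records that it is a nonnegative integer smaller than $k$.
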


\begin{proof}
Let $p\in\text{Ext}(\cK_\alpha)$. By Theorem \ref{th:extrem_K_alpha}, $|I_p| \leq 1$.
Assume by contradiction that $|\supp(p)| > k$ and write $J:=\{i\in[n] \text{ s.t. } p_i = \alpha_i\}$.
Since $|I_p|\leq 1$ and $|\supp(p)| > k$, we have $|J|\geq k$. Also, by definition of $\alpha$, we have $\alpha_i\geq \frac{1}{k}$. Hence,
\[
1 = \sum_{i=1}^{n}{p_i} = \sum_{i\in J}{\alpha_i} + \sum_{i\in I_p}{p_i} + 0 \geq \frac{|J|}{k} + \sum_{i\in I_p}{p_i}.
\]
Since $\frac{|J|}{k}\geq 1$, we have that $|I_p|=0$ and ultimately $|J|=k$ hence $|\text{supp}(p)|= k$ which is a contradiction.\\\\
Finally, let us show that $kp\in\{0,1,\ldots,k\}^n$.
We have $|I_p|\leq 1$.
For any $i\notin I_p$, we have $p_i=0$ or $p_i\alpha_i\in \{1/k,2/k,\cdots,1\}$.
If $|I_p|=0$, we hence have $kp\in\{0,1,\ldots,k\}^n$.
Otherwise, $I_p={i_0}$, and we have
\[
p_{i_0} + \sum_{i\in \text{supp}(p)\setminus \{i_0\}}{\alpha_i} = 1.
\]
Hence, with $k\alpha_i\in\mathbb{N}$ and $p_{i_0}\geq 0$, we have 
\[
k p_{i_0} = k - \sum_{i\in \text{supp}(p)\setminus \{i_0\}}{k \alpha_i},
\]
so that $k p_{i_0}$ is indeed an integer smaller than $k$.
\end{proof}

\subsection{Multiset representation of $\text{Ext}(\cK_\alpha)$}\label{sec:one_to_one}

Assumption~\ref{assum:structure_K} imposes the existence of an injection $\mathcal{T}:\text{Ext}(\mathcal{K}) \rightarrow \mathcal{P}^*\big([n]\big)$.
In Section \ref{sec:average_top_k}, with $\mathcal{K}=\mathcal{S}_{n,k}$, this assumption is verified since there is a bijection $\mathcal{T}:\text{Ext}(\mathcal{S}_{n,k}) \rightarrow \mathcal{P}_k\big([n]\big)$.
In particular, each vertex of the $k$-set corresponds to a sample \emph{without} replacement from $[n]$ of size $k$.
Here, a vertex of the $\alpha$-set rather correspond to a sample \emph{with} replacement from $[n]$ of size $k$ and Assumption~\ref{assum:structure_K} does not hold anymore.
Let us now introduce the set of multisets of cardinality at most $k$,

\begin{align*}
    \cP_{\leq k}([n]):= \Big\{\{(a_1, m(a_1)), \ldots, (a_j, m(a_j))\} \mid & j \in [k],  a_\ell \in [n] \text{ and } m(a_\ell) \in \N \text{ for all } \ell \in [j], \\
    & a_h \neq a_\ell \text{ for } h, \ell\in [j] \text{ and }  h \neq \ell, \sum_{i = 1}^jm(a_i) = k \Big\} \cup \{\emptyset\}.
\end{align*}
For a $I \in \cP_{\leq k}([n])$ and $(a, m(a)) \in I$, $a\in [n]$ refers to an index and $m(a) \in [k]$ to the multiplicity of $a$.
We can now adapt Assumption~\ref{assum:structure_K} to the case of $\cP_{\leq k}([n])$.

\begin{assumption}[Extremal Structure of $\mathcal{K}$]\label{assum:structure_K_alpha}
For the compact convex set $\mathcal{K}\subset\mathcal{S}_n$ there exists an injective function $\mathcal{T}:\text{Ext}(\mathcal{K_\alpha}) \rightarrow \mathcal{P}_{\leq k}\big([n]\big)$.
\end{assumption}

\noindent For some values of $\alpha\in [0,1]^n$, \eqref{eq:alpha-set} satisfies Assumption \ref{assum:structure_K_alpha}.

\begin{lemma}
Let $\alpha \in [0, 1]^n$ such that $k \alpha \in \{1, \ldots, k\}^n$. Then, $\cK_\alpha$ satisfies Assumption~\ref{assum:structure_K_alpha}.
\end{lemma}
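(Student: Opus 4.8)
The plan is to build the injection $\mathcal{T}$ explicitly, by reading off the multiset of indices encoded by the (necessarily integer) vector $kp$. The key input is Corollary~\ref{cor:int_vert}: for any $p \in \text{Ext}(\cK_\alpha)$ we have $kp \in \{0,1,\ldots,k\}^n$ and $|\supp(p)| \leq k$. Since $p \in \mathcal{S}_n$ forces $\sum_{i} p_i = 1$, and hence $\sum_{i} kp_i = k$, the vector $kp$ is a vector of nonnegative integers summing to $k$ and supported on at most $k$ coordinates. This is exactly the data of a multiset of cardinality $k$ on at most $k$ distinct elements of $[n]$.

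Given this, I would define
\[
\mathcal{T}(p) := \big\{ (i, kp_i) \mid i \in \supp(p) \big\}.
\]
First I would verify $\mathcal{T}(p) \in \cP_{\leq k}([n])$: the indices $i \in \supp(p)$ are pairwise distinct; for each such $i$, the condition $p_i > 0$ together with $kp_i \in \N$ forces the multiplicity $m(i) = kp_i \in \{1,\ldots,k\}$; the number of distinct indices $j = |\supp(p)|$ lies in $[k]$ (at least $1$ because $p$ is a probability vector, at most $k$ by Corollary~\ref{cor:int_vert}); and $\sum_{i \in \supp(p)} m(i) = k\sum_{i} p_i = k$, matching the defining constraint of $\cP_{\leq k}([n])$.

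For injectivity, I would observe that $\mathcal{T}$ is in fact invertible on its image: from a multiset $\{(a_\ell, m(a_\ell))\}_\ell$ one recovers the unique vector with $p_{a_\ell} = m(a_\ell)/k$ and zeros elsewhere. Concretely, if $\mathcal{T}(p) = \mathcal{T}(p')$, then the two multisets coincide, so $\supp(p) = \supp(p')$ and $kp_i = kp'_i$ on this common support, while $p_i = p'_i = 0$ off it; hence $p = p'$, which establishes injectivity.

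I do not expect a genuine obstacle here: essentially all the content is packaged into Corollary~\ref{cor:int_vert}, whose integrality conclusion is precisely what makes the multiset $\mathcal{T}(p)$ well defined. The only points requiring a moment's care are confirming that each multiplicity is strictly positive (so that every support index genuinely appears in the multiset, rather than with multiplicity $0$) and that the multiplicities sum to exactly $k$; both follow immediately from $i \in \supp(p)$ and $\sum_{i} p_i = 1$.
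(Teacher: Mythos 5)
Your proposal is correct and matches the paper's own proof essentially verbatim: both define the map $\mathcal{T}(p)=\bigcup_{i\in\supp(p)}\{(i,kp_i)\}$, invoke Corollary~\ref{cor:int_vert} for integrality of $kp$ and $|\supp(p)|\leq k$ to place $\mathcal{T}(p)$ in $\cP_{\leq k}([n])$, and deduce injectivity by recovering $p$ coordinatewise from the multiset. Your verification is in fact slightly more explicit than the paper's (checking strictly positive multiplicities and that they sum to $k$), but the argument is the same.
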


\begin{proof}
Let $\alpha \in [0, 1]^n$ such that $k \alpha \in \{1, \ldots, k\}^n$ and $p \in \text{Ext}(\cK_\alpha)$.
Consider the function $\mathcal{T}$ defined via
\[p \mapsto \mathcal{T}(p):=\bigcup_{i \in \supp(p)}\{(i, k p_i)\}.\]
By Corollary~\ref{cor:int_vert}, $kp\in \{0, 1, \ldots, k\}^n$ and $|\text{supp}(p)|\leq k$.
Hence, $k p_i\in\mathbb{N}$ and since $\sum_{i\in\text{supp}(p)}{kp_i} = k$ we have that $\mathcal{T}(p)\in \cP_{\leq k}([n])$, \textit{i.e.},  $\text{Im}(\mathcal{T}(\text{Ext}(\mathcal{K}_{\alpha})))\subseteq \cP_{\leq k}([n])$.
It remains to prove that $\mathcal{T}$ is injective.
Suppose toward a contradiction that there exists $v\in \text{Ext}(\cK_\alpha)$ such that $p\neq v$ and $\mathcal{T}(p) = \mathcal{T}(v)$. By construction of $\mathcal{T}$, $p_i = v_i$ for all $i \in [n]$. Thus, $p = v$, a contradiction.
\end{proof}
\section{Parameters for Section \ref{sec:numerical_experiments}}\label{sec:params}
We call the $p$-player's learning rate $\eta_p$ and the $w$-player's learning rate $\eta_w$.
Further we write $N$ for the number of processed datapoints.
We provide the resulting learning rates for the specified data sets and algorithms in Table \ref{tab:topk}.
\begin{table}[h]
\centering
\begin{tabular}{@{}lcccc@{}}
\toprule
       & \multicolumn{2}{c}{Cancer} & \multicolumn{2}{c}{Boston} \\ \cmidrule(lr){2-3}\cmidrule(lr){4-5}
       & $\eta_w$     & $\eta_p$    & $\eta_w$     & $\eta_p$    \\ \midrule
S-AFL     & 3.51e-03     &   3.09e-05  &   5.22e-03  & 3.47e-05    \\
OL-FTRL   & 3.31e-02     & 1.95e-02    &  1.81e-01   &    7.04e-03 \\
OL-EXP.4M & 6.20e-03     & 2.43e-04    &  1.40e-03   &    2.53e-04 \\ \bottomrule
\end{tabular}
\caption{Learning rates for the $w$-player and the $p$-player used for experiments in Fig. \ref{fig:topk_cel} and Fig. \ref{fig:topk_mse} with a number of rounds $T$ corresponding to processing $10^7$ data points with $k=20$.}
\label{tab:topk}
\end{table}
\subsection{OL-EXP.4M}
We set the parameters for EXP.4M as called for in \citep{vural2019minimax}, \textit{i.e.},
\begin{equation*}
    \eta_p=\frac{k\gamma}{2n},\quad \gamma=\sqrt{\frac{n\log(n/k)}{kT}},\quad c=\sqrt{k\log(n/\delta)},
\end{equation*}
and $\eta_w=B\sqrt{\frac{2}{T}}$ \citep[Corollary 2.7]{shalev2011online}. We set the number of iterations as $T=\frac{N}{k}$ as each iteration requires sampling $k$ datapoints.
\subsection{OL-FTRL}
We set $\eta_w=B\sqrt{\frac{2}{T}}$ \citep[Corollary 2.7]{shalev2011online} and the number of iterations as $T=\frac{N}{n}$ as each iteration requires processing all $n$ datapoints.
In order to obtain $\eta_p$, we provide the regret bound for FTRL on $\mathcal{S}_{n,k}$ which is just a slight variation of the classical regret bound of FTRL on the simplex \citep[Cor. 2.14]{shalev2011online}.
\begin{theorem}
Let $R(u)=\frac{1}{\eta}\sum_{i=1}^n u_i\log(u_i)$ then running FTRL on $u\in \mathcal{S}_{n,k}$ for $T$ rounds with $\eta = \sqrt{\frac{\log(n/k)}{T}}$ guarantees
\begin{equation*}
    R_T\leq 2\sqrt{T\log \left(\frac{n}{k}\right)}.
\end{equation*}
\end{theorem}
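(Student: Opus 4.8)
The plan is to invoke the textbook Follow-the-Regularized-Leader analysis and then swap out the simplex-specific diameter estimate for one adapted to $\mathcal{S}_{n,k}$. Writing the per-round losses as linear functions $f_t(u) = \langle z_t; u\rangle$ with $\|z_t\|_\infty \leq 1$ (since $\ell$ takes values in $[0,1]$), and letting $u_1 = \argmin_{u\in\mathcal{S}_{n,k}} R(u)$ be the FTRL initialization, the classical FTRL regret decomposition \citep{shalev2011online} gives, for every comparator $u^\star \in \mathcal{S}_{n,k}$,
\begin{equation*}
R_T \leq \big(R(u^\star) - R(u_1)\big) + \sum_{t=1}^T \big(f_t(u_t) - f_t(u_{t+1})\big).
\end{equation*}
I would bound the two terms separately: the first by the range of $R$ over $\mathcal{S}_{n,k}$, and the second, the \emph{stability} term, by the strong convexity of $R$.

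For the range term, I would use that $\phi(u) := \sum_{i=1}^n u_i\log u_i$ is convex, so its maximum over the polytope $\mathcal{S}_{n,k}$ is attained at an extreme point. By the characterization $\text{Ext}(\mathcal{S}_{n,k}) = \{\frac{1}{k}\sum_{i\in I} e_i \mid I \in \mathcal{P}_k([n])\}$ recalled in Section~\ref{sec:average_top_k}, every such vertex gives $\phi = k\cdot\frac1k\log\frac1k = -\log k$, so $\max_{\mathcal{S}_{n,k}} \phi = -\log k$. For the minimum, $\phi$ is minimized on the full simplex at the uniform vector $\frac1n\mathds{1}$, which is feasible for $\mathcal{S}_{n,k}$ because $\frac1n \leq \frac1k$, whence $\min_{\mathcal{S}_{n,k}}\phi = -\log n$. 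Therefore
\begin{equation*}
R(u^\star) - R(u_1) \leq \frac{1}{\eta}\Big(\max_{\mathcal{S}_{n,k}}\phi - \min_{\mathcal{S}_{n,k}}\phi\Big) = \frac{\log(n/k)}{\eta}.
\end{equation*}
This replacement of $\log n$ by $\log(n/k)$ is the only point where the capped structure enters, and it is the heart of the claimed ``slight variation'' from the simplex bound.

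For the stability term, I would use that $\phi$ is $1$-strongly convex with respect to $\|\cdot\|_1$ on the simplex (Pinsker's inequality), so that $R = \frac1\eta\phi$ is $\frac1\eta$-strongly convex; the standard consequence for FTRL iterates is $\|u_t - u_{t+1}\|_1 \leq \eta\|z_t\|_\infty$, whence $f_t(u_t)-f_t(u_{t+1}) = \langle z_t; u_t - u_{t+1}\rangle \leq \eta\|z_t\|_\infty^2 \leq \eta$, and summing gives $\sum_{t=1}^T (f_t(u_t)-f_t(u_{t+1}))\leq \eta T$. Combining the two estimates yields $R_T \leq \frac{\log(n/k)}{\eta} + \eta T$, and substituting $\eta = \sqrt{\log(n/k)/T}$ makes both terms equal to $\sqrt{T\log(n/k)}$, giving $R_T \leq 2\sqrt{T\log(n/k)}$. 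The only genuinely new computation is the range of the negative entropy over $\mathcal{S}_{n,k}$; everything else transcribes the simplex proof verbatim, so I expect no real obstacle beyond correctly identifying that the entropy at the capped vertices equals $-\log k$ rather than $0$.
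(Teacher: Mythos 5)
Your proof is correct and takes essentially the same route as the paper's: the paper invokes \citep[Theorem 2.15]{shalev2007convex} as a black box for exactly the bound $R_T \leq R(u) - \min_{v\in\mathcal{S}_{n,k}} R(v) + \eta T$ that you re-derive via the be-the-leader decomposition plus the strong-convexity stability estimate, and in both arguments the only problem-specific step is showing that the range of the scaled negative entropy over $\mathcal{S}_{n,k}$ is $\log(n/k)$ rather than $\log n$. Your extreme-point argument for $\max_{\mathcal{S}_{n,k}}\phi = -\log k$ is in fact a slightly cleaner justification than the paper's direct assertion that the minimal entropy is attained at $u^* = (1/k,\ldots,1/k,0,\ldots,0)^T$ (whose statement also carries a harmless sign typo, $H(u^*) = -\log(k)$ where it should be $\log k$); the final bound is unaffected.
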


\begin{proof}
First note that $R$ is $\frac{1}{\eta}$-strongly convex with respect to the 1-norm. Then by \citep[Theorem 2.15]{shalev2007convex} it holds for any $u\in \mathcal{S}_{n,k}$,
\begin{equation}\label{eq:FTRL_reg}
    R_T(u)\leq R(u) - \min_{v\in \mathcal{S}_{n,k}} R(v) + \eta T.
\end{equation}
Note that $R$ is the negative entropy multiplied with the septsize $\eta$, \textit{i.e.},$R(u) = -\frac{1}{\eta} H(u)$ hence 
\begin{equation*}
     -\min_{v\in \mathcal{S}_{n,k}} R(v) = -\min_{v\in \mathcal{S}_{n,k}} \left[ -\frac{1}{\eta} H(v)\right] = \max_{v\in \mathcal{S}_{n,k}} \frac{1}{\eta} H(v) = \frac{1}{\eta}\log(n).
\end{equation*}
The last equality holds since the entropy is maximized by the uniform distribution.
We now go on to upper bound $R(u)$, which corresponds to finding $u\in \mathcal{S}_{n,k}$ with minimum entropy. 
The minimal entropy of $u\in \mathcal{S}_{n,k}$ is achieved by 
\begin{equation*}
    u^* = (\;\underbrace{1/k,\ldots,1/k}_{k}\;,\; \underbrace{0, \ldots, 0}_{n-k}\;)^T
\end{equation*}
and $H(u^*)=-\log(k)$. Plugging these two terms into \eqref{eq:FTRL_reg} we obtain
\begin{equation*}
    R_T(u)\leq \frac{1}{\eta}\log\left(\frac{n}{k}\right) + \eta T.
\end{equation*}
setting $\eta = \sqrt{\frac{\log(n/k)}{T}}$ concludes the proof.
\end{proof}

\subsection{S-AFL}
We set the number of iterations as $T=\frac{N}{2k}$ as each iteration requires sampling $k$ datapoints for the $w$-player and the $p$-player respectively.
In order to compute the theoretical stepsizes following \citep[Theorem 2]{mohri2019agnostic}, we need to specify the function $f$ to be learned as they depend on the $L$-smoothness of $f$. 
We give an overview to the derived quantities in Table \ref{tab:AFL}.
First, recall the theoretical stepzsizes from \citep{mohri2019agnostic},
\begin{align*}
    \eta_w= \frac{2R_\mathcal{W}}{\sqrt{T(\sigma^2_w + G^2_w)}},\quad  \eta_\lambda = \frac{2R_\Lambda}{\sqrt{T(\sigma_\lambda^2 + G_\lambda^2)}},
\end{align*}
since $L_i(w)\leq 1$ by assumption.
We write $\delta$ for the gradient estimate and $\nabla$ for the true gradient.
Further we write the define $g(w,p)=\langle L(w);p\rangle$.
\[\Vert \nabla_p g(w,p)\Vert^2  = \Vert L(w)\Vert^2= \sum_{i=1}^n L_i(w)^2\leq n.\]
We use the \textit{Weighted Stochastic Gradient} approach, hence  $\sigma_w^2=\sigma_I^2 + \sigma_O^2$. 
Since each subgroup consists of only one data point, we have $\sigma_I=0$.
Now we bound the $\sigma_w^2=\sigma_O^2$,
\[\sigma_w^2=\sum_{i=1}^n\lambda_i\left\Vert\delta_w L_i(w) - \sum_{j=1}^n\lambda_j\nabla_w L_j(w)\right\Vert^2\leq \sum_{i=1}^n\lambda_i\left(\Vert\delta_w L_i(w)\Vert + \Vert\sum_{j=1}^n\lambda_j\nabla_w L_j(w)\Vert\right)^2\leq (2G_w)^2.\]
\begin{table}[h]
\centering
\begin{tabular}{@{}lcccccc@{}}
\toprule
         & $R_\mathcal{W}$ & $R_\Lambda$ & $\sigma_\lambda^2$ & $\sigma_w^2$ & $G_\lambda^2$ & $G_w^2$ \\ \midrule
AFL (MSE) &   $B$    & 1 &           $n^2/k$ &     $16 (B+1)^2$       &  $n$      &    $4(B+1)^2$    \\
AFL (CEL) &   $B$    &  1     &     $n^2/k$        &     $8 (d^2 +1)$       &   $n$     &     $2d^2+2$   \\ \bottomrule
\end{tabular}
\caption{Relevant quantities to compute stepsizes for Stochastic AFL variables for linear regression (MSE) and logistic regression (CEL)}
\label{tab:AFL}
\end{table}
\paragraph{Classification.}
Let us consider logistic softmax regression with CEL. 
The classifier is defined as 
\[
f(w,x) = \sigma(\Theta x +b), \quad \Theta \in \mathbb{R}^{C\times d},\;b\in \mathbb{R}^C,\;x\in \mathbb{R}^d,\;y \in \mathbb{R}^C 
\]
where $\sigma$ is the softmax function, $C$ is the number of classes and $d$ is the feature dimension.
First note
\[
\nabla_w g(w,p) = \sum_{i=1}^n p_i \nabla_w L_i(w), \quad L_i(w) = \text{CEL}\big(f(w,x_i),y_i\big).
\]
In order to derive an upper bound to $\nabla_w g(w,p)$ we choose weight $p_i=1$ for the largest single derivate which reduces the problem to upper bounding $\nabla_w L_i(w)$. We can write this derivate as
\begin{align*}
    \frac{\partial L(w)}{\partial \Theta_{i,j}} = \underbrace{x_i}_{\in [0,1] \text{(by def.)}} \underbrace{(f_j(x) -y_j)}_{\in[-1,1]}, \quad \frac{\partial L(w)}{\partial b_{j}} = (f_j(x) -y_j).
\end{align*}
Using these identities, we obtain the following upper bound,
\begin{align*}
    G_w^2\leq \Vert\nabla_{\Theta,b}L(\Theta,b)\Vert_F^2 = \sum_{i=1}^C(d(f_i-y_i))^2 + \sum_{i=1}^C(f_i-y_i)^2
    \leq 2d^2+2.
\end{align*}

\paragraph{Regression.}
We now repeat the analysis for linear regression with MSE.
The regressor is defined as
\[
f(w,x) = \theta^Tx+b, \quad x\in [0,1]^{d},\;y \in [0,1],\;\theta\in \mathbb{R}^d,\;b\in \mathbb{R}.
\]
This leads to the following quantities
\begin{align*}
    \nabla_w g(w,p) &= \sum_{i=1}^n p_i \nabla_w L_i(w), &L_i(w) &= \text{MSE}\big(f(w,x_i),y_i\big) = \Vert f(w,x_i)-y_i \Vert_2^2,\\
    \nabla_w L_i(w) &= 2(f(x_i)-y_i)x, & \nabla_b L(w,x_i) &= 2(f(x_i)-y_i).
\end{align*}
We can now find an upper bound for $G_w^2$,
\begin{align*}
    G_w^2 = \Vert2(f(x)-y)x \Vert_2^2 = \sum_i^n 4(f(x)-y)^2x_i^2 \leq 4 \left(\sum_i^n w_i+b-y\right)^2\leq 4(B+1)^2.
\end{align*}
The last inequality follows from setting $y=1$, $b=B$ and $w=0$.

\end{document}